\documentclass{article}
\PassOptionsToPackage{numbers, sort, compress}{natbib}

\usepackage[preprint]{neurips_2026}
\usepackage[T1]{fontenc}
\usepackage{amsmath}
\usepackage{amssymb}
\usepackage{amsthm}
\usepackage{booktabs}
\usepackage{microtype}
\usepackage[table]{xcolor}
\usepackage{graphicx}
\usepackage{multirow}
\usepackage{enumitem}
\usepackage{wrapfig}
\usepackage{dsfont}
\usepackage{subcaption}
\usepackage{adjustbox}

\newtheorem{theorem}{Theorem}
\newtheorem{proposition}[theorem]{Proposition}
\newtheorem{remark}[theorem]{Remark}
\newtheorem{lemma}[theorem]{Lemma}
\newtheorem{definition}[theorem]{Definition}

\newtheorem{convention}[theorem]{Convention}
\newtheorem{corollary}[theorem]{Corollary}

\usepackage{mathtools}
\usepackage{xspace}
\usepackage{tikz}
\usetikzlibrary{arrows.meta,positioning,calc,backgrounds}

\usepackage{hyperref}
\hypersetup{
    colorlinks,
    linkcolor={blue!80!black},
    citecolor={blue!80!black},
    urlcolor={blue!80!black}
}

\newcommand*{\R}{\mathbb{R}}

\newcommand*{\abs}[1]{\left \lvert #1 \right \rvert }
\newcommand*{\norm}[1]{\left \lVert #1 \right \rVert}

\newcommand{\sigTPP}{\textsc{SigTPP}}
\newcommand{\determodel}{\textsc{Deter}}
\newcommand{\gammamodel}{\textsc{Gamma}}

\newcommand{\histintloss}{L_{\lambda}}
\newcommand{\histITloss}{L_{\log(\tau)}}
\newcommand{\ACD}{\mathrm{ACD}}
\newcommand{\PCD}{\mathrm{PCD}}
\newcommand{\sigW}{\mathrm{Sig\text{-}W}_1}
\newcommand{\WAS}{\mathrm{W}_1}
\newcommand{\MAEmed}{MAE\textsubscript{med}}
\newcommand{\MSEmean}{MSE\textsubscript{mean}}
\newcommand{\E}{\mathbb{E}}
\newcommand{\mes}{\mathop{}\!\mathrm{d}}
\newcommand\given[1][]{\,#1\vert\,} % for conditional probability

\makeatletter
\g@addto@macro\normalsize{%
    \setlength{\abovedisplayskip}{2.5pt}
    \setlength{\belowdisplayskip}{2.5pt}
    \setlength{\abovedisplayshortskip}{2.5pt}
    \setlength{\belowdisplayshortskip}{2.5pt}
}
\makeatother

\title{From Jumps to Signatures: a Generative Method for Temporal Point Processes}

\author{%
    Niels Cariou-Kotlarek {\,\normalfont and\,} Vasileios Lampos \\
    Centre for Artificial Intelligence\\
    Department of Computer Science\\
    University College London\\
    \
    \{\texttt{niels.kotlarek.23}, \texttt{v.lampos}\}\texttt{@ucl.ac.uk}
}

\newcommand{\Tmax}{T_{\max}}
\newcommand{\dtppsym}{d_{\mathcal{N}}}
\newcommand{\dtpp}[2]{\dtppsym\!\left(#1,#2\right)}
\DeclareMathOperator{\11charac}{\mathds{1}}
\newcommand{\clag}{c\`{a}dl\`{a}g\xspace}
\newcommand{\holder}{H\"{o}lder\xspace}

\newcommand{\DatasetDiagnosticsFigure}[5]{%
  \begin{figure}[!htbp]
      \centering
      \begin{subfigure}[b]{0.49\textwidth}
          \vspace{0pt}
          \centering
          \includegraphics[width=0.90\textwidth]{images/datasets/#3/#4_sample_paths.pdf}
          \par\vspace{-4pt}
          \makebox[\textwidth][c]{\small(a)}
      \end{subfigure}
      \hfill
      \begin{subfigure}[b]{0.49\textwidth}
          \vspace{0pt}
          \centering
          \includegraphics[width=0.84\textwidth]{images/datasets/#3/#4_intensity_pdf.pdf}
          \par\vspace{-4pt}
          \makebox[\textwidth][c]{\small(b)}
      \end{subfigure}
      \par\vspace{4pt}
      \begin{subfigure}[b]{0.49\textwidth}
          \vspace{0pt}
          \centering
          \includegraphics[width=0.90\textwidth]{images/datasets/#3/#4_acf.pdf}
          \par\vspace{-4pt}
          \makebox[\textwidth][c]{\small(c)}
      \end{subfigure}
      \hfill
      \begin{subfigure}[b]{0.49\textwidth}
          \vspace{0pt}
          \centering
          \includegraphics[width=0.78\textwidth]{images/datasets/#3/#4_correlation.pdf}
          \par\vspace{-4pt}
          \makebox[\textwidth][c]{\small(d)}
      \end{subfigure}

      \caption{#5}
      \label{fig::#4_diagnostics}
  \end{figure}
}

\begin{document}

\maketitle
\begin{abstract}
Rough path signatures are a universal feature map for continuous paths and, via the expected signature, characterise path distributions. These guarantees do not directly extend to \clag paths of Temporal Point Processes (TPPs), limiting the use of signature methods for event sequences. 
Furthermore, neural TPP models, including recent generative approaches, optimise per-event objectives with no global sequence-level loss, while evaluation of variable-length event sequences lacks distributional discrepancy measures.
This paper proposes a common pathwise framework for addressing these limitations.
We introduce the interarrival embedding, a stable, injective lift from jump paths to continuous paths of bounded variation, extending signature methods to discrete event sequences. 
Our theoretical contributions give rise to \sigTPP, the first signature-based generative model for TPPs, trained using a path-level loss on complete trajectories.
We further analyse the space of counting paths and derive 3 distributional discrepancies, providing mathematically justified tools for evaluating generative TPP models.
Across synthetic and real-world datasets, \sigTPP~achieves the best average rank based on 8 complementary metrics, outperforms or is within a standard error of the strongest baseline in $64\%$ of the dataset-metric pairs, and according to a relative score, improves against every baseline by at least $19\%$ on average.\\
\textbf{Source code and datasets:} \href{https://github.com/Code-Cornelius/sigtpp}{\texttt{github.com/Code-Cornelius/sigtpp}}
\end{abstract}

\section{Introduction}
\label{sect::intro}

Temporal Point Processes (TPPs) model discrete events occurring in continuous time~\cite{hawkes1971,daley2003introduction}. 
They underpin applications across finance~\cite{bacry2015hawkes}, seismology~\cite{dascher_cousineau_using_2023}, information diffusion~\cite{hua_personalized_2022}, and health~\cite{vassoy_time_2019, gajardo_point_2023}.
Neural TPPs (NTPPs) parametrise the conditional or interarrival density with expressive neural networks~\cite{original_2016, mei2016neural, Omi_paper, zhang2020self}, including continuous-time state-space architectures~\cite{s2p2}, and are often trained via log-likelihood maximisation.
Yet, likelihood-based training introduces fundamental constraints~\cite{shchur2021neural}. It either restricts models to tractable likelihood families~\cite{shchur2020fast} or relies on expensive numerical approximations~\cite{mei2016neural, Omi_paper}, limiting both expressiveness and scalability.
This has led to a growing interest in alternative generative paradigms, such as diffusion-based methods~\cite{yuan2023spatio, psdiff}, normalising flows~\cite{shchur2020fast}, adversarial training~\cite{xiao2017wasserstein}, and reinforcement learning~\cite{li2018learning}, with frameworks~\cite{lin2022exploring} unifying several of these perspectives. 
At the same time, the signature transform from rough path theory~\cite{lyons2007differential, friz2010multidimensional} has become central to time series modelling~\cite{chevyrev2016primer, kidger2019deep, cuchiero_universal_2025, perez_arribas_signature-based_2018}. Acting as a universal feature map, it encodes pathwise information into a tensor series up to tree-like equivalence~\cite{hambly2010uniqueness, lyons2007differential}, while the expected signature becomes a moment generating function on path space~\cite{LyonsChevyrev2013, chevyrev2018signature}, supporting generative models such as the Signature-Wasserstein GAN~\cite{NiEtAl2020, LiaoEtAl2024}.

Despite this activity, the current landscape has significant structural gaps.
The prevailing training objectives are local as they decompose into a sum of per-event losses, each comparing one event to the ground truth conditional on preceding events.
This is also true for recent non-autoregressive models as they fit reverse denoising transitions~\cite{addthin}, transport fields~\cite{eventflow}, or edit rates at sampled intermediate states~\cite{editflow}, rather than directly minimising a single global discrepancy between generated and empirical trajectories.
Consequently, global structural properties, such as autocorrelation, long-range inter-event dependencies, and joint timing distributions, receive no direct loss signal.
Furthermore, the evaluation methods for generative TPP models are underdeveloped~\cite{mukherjee_neural_2025, shchur2021neural}. 
Negative log-likelihood (NLL) measures local conditional fit and is insensitive to global sequence structure. Low NLL does not preclude mode collapse, misspecified tails, or distorted inter-event dependencies~\cite{lin2021empirical, lin2022exploring, mukherjee_neural_2025}. 1-step-ahead pointwise metrics (e.g. mean absolute error) share this limitation and additionally favour underdispersed predictions~\cite{okawa_deep_2019, okawa_context-aware_2022, imitationlearning, zhou_automatic_2023, zhou2022neural}.
A practical shift towards sequence-level distributional losses, such as the maximum mean discrepancy (MMD) with a Gaussian kernel~\cite{shchur2020fast, psdiff} and the Wasserstein-1 distance~\cite{addthin, editflow}, has emerged, yet whether these losses define meaningful distances on the space of counting paths has not been theoretically established.
Signatures would be a natural candidate to address the aforementioned shortcomings, as they handle naturally variable-length sequences. However, the determinacy and universal nonlinearity that underpin signature-based generative modelling~\cite{LyonsChevyrev2013, hambly2010uniqueness, chevyrev2018signature} are formulated for continuous paths of bounded variation. TPP sample paths are \clag\ step functions, on which these guarantees fail. 

This paper attempts to address these limitations with the following contributions:
\begin{enumerate}[leftmargin=*, topsep=-2pt, itemsep=-1.6pt]
\item We introduce the interarrival embedding, a stable, injective lift to continuous paths of bounded-variation, enabling signature methods on jump processes (Section~\ref{subsect::phi}).
\item Building on this embedding approach, we propose \sigTPP, a signature-based generative model for TPPs that replaces per-event objectives with a global signature loss on complete counting paths (Section~\ref{subsect::sigtpp}).
\item We provide 3 theoretically grounded discrepancies on counting-path distributions, namely the energy ($\mathcal{E}$), Wasserstein-1 ($\WAS$), and signature-Wasserstein-1 distances ($\sigW$), for evaluating generative TPP models (Section~\ref{subsect::evaluation}). We also showcase the utility of other diagnostic metrics for NTPP evaluation, including moment and histogram-based summaries.
\item Using synthetic and real-world datasets, we demonstrate that \sigTPP~is a strong generative model for TPPs. It achieves the best relative score on 7 out of the 8 metrics. \sigTPP~achieves the best performance or is within a standard error of the strongest baseline in $64\%$ of all metric-dataset pairs (Section~\ref{sec::experiments}).
\end{enumerate}

\section{Preliminaries}
\label{sect::preliminaries}
\textbf{Temporal Point Processes{\normalfont~\cite{daley2003introduction}}.} Fix a finite horizon $\Tmax\in\R_{>0}$.
A realisation of a TPP on $[0,\Tmax)$ is a finite sequence of strictly increasing event times, $0<t_1<\cdots<t_m<\Tmax$.
To define a metric on event sequences and study its topological properties, we work with the equivalent counting-path representation.
Let $\mathcal{N}$ denote the set of nondecreasing, \clag functions $\eta \colon [0,\Tmax] \to \mathbb{N}$ with $\eta_0=0$, $\eta_{\Tmax}<\infty$, unit jumps only, and no jump at $\Tmax$. $\eta$ and $(t_k)_{k=1}^m$ are related by $\eta_t=\#\{k:t_k \leq t\}$.
A TPP is an $\mathcal N$-valued random variable $N$ on a probability space $(\Omega,\mathcal{F},\mathbb{P})$. The conditional intensity $\lambda(t \mid \mathcal{H}_t) := \lim_{\Delta t \downarrow 0} \Delta t^{-1}\,\E\bigl[N_{t+\Delta t}-N_t \given \mathcal{H}_t\bigr]$, where $\mathcal{H}_t := \sigma(N_s : s < t)$, characterises the instantaneous rate of events given history.

We adopt the following boundary convention throughout, since the definition of $\Phi$ (Definition~\ref{def::phi_embedding}) and the stability theorem (Theorem~\ref{thrm::stability}) rely on the same augmented event grid.
\begin{convention}[Boundary Convention]
\label{conv::boundary}
For $\eta\in\mathcal{N}$ with event times $0<t_1<\cdots<t_m<\Tmax$, we augment the sequence by setting $t_0:=0$ and $t_{m+1}:=\Tmax$, and define the corresponding interarrival times $\tau_0:= 0$, $\tau_k:=t_k-t_{k-1}$ where $k=1,\dots,m$, and $\tau_{m+1}:=\Tmax-t_m$.
\end{convention}

\textbf{Rough Path Signatures{\normalfont~\cite{lyons2007differential, friz2010multidimensional}}.}
The signature transform provides an injective representation, up to tree-like equivalence, of continuous paths of finite variation into the tensor algebra, $T((\mathbb{R}^d))$. 
We restrict attention to continuous paths of bounded variation (as opposed to finite $p$-variation).
\begin{definition}[Signature]
\label{def::sig}
Let $X : [0,\Tmax] \to \R^d$ be a continuous path of bounded variation. The signature of $X$, denoted by $S(X)$, is defined as the infinite sequence of iterated integrals
\begin{equation}
    S(X) = (1, S(X)^1, S(X)^2, \dots), \quad \text{where} \quad
    S(X)^k = \int_{0<u_1<\cdots<u_k<\Tmax} \mes X_{u_1} \otimes \cdots \otimes \mes X_{u_k} \, .
\end{equation}
$S^M(X)$ denotes the truncated signature of $X$ of degree $M$, i.e. $S^M(X) = (1, S(X)^1,\dots,S(X)^M).$
\end{definition}
\begin{theorem}[Determinacy~\cite{LyonsChevyrev2013}]
\label{thrm::determinacy_sig}
    Let $X$ and $Y$ two continuous random paths of bounded variation. If $\mathbb{E}[S(X)] = \mathbb{E}[S(Y)]$ and $\mathbb{E}[S(X)]$ has infinite radius of convergence, then $X = Y$ in distribution.
\end{theorem}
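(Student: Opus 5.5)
The plan is to follow the characteristic-function strategy of Chevyrev and Lyons. The expected signature is treated as the moment sequence of the group-valued random variable $S(X)$, and equality of moments is upgraded to equality in law. One caveat comes first. By the uniqueness theorem of Hambly and Lyons, $S$ is injective only modulo tree-like equivalence. So ``$X=Y$ in distribution'' is understood as equality of the laws of the tree-reduced paths, or equivalently of the laws of $S(X)$ and $S(Y)$ on the group $G$ of group-like elements of $T((\R^d))$. For paths that carry a strictly monotone coordinate, tree-like equivalence is trivial and this is literal equality in law. It therefore suffices to prove that $\mathbb{E}[S(X)]$ determines the law of $S(X)$ on $G$.

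\textbf{Step 1: algebraic structure.} For every linear functional $\ell$ on the truncated tensor algebra, consider $f_\ell(g)=\langle \ell,g\rangle$. The shuffle identity $\langle \ell_1,S\rangle\langle \ell_2,S\rangle=\langle \ell_1 ⧢ \ell_2,S\rangle$ shows that these functions form a unital algebra $\mathcal{A}$ on $G$, and by definition $\mathcal{A}$ separates points of $G$. By linearity, $\mathbb{E}[f(S(X))]=\mathbb{E}[f(S(Y))]$ for every $f\in\mathcal{A}$. However, $\mathcal{A}$ consists of unbounded functions, so Stone--Weierstrass does not apply directly. This is exactly where the radius-of-convergence hypothesis enters.

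\textbf{Step 2: characteristic function.} For a finite-dimensional unitary matrix representation, take $M\colon\R^d\to\mathfrak{u}(n)$ linear, extend it multiplicatively to $T((\R^d))$, and set $\widetilde M(g)=\sum_{k}M^{\otimes k}(g^k)$. Then $\widetilde M(S(x))$ is the solution at time $\Tmax$ of the linear ODE $\mes Z=Z\,M(\mes x)$, so it is a bounded, unitary-valued function of the path. Define $\Phi_X(M)=\mathbb{E}[\widetilde M(S(X))]$. Next, I would show that
\begin{equation*}
\Phi_X(M)=\sum_{k\ge0}M^{\otimes k}\bigl(\mathbb{E}[S(X)^k]\bigr).
\end{equation*}
The bound $\|M^{\otimes k}\|\le\|M\|^k$ holds for the projective norms, and the infinite radius of convergence gives $\sum_k\|M\|^k\|\mathbb{E}[S(X)^k]\|<\infty$ for every $M$. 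The exchange of expectation and summation is then justified by Fubini, after checking absolute convergence at the level of $\mathbb{E}\|S(X)^k\|$, or by the more careful symmetrisation argument of Chevyrev--Lyons. Hence $\Phi_X=\Phi_Y$ for every $M$ and every $n$.

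\textbf{Step 3: characteristic functions determine the law.} Finally, I would show that the linear span of the matrix coefficients $g\mapsto\langle \widetilde M(g)u,v\rangle$ is a self-adjoint algebra. Tensor products and direct sums of representations stay in the class, and the adjoint corresponds to the conjugate representation. This algebra of bounded continuous functions separates points of $G$, with $G$ taken in a suitable topology making $S(X)$ tightly distributed. Applying Stone--Weierstrass on compacts together with tightness, or the Peter--Weyl-type separation theorem, then yields equality of laws. The main obstacle is this step, together with the Fubini justification in Step 2. Point separation by unitary representations relies on the nontrivial fact that the free nilpotent (and free) groups are residually separated by finite-dimensional unitary representations. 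I would try first to prove separation level by level on the truncated groups $G^{(M)}$, which are simply connected nilpotent Lie groups, and then pass to the projective limit.
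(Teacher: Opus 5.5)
Your overall route, the characteristic-function argument of Chevyrev--Lyons, is the right one; the paper itself only cites that result without proof. Your caveat about tree-like equivalence is also well taken. However, the step you call the main obstacle would fail as planned, and Step 2 is missing its key estimate.

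\textbf{Step 3: separation cannot be done on the truncated groups.} Let $\pi$ be a continuous homomorphism from a connected nilpotent Lie group into $U(n)$. The closure of its image is a compact connected nilpotent Lie group, hence a torus, so $\pi$ is trivial on the commutator subgroup. Finite-dimensional unitary representations of $G^{(M)}$ therefore see only the level-one increment. For example, two elements of $G^{(2)}$ with equal increments and different L\'evy areas are never separated. So the residual-separation property you invoke is false for these Lie groups.

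The maps $\widetilde M$ work precisely because they do not factor through any truncation. Separation must be proved on the untruncated group by a dilation argument:
\begin{itemize}
\item Let $g\neq h$ be group-like with infinite radius of convergence. Every signature of a bounded-variation path qualifies, since $\norm{S(x)^k}\le\norm{x}_{1\text{-var}}^k/k!$.
\item Let $k$ be the lowest level with $g^k\neq h^k$, and choose $M\colon\R^d\to\mathfrak{u}(n)$ with $M^{\otimes k}(g^k-h^k)\neq0$.
\item Such $M$ exists once $2n>k$. Since $\mathfrak{u}(n)$ spans $\mathbb{C}^{n\times n}$ over $\mathbb{C}$, a nonzero degree-$k$ noncommutative polynomial vanishing on all $\mathfrak{u}(n)$-valued arguments would be a polynomial identity of $\mathbb{C}^{n\times n}$. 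There is none of degree below $2n$.
\item Then $\widetilde{\lambda M}(g)-\widetilde{\lambda M}(h)=\lambda^kM^{\otimes k}(g^k-h^k)+O(\lambda^{k+1})\neq0$ for small $\lambda>0$.
\end{itemize}
The measure-determining step also needs the matrix coefficients to be continuous. This fails for the product topology on $T((\R^d))$, which does not control the tail of the series defining $\widetilde M$. It holds in the Fr\'echet space of tensor series with $\sum_k\lambda^k\norm{g^k}<\infty$ for all $\lambda>0$. That space is Polish, so the laws are automatically tight.

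\textbf{Step 2: the interchange of sum and expectation is unjustified.} The hypothesis controls $\norm{\E[S(X)^k]}$, not $\E\norm{S(X)^k}$. Checking absolute convergence at the level of $\E\norm{S(X)^k}$ is therefore the whole content of the step, and you do not say how to do it. The missing idea is an analogue of ``even moments control absolute moments''. It comes from the shuffle identity you already use in Step 1: for the Hilbert--Schmidt norm,
\begin{equation*}
\norm{S(X)^k}_2^2=\sum_{\abs{I}=k}\langle e_I,S(X)\rangle^2=\Bigl\langle\sum_{\abs{I}=k}e_I\sqcup\!\sqcup\,e_I,\;S(X)^{2k}\Bigr\rangle .
\end{equation*}
The right-hand side is a linear functional of level $2k$ with norm at most $d^k\binom{2k}{k}\le(4d)^k$. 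Hence $\E\norm{S(X)^k}_2\le(4d)^{k/2}\norm{\E[S(X)^{2k}]}_2^{1/2}$, so $\limsup_k(\E\norm{S(X)^k})^{1/k}=0$. It follows that $\sum_k\lambda^k\E\norm{S(X)^k}<\infty$ for every $\lambda$, which is exactly what Fubini requires. Switching between projective and Hilbert--Schmidt norms only costs factors $C^k$, which do not affect this.

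\textbf{On your caveat.} A strictly monotone coordinate removes tree-like ambiguity but not translation or reparametrisation, to which the signature is blind. Literal equality in law needs that coordinate to be time itself and a fixed starting point. This is the case for $t\mapsto(t,\Phi(\eta)_t)$ with $\Phi(\eta)_0=0$.
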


\section{Signature-based Generative TPPs (\sigTPP)}
\label{sect::methods}

The signature transform is defined on continuous paths of bounded variation, whereas counting paths in $\mathcal{N}$ are \clag\ step functions. Our approach to addressing this rests on two components. First, we introduce a deterministic embedding $\Phi$ into the space $ \mathcal{C}:=C([0, \Tmax], \R)$ of continuous functions on $[0,\Tmax]$. $\Phi$ maps a discrete event sequence to a continuous, piecewise-linear path on which the signature transform applies (Section~\ref{subsect::phi}). 
Then, we train the proposed \sigTPP\ on the Euclidean distance between the truncated expected signatures of the embedded model-generated and observed counting paths (Section~\ref{subsect::sigtpp}). We also provide theoretical guarantees ensuring the signature loss reflects genuine discrepancies between counting-path distributions (Section~\ref{subsect::theory}), and we propose principled evaluation metrics for generative TPPs (Section~\ref{subsect::evaluation}).

\subsection{The Interarrival Embedding \texorpdfstring{$\Phi$}{Phi}}
\label{subsect::phi}
\begin{definition}[Interarrival Embedding]\label{def::phi_embedding}
For $\eta \in \mathcal N$ with $m$ events, let $0 =: t_0 < t_1 < \cdots < t_m < t_{m+1} := \Tmax$ be the augmented grid and $\tau_k := t_k - t_{k-1}$ the interarrival times. 
The interarrival embedding is the continuous, bounded-variation, piecewise-linear function $x := \Phi(\eta) \in \mathcal C$ satisfying
\begin{equation}
\label{eq::phi_embedding}
\begin{aligned}
    x(t_k) &= \tau_k 
    \quad \text{ for } k=0,\dots,m{+}1 \, , \text{ and} \\
    \Phi(\eta)_t &\;=\; \tau_k 
    + \frac{t - t_k}{t_{k+1} - t_k}
    \bigl(\tau_{k+1} - \tau_k\bigr) \, ,
    \quad \text{where } t\in[t_k,t_{k+1}],\ k=0,\dots,m  \, .
\end{aligned}
\end{equation}
\end{definition}
\begin{wrapfigure}{r}{0.41\linewidth}%%% can be 0.36
\centering
\includegraphics[width=\linewidth]{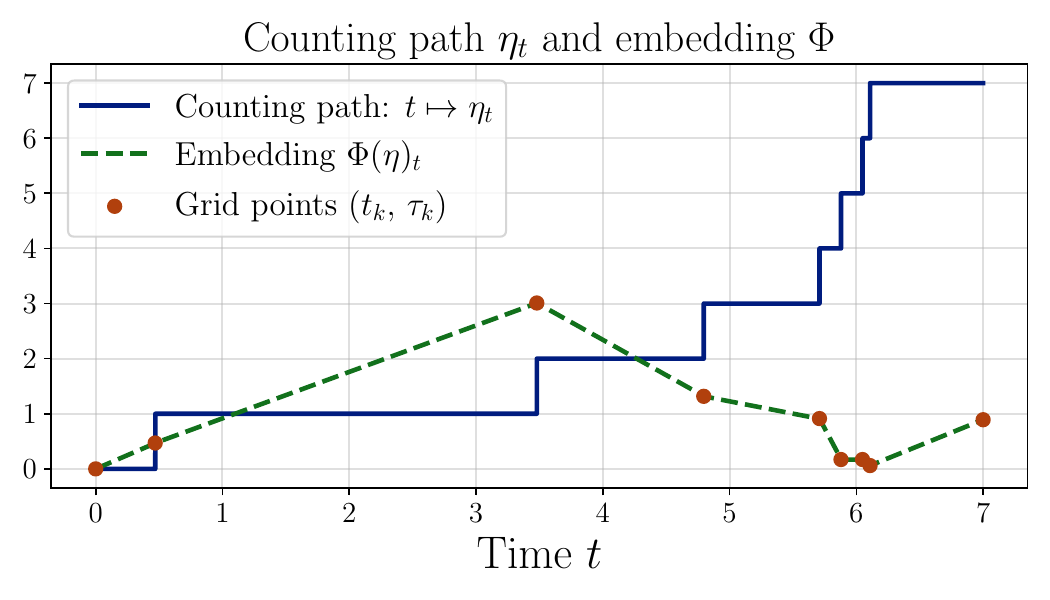}
\caption{\small Illustration of the interarrival embedding $\Phi$ (Definition~\ref{def::phi_embedding}) for a sampled Poisson process with $\lambda=1$ and $\Tmax = 7$.}
\label{fig::embedding_illus}
\end{wrapfigure}
Equivalently, $\Phi(\eta)$ is the unique continuous function that is affine on every interarrival time and satisfies $\Phi(\eta)_{t_k} \;=\; \tau_k$, $k\le m+1$.
Figure~\ref{fig::embedding_illus} illustrates this on a Poisson sample: the counting process (blue, \clag) records cumulative event counts, the marks (orange) record the duration of the most recent interarrival interval, and $\Phi(\eta)$ (green dashed line) is the piecewise-linear interpolant of this staircase on the augmented event grid. The embedded path is continuous and of bounded variation, satisfying the assumptions required by the signature transform, see Definition~\ref{def::sig}. 
In practice, we apply the signature transform not to $\Phi(\eta)$ alone but to the time-augmented curve $t \mapsto (t,\Phi(\eta)_t)$ in $\R^2$. 
The time augmentation encodes both absolute event times and interarrival durations, the two standard input features of NTPPs~\cite{original_2016, lin2022exploring}, in a single curve. Theoretically, it removes tree-like components from the path, so that the signature is injective~\cite{hambly2010uniqueness}. 
Note that other interpolation schemes are possible (e.g. step function), but linear interpolation is the standard choice in signature-based methods~\cite{chevyrev2016primer}, and it is the simplest scheme yielding continuous paths. The stability of the map $\Phi$ and a constructive inverse $\Psi := \Phi^{-1}$ are established in Theorem~\ref{thrm::stability}.

\subsection{Model Optimisation}
\label{subsect::sigtpp}

Our goal is to train a parametric generator $\mu_\theta$ whose output distribution faithfully reproduces the unknown data distribution $\nu$, without specifying a parametric intensity model.
A natural loss between distributions $\mu_\theta$ and $\nu$ is the Wasserstein-1 distance used in Wasserstein GAN models~\cite{xiao2017wasserstein}, expressible via Kantorovich-Rubinstein duality as
\begin{equation}
    \WAS(\mu_\theta, \nu) = \sup_{\norm{f}_{\mathrm{Lip}} \leq 1} \E_{x \sim \mu_\theta}\bigl[f(x)\bigr] - \E_{y \sim \nu}\bigl[f(y)\bigr] \,.
\end{equation}
Approximating the supremum over 1-Lipschitz functions often requires training a neural network discriminator with gradient penalties or weight clipping. This adversarial procedure can lead to training instability and convergence issues~\cite{arjovsky2017}, especially when modelling TPPs~\cite{xiao2017wasserstein}. Recent work on generative modelling of continuous paths proposes replacing the adversarial discriminator with a signature-based metric, denoted by $\sigW$~\cite{NiEtAl2020, LiaoEtAl2024}. By leveraging the universal nonlinearity signature property, any 1-Lipschitz test function can be approximated by a linear functional on the signature. Consequently, the optimisation over the infinite-dimensional function space reduces to a linear operation on the (truncated for numerical purposes) tensor algebra $T^M(\mathbb{R}^d)$.
Given that TPP sample paths are \clag step functions, we first lift them with the interarrival embedding~$\Phi$ and then apply the signature loss on the embedded path space. 
% This yields the following loss
Hence, the loss for optimising the proposed model, \sigTPP, is given by:
\begin{equation}
    \label{eq::sig_metric}
    \sigW^{(M)}(\mu_\theta,\nu)
    :=
    \norm{\E_{N_\theta \sim \mu_\theta}\bigl[S^M(\Phi(N_\theta))\bigr]
        - \E_{N \sim \nu}\bigl[S^M(\Phi(N))\bigr]}_{2} \, .
\end{equation}
Here $\norm{\cdot}_{2}$ denotes the $\ell^2$ norm on the truncated tensor algebra $T^M(\mathbb{R}^2)$.
The training procedure is summarised in Figure~\ref{fig::sigtpp_diagram}. The expected signature under the data measure $\nu$ is precomputed once on the training set; each gradient step requires only computing signatures under $\mu_\theta$.
\begin{figure}[t]
    \centering
    \resizebox{\linewidth}{!}{
    \begin{tikzpicture}[
        x=1cm, y=1cm,
        >=Stealth,
        every node/.style={font=\small},
        plotbox/.style={draw=none,
        inner sep=0pt},
        methodbox/.style={draw=black!40, fill=black!2, rounded corners=3pt,
        minimum height=0.48cm, inner sep=3.5pt,
        line width=0.35pt, font=\scriptsize},
        arrmain/.style={-{Stealth[length=3pt,width=2.5pt]},
        line width=0.5pt, black!40},
        backprop/.style={-{Stealth[length=3pt,width=2.5pt]},
        densely dashed, line width=0.5pt, red!50!black},
    ]

\definecolor{outlineblue}{RGB}{20,60,140}
\definecolor{fillblue}{RGB}{215,225,245}
\definecolor{midblue}{RGB}{120,145,190}

% ===================== MAIN ROW ==================================

\node[methodbox] (mutheta) at (0,0) {$N \sim \mu_\theta$};

\node[plotbox, right=0.35cm of mutheta] (pp)
    {\includegraphics[width=3.5cm, trim=10pt 10pt 10pt 10pt, clip]{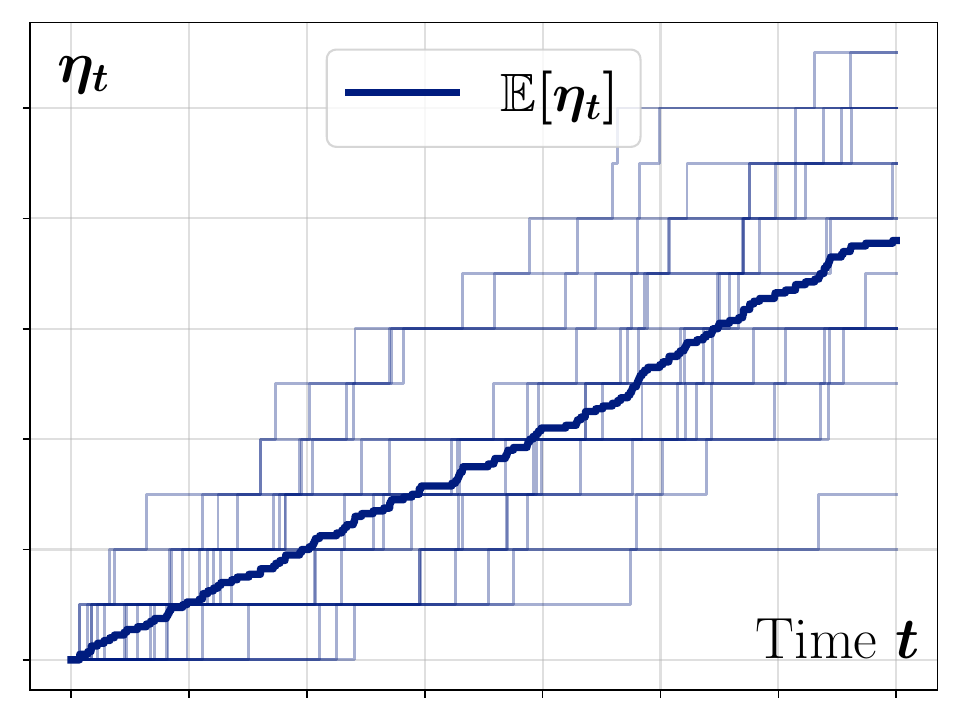}};
\draw[arrmain] (mutheta.east) -- (pp.west);

\node[methodbox, right=0.35cm of pp] (phi) {$\Phi$};
\draw[arrmain] (pp.east) -- (phi.west);

\node[plotbox, right=0.35cm of phi] (emb)
    {\includegraphics[width=3.5cm, trim=10pt 10pt 10pt 10pt, clip]{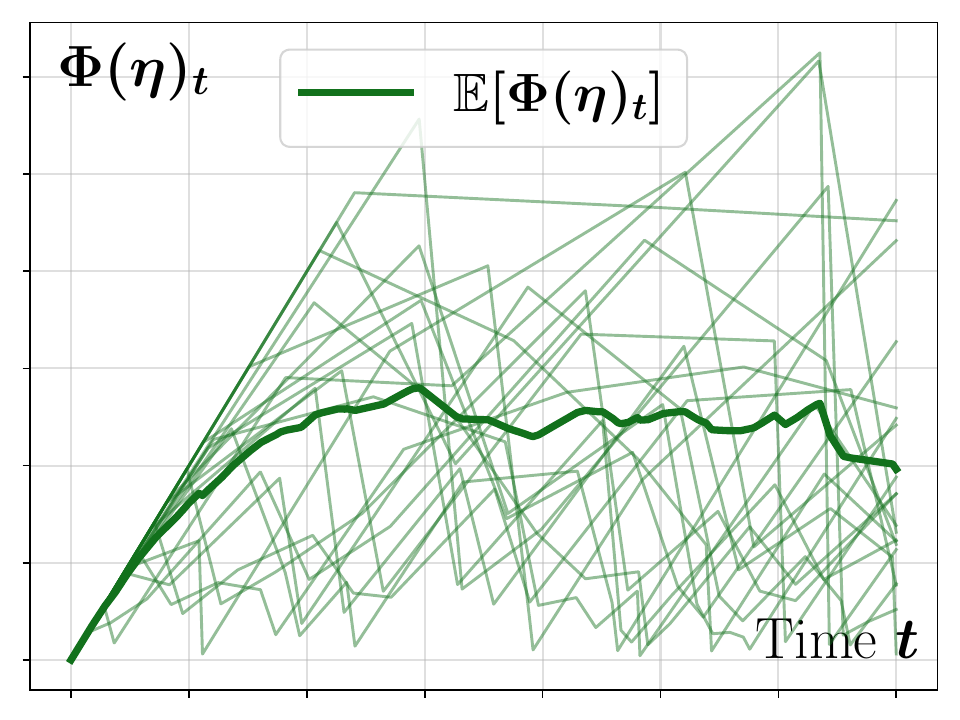}};
\draw[arrmain] (phi.east) -- (emb.west);

\node[methodbox, right=0.35cm of emb] (sig) {$\mathrm{S}^M$};
\draw[arrmain] (emb.east) -- (sig.west);

% Capsules
\begin{scope}[shift={($(sig.east)+(0.35,0)$)}, scale=1.45]
    \fill[fillblue!50, rounded corners=2.5pt]
    (0.18,-0.53) rectangle (0.66,0.87);
    \draw[outlineblue!25, dotted, line width=0.3pt, rounded corners=2.5pt]
    (0.18,-0.53) rectangle (0.66,0.87);
    \fill[outlineblue!40] (0.42, 0.65) circle (0.09);
    \fill[midblue!50]     (0.42, 0.44) circle (0.09);
    \fill[midblue!50]     (0.42,-0.32) circle (0.09);

    \fill[fillblue!75, rounded corners=2.5pt]
    (0.09,-0.60) rectangle (0.57,0.80);
    \draw[outlineblue!35, dotted, line width=0.3pt, rounded corners=2.5pt]
    (0.09,-0.60) rectangle (0.57,0.80);
    \fill[outlineblue!55] (0.33, 0.58) circle (0.09);
    \fill[midblue!65]     (0.33, 0.37) circle (0.09);
    \fill[midblue!65]     (0.33,-0.39) circle (0.09);

    \fill[fillblue, rounded corners=2.5pt]
    (0,-0.68) rectangle (0.48,0.72);
    \draw[outlineblue, dotted, line width=0.4pt, rounded corners=2.5pt]
    (0,-0.68) rectangle (0.48,0.72);
    \fill[outlineblue] (0.24, 0.50) circle (0.09);
    \fill[midblue]     (0.24, 0.29) circle (0.09);
    \foreach \yy in {0.04,-0.04,-0.12} {
        \fill[black] (0.24,\yy) circle (0.014);
    }
    \fill[midblue] (0.24,-0.47) circle (0.09);
    \node[right, font=\tiny, text=black] (slabel-one) at (0.65, 0.50) {$S_1$};
    \node[right, font=\tiny, text=black] (slabel-two) at (0.65, 0.29) {$S_2$};
    \node[right, font=\tiny, text=black] (slabel-M)   at (0.65,-0.47) {$S_M$};

    \coordinate (cap-left)  at (0, 0);
    \coordinate (cap-right) at (0.78, 0);
    \coordinate (cap-bot)   at (0.28, -0.68);
    \coordinate (cap-top)   at (0.33,  0.87);
\end{scope}

\draw[arrmain] (sig.east) -- (cap-left);

% ==================== E -> straight arrow -> loss label =========

\node[methodbox] (Ebox) at ($(cap-right)+(0.85,0)$) {$\E$};
\draw[arrmain] (cap-right) -- (Ebox.west);

\node[font=\footnotesize, text=black, anchor=west] (lossname)
    at ($(Ebox.east)+(0.35,0)$) {$\sigW^{(M)}(\mu_\theta,\nu)$};
\draw[arrmain] (Ebox.east) -- (lossname.west);

% ===================== CAPSULE LABEL (below) =====================

\node[below=3pt of cap-bot, font=\scriptsize, text=black!55]
(caplabel) {$T^{M}\!\bigl(\mathbb{R}^2\bigr)$};

% ===================== BACKPROPAGATION ===========================

\coordinate (bpbot) at ($(mutheta.south)+(0,-1.4)$);
\coordinate (bpbot-right) at (lossname.south |- bpbot);
\coordinate (bpbot-left)  at (mutheta.south |- bpbot);

\draw[backprop, rounded corners=6pt]
(lossname.south)
-- (bpbot-right)
-- (bpbot-left)
-- (mutheta.south);

\node[font=\footnotesize, text=red!42!black, fill=white, inner sep=1.5pt]
at ($(bpbot-right)!0.5!(bpbot-left)$)
    {$\nabla_\theta$\; backpropagation through $\mu_\theta$};

    \end{tikzpicture}
    }
    \caption{\sigTPP's optimisation. A counting path sampled from $\mu_\theta$ is embedded with $\Phi$ to a continuous path, mapped to its truncated signature in $T^M(\mathbb{R}^2)$, and summarised by its expectation under the model. The loss ($\sigW$) compares this against the expected signature of the embedded targets' paths, opposed to the usual per-event loss. Loss gradient flows back through $\mu_\theta$.}
    \label{fig::sigtpp_diagram}
\end{figure}

\sigTPP~is a 1-step autoregressive generator that follows the standard NTPP design~\cite{lin2022exploring}. A single-layer LSTM encodes the history of tunable hidden size ($H$) with $h_0$. Each log-interarrival time is first mapped to a 64-dimensional learnable trigonometric embedding and processed recurrently to produce the history state $h_i$. The decoder is a shallow perceptron and predicts $\log \tau_{i+1}$ from the recurrent state $h_i$, the current cumulative time and an exogenous noise variable $\varepsilon \sim \log(\mathrm{Exp}(1))$. 

\subsection{Theoretical Guarantees}
\label{subsect::theory}
To analyse the stability of the embedding $\Phi$ and to quantify discrepancies between counting-path distributions, we propose to work with the following metric.
\begin{definition}[Counting Path Metric]
\label{def::tppdistance}
We define the pathwise TPP metric between two counting processes $\eta,\xi\in\mathcal{N}$ on $[0,\Tmax)$ by $\dtpp{\eta}{\xi} := \int_{0}^{\Tmax} \abs{\eta_t-\xi_t}\,\mes t$.
\end{definition}
Given that counting paths are \clag, distinct event sets $\eta\neq\xi$ yield paths that differ on an interval of positive Lebesgue measure. Hence, $\dtppsym(\eta,\xi)=0$ implies $\eta=\xi$, and $\dtppsym$ is a genuine metric on~$\mathcal N$ (see also Proposition~\ref{prop::negtypeinheritance}). 
As shown in Appendix~\ref{app::proof_tpp_metric_w1}, $\dtppsym$ coincides with the Wasserstein-1 distance between event-time point measures padded with Dirac masses at $\Tmax$. This result puts the optimal-transport perspective of~\citet{xiao2017wasserstein} on a rigorous footing, yielding a genuine metric on~$\mathcal N$.
The space $\mathcal N$ is not a linear subspace of $L^1([0,\Tmax])$. The difference $\eta_t-\xi_t$ is signed, scalar multiples of counting paths are not counting paths, and the sum or difference of two counting paths need not have unit jumps. The metric structure is the natural level at which to study stability. 
Consequently, standard tools from linear and functional analysis are not directly available. 
Most notably, there is no differentiable structure on which to apply an inverse function theorem, no open mapping theorem to promote a continuous bijection to a bi-Lipschitz one, and no compact embedding to fall back on since event counts are unbounded.
We therefore work throughout with $L^1$-type metrics on both $\mathcal N$ and $\mathcal C$, which is also natural for signature-based losses that are computed through integrals.

Hence, the embedding chain from counting paths to the tensor algebra is given by:
\begin{equation}
    (\mathcal N, \dtppsym)
\xrightarrow{\;\Phi\;}
(\mathcal C, d_1)
\xrightarrow{\;\mathrm{t \mapsto (t,\Phi(\eta)_t)}\;}
C([0,\Tmax],\R^2)
\xrightarrow{\;\mathrm{Signature}\;}
T((\R^2)) \, ,
\end{equation}
where $\mathcal N$ is equipped with a metric $\dtppsym$ and $(\mathcal{C}, d_1)$ denotes the space of continuous paths $[0,\Tmax] \to \R$ equipped with the $L^1$ metric $d_1(x, y) := \int_0^{\Tmax} \abs{x(t)-y(t)}\, \mes t$. Theorem~\ref{thrm::stability} establishes that $\Phi$ is Lipschitz and injective, with a controlled inverse under mild assumptions.
The Lipschitz bound ensures that the signature loss (Eq.~\ref{eq::sig_metric}) inherits regularity from the data distribution. Closeness in counting-path space implies closeness of expected signatures. Injectivity guarantees that no event information is lost in the embedding, which is what allows $\sigW^{(M)}$ to separate distinct laws on $\mathcal N$. Finally, the \holder bound gives the converse direction under a mild separation assumption that is satisfied by any temporal data recorded at finite precision.

\begin{theorem}[Stability of the Embedding; full proofs in Appendix~\ref{thrm::proofs}]
    \label{thrm::stability}
    The embedding\, $\Phi\colon(\mathcal N,\dtppsym)\to(\mathcal C,d_1)$ satisfies the following properties:
    \begin{enumerate}[leftmargin=*, topsep=-2pt, itemsep=-4pt]
        \item \textbf{Lipschitz continuity.}\;
        For all $\eta,\xi\in\mathcal N$,
        % \begin{equation}\label{eq:lip}
            $d_1\bigl(\Phi(\eta),\Phi(\xi)\bigr) \le 5\cdot\Tmax \cdot\dtpp{\eta}{\xi}$.
        % \end{equation}
        Hence, it is continuous and measurable \textup{(Appendix~\ref{app::proofs::lip}, Theorem~\ref{thrm::Phi_lipschitz})}. 
        The constant $5$ is not tight but sufficient.
        \item \textbf{Bijection.}\;
        $\Phi$ is injective on $\mathcal N$, hence a bijection $\Phi\colon\mathcal N \xrightarrow{\sim} \Phi(\mathcal N)$. The inverse $\Psi := \Phi^{-1}$ is recovered constructively by a level-set recursion
        \textup{(Appendix~\ref{app::proofs::bij}, Theorem~\ref{thrm::bijection})}.
        % \noteNiels{change kappa for smthg else (later)}
        % \item \textbf{\holder stability of the inverse.}\;
        % Fix $\delta > 0$ and let
        % \begin{equation*}
        %     \mathcal{N}_\delta := \bigl\{\eta \in \mathcal{N} :
        %     \tau_k \ge \delta \text{ for all } k=1,\dots,m+1\bigr\},
        % \end{equation*}
        % where $m = \eta_{\Tmax}$ and $\tau_1,\dots,\tau_{m+1}$ are the interarrival times
        % (Convention~\ref{conv::boundary}). Then there exists
        % $\kappa=\kappa(\delta,\Tmax)>0$ such that for all $\eta,\xi\in\mathcal{N}_\delta$,
        % \begin{equation}
        %     \dtpp{\eta}{\xi} \le \kappa d_1\bigl(\Phi(\eta),\Phi(\xi)\bigr)^{1/2}\, ,
        % \end{equation}
        % with $\kappa=\mathcal{O}\bigl(M_{\delta}^{\lfloor M_{\delta} \rfloor}\bigr)$ where $M_\delta := \Tmax/\delta$. The explicit expression for $\kappa$ is given in Theorem~\ref{thrm::holder_main} (Appendix~\ref{app::proofs::cont}).
    \end{enumerate}
    Moreover, on counting paths whose interarrival times $\tau_1,\dots,\tau_{m+1}$ (Convention~\ref{conv::boundary})  are bounded below by any fixed $\delta>0$, the inverse $\Psi$ is \holder continuous of exponent $1/2$ \textup{(see also Appendix~\ref{app::proofs::cont}, Theorem~\ref{thrm::holder_main})}.
\end{theorem}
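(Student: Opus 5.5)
The plan is to prove the three claims of Theorem~\ref{thrm::stability} separately. I will use the optimal-transport description of $\dtppsym$ from Appendix~\ref{app::proof_tpp_metric_w1}: $\dtppsym$ is the Wasserstein-1 distance between event-time point measures padded with Dirac masses at $\Tmax$. On the line this distance is realised by the monotone (sorted) matching, so $\dtpp{\eta}{\xi}=\sum_k \abs{t_k-s_k}$. Here the shorter of the two sequences is padded with copies of $\Tmax$.

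\textbf{Lipschitz bound.} I would connect $\eta$ to $\xi$ by the order-preserving homotopy $t_k(\theta)=(1-\theta)t_k+\theta s_k$ for $\theta\in[0,1]$. Along this path $\dtppsym$ is additive, since it is a $W_1$-geodesic for sorted matchings. It therefore suffices to bound the infinitesimal change of $\Phi$ when a single event $t_k$ moves by $h$.
\begin{itemize}[leftmargin=*]
\item Only $\tau_k$ and $\tau_{k+1}$ change, each by $\abs{h}$, and only the knots $t_{k-1},t_k,t_{k+1}$ are involved.
\item The graph of $\Phi$ therefore changes only on the segments $[t_{k-1},t_{k+2}]$.
\item There the two piecewise-linear functions differ by at most $O(\abs{h})$ in knot values, plus a horizontal knot shift of size $\abs{h}$ with vertical range at most $\Tmax$.
\end{itemize}
Bounding the $L^1$ change on each affected segment by $\Tmax\abs{h}$ should give at most $5\Tmax\abs{h}$ per event. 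Summing over $k$ and integrating in $\theta$ gives $d_1(\Phi(\eta),\Phi(\xi))\le 5\Tmax\,\dtpp{\eta}{\xi}$.

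Events that are padded at $\Tmax$ correspond to an event merging into the endpoint. I would treat this by continuity: an event at $\Tmax-\varepsilon$ changes $\Phi$ only on its last segment, by $O(\Tmax\varepsilon)$, and this vanishes as $\varepsilon\to0$. Measurability then follows from continuity.

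\textbf{Injectivity and the inverse $\Psi$.} Write $x=\Phi(\eta)$. On $[0,t_1]$, $x$ is affine from $\tau_0=0$ to $\tau_1=t_1$, so $x(t)=t$ there. Recursively, suppose $t_k$ and $\tau_k=x(t_k)$ are known. The function $g_k(t):=x(t)-(t-t_k)$ is affine on $[t_k,t_{k+1}]$, equals $\tau_k>0$ at $t_k$ for $k\ge1$, and vanishes at $t_{k+1}$. Hence $t_{k+1}=\inf\{t>t_k: x(t)=t-t_k\}$, which is a level-set recursion. The recursion stops when it returns $\Tmax$. This defines $\Psi$ explicitly with $\Psi\circ\Phi=\mathrm{id}$, which proves injectivity.

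\textbf{H\"older inverse on $\mathcal N_\delta$.} All interarrival times are at least $\delta$, so the counts satisfy $m\le \Tmax/\delta=:M_\delta$. Every slope of $\Phi(\eta)$ is bounded by $L:=\Tmax/\delta$. A piecewise-linear $f$ with $\mathrm{Lip}(f)\le 2L$ satisfies $\norm{f}_\infty\le\sqrt{4L\,\norm{f}_{1}}$, since a peak of height $a$ forces area at least $a^2/(4L)$. Applied to $x-y$, this converts $d_1$ closeness into $\sup$ closeness $\varepsilon:=\norm{x-y}_\infty\lesssim d_1^{1/2}$, and this is the source of the exponent $1/2$.

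I would then run the level-set recursion on $x$ and $y$ in parallel. The function $g_k$ crosses zero with slope of magnitude at least $\tau_k/\tau_{k+1}\ge\delta/\Tmax$. So a $\sup$-perturbation of size $\varepsilon$, together with an error $e_k$ in $t_k$, moves the crossing by $O\bigl((\Tmax/\delta)(\varepsilon+e_k)\bigr)$. This gives the recursion $e_{k+1}\le C M_\delta(e_k+\varepsilon)$, so $e_k=O\bigl(M_\delta^{k}\varepsilon\bigr)$. Summing gives $\dtpp{\eta}{\xi}\le\sum_k e_k\le\kappa\,d_1^{1/2}$ with $\kappa=O\bigl(M_\delta^{\lfloor M_\delta\rfloor}\bigr)$.

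\textbf{Main obstacle.} The delicate step is showing that $\eta$ and $\xi$ have the same number of events once $d_1$ is small, so that the parallel recursion stays aligned. I would try to derive this from transversality: each kink of $x$ has a slope jump of order at least $\delta/\Tmax$, which a $\sup$-perturbation of size $\varepsilon$ cannot create or remove. If $d_1$ is large, the bound holds trivially with $\kappa$ enlarged, because $\dtppsym\le M_\delta\Tmax$ on $\mathcal N_\delta$.
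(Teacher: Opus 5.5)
Your Lipschitz argument takes a different route from the paper's. The paper splits $\Phi(\eta)-\Phi(\xi)$ globally into two parts. The first is a same-grid ``vertical'' part, handled by the affine $L^1$ bound plus telescoping of interarrivals. The second is a time-warp part $y\circ\lambda-y$, controlled by the total variation of $y$. You instead move along the sorted-matching homotopy, on which $\dtppsym$ is additive, and integrate a local single-event bound. This is sound. It trades the global time-warp lemma for a purely local computation. The cost is a fine partition, so that events can be moved one at a time without collisions, plus a continuity argument as padded events leave $\Tmax$.

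Your injectivity recursion is the paper's level-set recursion in disguise, since $x(t)=t-t_k$ iff $f_x(t)=t_k$ with $f_x(t)=t-x(t)$. One fix is needed: your infimum rule returns $0$ at $k=0$. So $t_1$ must be read off as the right end of the initial segment where $x=\mathrm{id}$; afterwards the slope drops to $1-\tau_1/\tau_2<1$.

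The genuine gap is the count-stability step of the H\"older part. You rightly flag it as the crux, but the claim you propose to settle it with is false. The slope of $\Phi(\eta)$ on $[t_k,t_{k+1}]$ is $1-\tau_k/\tau_{k+1}$, so the jump at $t_k$ ($k\ge2$) is $\tau_{k-1}/\tau_k-\tau_k/\tau_{k+1}$. This vanishes whenever three consecutive interarrivals are in geometric progression, and nothing in $\mathcal N_\delta$ bounds it below. Remark~\ref{rmq::kink_comparison} imposes such a slope gap only as an extra hypothesis. For equally spaced events, $\Phi(\eta)$ is the identity on $[0,t_1]$ and constant on $[t_1,\Tmax]$, so $t_2,\dots,t_m$ leave no kink at all. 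Kinks therefore do not count events.

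Without equal counts, your parallel recursion does not control $\dtppsym$: unmatched events contribute terms $\Tmax-s_k$ that are not small in $\varepsilon$. Your ``large $d_1$'' fallback is also vacuous until the threshold is known to be positive.

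The paper closes this in Lemma~\ref{lemma::count_separation} by compactness. Each $\mathcal N_\delta^{(n)}$ is compact, with $\dtppsym$ equal to the $\ell^1$ distance. Since $\Phi$ is continuous and injective, the images of distinct count classes are disjoint compact sets at $d_1$-distance $d_{\min}(\delta)>0$. Beyond that threshold, the trivial bound $\dtppsym\le\lfloor M_\delta\rfloor\Tmax$ suffices.

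Alternatively, you can repair your own argument by tracking zero crossings of $g_k$ (level sets of $f_x$) instead of kinks. Suppose $m<n$ and $\varepsilon<\delta$. Run the recursion up to $k=m$ and use $f_x(\Tmax)=t_m$ and $f_y(\Tmax)=s_n$. This gives $\delta\le s_n-s_m\le\abs{s_n-t_m}+\abs{t_m-s_m}\le(1+C_m)\varepsilon$ with $C_m=\sum_{j=1}^m M_\delta^j$. That is an explicit separation, which your $L^1$--$L^\infty$ interpolation converts into a lower bound on $d_1$.

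In either version the recursion itself needs $\varepsilon<\delta$. This keeps the crossings in $[\max(t_1,s_1),\Tmax]$, where $f_x$ and $f_y$ are strictly increasing. The case $\varepsilon\ge\delta$ is sent to the trivial bound.
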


\subsection{Evaluation Metrics for Counting-Path Distributions}
\label{subsect::evaluation}

Evaluating generative TPPs requires discrepancy measures between distributions on $\mathcal{N}$. Current practice borrows tools from continuous-data settings, e.g.\ MMD with Gaussian kernels~\cite{shchur2020fast, psdiff} or $\WAS$ on event times~\cite{addthin, editflow}, without verifying that these are valid distances on counting-path space. We show that the metric $\dtppsym$ gives rise to mathematically valid distributional discrepancies on $\mathcal N$. 

Throughout this section, uppercase $P, Q \in \mathcal{P}_1(\mathcal{N})$ denote probability distributions over counting paths $\mathcal{N}$ with finite first $\dtppsym$-moment. The metric $\dtppsym$ acts at the path level, while the distribution-level discrepancies $\mathcal{E}$, $\WAS$, and $\sigW^{(M)}$ introduced in this section act with $\dtppsym$ as ground cost.

\textbf{Energy distance ($\mathcal{E}$).} The energy distance ($\mathcal{E}$) is a standard metric induced by an underlying distance function~\cite{gneiting_strictly_2007, szekely_energy_2013}, see Appendix~\ref{sec::metrics}.
Nonnegativity of $\mathcal{E}$ requires that $(\mathcal{N}, \dtppsym)$ be of negative type~\cite{szekely_energy_2013}. The map $\eta \mapsto (t \mapsto \eta_t)$ is an isometry from $(\mathcal{N}, \dtppsym)$ into $L^1([0, \Tmax])$, which is of negative type, and the property is inherited by subspaces (Appendix~\ref{subsec::distances_proba_with_dtpp}). Hence $D := \sqrt{\mathcal{E}}$ is a well-defined discrepancy. We further show in Appendix~\ref{app::counter_ex_strong} that $\dtppsym$ is not of strong negative type, so $D$ is a pseudometric rather than a metric. For the purpose of ranking models, this is sufficient.

\textbf{Wasserstein-1 distance ($\WAS$).} Since $(\mathcal{N}, \dtppsym)$ is a metric space, the Wasserstein-1
distance~\cite{ramdas2017wasserstein, villani_optimal_2009} is a well-defined metric on $\mathcal{P}_1(\mathcal{N})$. We recall its definition in Appendix~\ref{sec::metrics}.

\textbf{Signature discrepancy ($\sigW$).} The loss $\sigW^{(M)}$ (Eq.~\ref{eq::sig_metric}) can also be used as an evaluation metric in the continuous setting and we adapt it to TPPs with our embedding. 
Since $\Phi$ is injective and measurable (Theorem~\ref{thrm::stability}), distinct laws $P \neq Q$ on $\mathcal{N}$ induce distinct pushforwards $\Phi_\sharp P \neq \Phi_\sharp Q$. 
If these pushforward laws satisfy the moment condition of Theorem~\ref{thrm::determinacy_sig}, then the expected signature determines the law, and hence $\E_{\eta\sim P}[S(\Phi(\eta))] \neq \E_{\xi\sim Q}[S(\Phi(\xi))]$. Consequently, for any such pair $P \neq Q$, there exists a finite truncation level $M$ such that $\sigW^{(M)}(P, Q)>0.$
For each fixed $M$, $\sigW^{(M)}$, being an $\ell^2$ distance between expected truncated signatures, is symmetric and satisfies the triangle inequality, and is therefore a pseudometric.

\textbf{Pointwise Losses.} As mentioned in the introduction, pointwise losses fail in a quantifiable, easily reproducible way on the very benchmarks the community uses to evaluate generative TPPs. For an interarrival distribution $P$ with finite first moment, MAE is minimised by $\hat\tau = \mathrm{median}(P)$ and MSE by $\hat\tau =\E[P]$~\cite{gneiting_strictly_2007}. Both optima are Dirac measures. Neither can, in principle, distinguish a calibrated generative model from a deterministic regressor. See Table~\ref{table::pointwise_results_combined} for an empirical confirmation.

\section{Experimental Results}
\label{sec::experiments}
We evaluate \sigTPP~on controlled synthetic regimes and real-world tasks.
We assess performance across multiple axes, from path-level distributional fidelity to moment structure, and examine the reliability of pointwise metrics as evaluation criteria. We further study the dependence of \sigTPP~on the signature truncation degree and its generation cost relative to baselines.

\subsection{Datasets, Baselines, and Evaluation Metrics}
\label{subsect::data-baselines}
We use 9 datasets in total, 4 synthetic and 5 real-world. Table~\ref{table::datasets_stats} and Figures~\ref{fig::hp_three_marks_diagnostics}-\ref{fig::yelp_mississauga_diagnostics} summarise their basic characteristics. 
The synthetic corpus covers the canonical regimes of TPP modelling: Poisson process (PS), inhomogeneous Poisson (IP), univariate Hawkes process (H1), and 3-dimensional Hawkes process (H3).
The real-world corpus encompasses datasets from prior work~\cite{xue2024easytpp, editflow}, namely Earthquake (EQ), Stack Overflow (SO), Taobao (TB), Taxi (TX), and Yelp (YLP).

We compare \sigTPP~against 5 baselines spanning the main paradigms for generative TPP modelling.
We focus on autoregressive baselines that have a neural architecture similar to \sigTPP\ (similar amount of parameters) to make comparisons fair.
These include the denoising diffusion probabilistic model (DDPM) and variational autoencoder (VAE)~\cite{lin2022exploring}, the Wasserstein GAN (WGAN)~\cite{xiao2017wasserstein}, and a maximum-likelihood Gamma fit~\cite{coletta_learning_2022, cont_limit_2023} as a parametric reference (\gammamodel). To isolate the contribution of distributional modelling, we also include \determodel~\cite{lin2022exploring}, a deterministic regressor whose predictive law is a Dirac measure and which by construction cannot capture distributional structure.

We evaluate the goodness-of-fit along the following axes: pathwise discrepancies via $\mathcal{E}$ (energy distance) and $\WAS$ under our proposed metric $\dtppsym$, signature-based discrepancy via $\sigW$, pointwise probabilistic accuracy via the continuous ranked probability score (CRPS), marginal and temporal structure via histograms of interarrival times ($\histITloss$) and intensity ($\histintloss$), pairwise correlation discrepancy (PCD), and autocorrelation discrepancy (ACD) across sequence components. Lower is better for all reported metrics. Metric definitions and a more expansive justification for their appropriateness are deferred to Appendix~\ref{sec::metrics}. Results in tables report the mean test performance of each model, with standard errors in parentheses, computed as the sample standard deviation over $100$ bootstrap replicate estimates.

For each dataset, hyperparameters are selected on the validation split using a rank-based aggregate criterion over the validation diagnostics. The criterion sums within-dataset ranks across the ranked validation metrics. During training, checkpoints are selected by the validation $\histITloss$. The aggregate rank criterion is used only for model selection and is not reported as a performance measure.

Since evaluation metrics can differ substantially in scale across datasets, we report the relative score computed as a 2-step process (see also~\cite{ansari2024chronos, das2024timesfm, gruver2023llmtime}). First, we compute, for each model, a relative score defined as the ratio between the model's score and that of a baseline model, here \determodel. We then aggregate these relative scores across tasks using the geometric mean~\cite{fleming1986how}, which is the appropriate summary statistic for multiplicative relative comparisons, and make the model ordering independent of the choice of the baseline (see Appendix~\ref{subsec::relative_score}).

\subsection{Recovering Known Synthetic Processes}
\label{subsect::synth}

\begin{table}[t]
\renewcommand{\arraystretch}{0.9}%
\centering
\caption{Performance comparison of \sigTPP~against other methods on synthetic TPP tasks (PS, IP, H1, H3) using several discrepancy metrics (see Appendix~\ref{sec::metrics} for metric definitions). Values report standard error in parentheses over 100 bootstrap replicates. Lower is better~$(\downarrow)$. The best result for each dataset and metric pair is \textbf{bolded}, and the second-best is \underline{underlined}.}
\label{table::synth_results_dist}
\smallskip
\setlength{\tabcolsep}{4pt} %%% changes how close the columns are
\setlength{\aboverulesep}{1pt}
\setlength{\belowrulesep}{1pt}
\resizebox{\textwidth}{!}{%
\begin{tabular}[t]{c l*{4}{r}@{\hspace{6pt}}c*{4}{r}}
  \toprule
  & Model & \multicolumn{1}{c}{PS} & \multicolumn{1}{c}{IP} & \multicolumn{1}{c}{H1} & \multicolumn{1}{c}{H3} & & \multicolumn{1}{c}{PS} & \multicolumn{1}{c}{IP} & \multicolumn{1}{c}{H1} & \multicolumn{1}{c}{H3} \\
  \cmidrule(lr){2-6}\cmidrule(lr){8-11}
  \multirow{6}{*}{\rotatebox[origin=c]{90}{\shortstack[c]{$\mathcal{E}$ {\scriptsize $(\times 10^{\scalebox{1}[1]{-}3})$}}}}
  & \sigTPP & \textbf{0.60} (0.95) & \textbf{0.53} (0.35) & \textbf{0.75} (0.30) & \textbf{1.16} (1.61) & \multirow{6}{*}{\rotatebox[origin=c]{90}{\shortstack[c]{$\WAS$ {\scriptsize $(\times 10^{\scalebox{1}[1]{-}2})$}}}} & \textbf{5.31} (0.20) & \textbf{6.44} (0.10) & \textbf{2.84} (0.12) & \textbf{10.04} (0.50) \\
  & \cellcolor{gray!15}VAE & \cellcolor{gray!15}\underline{2.77} (2.03) & \cellcolor{gray!15}6.48 (1.82) & \cellcolor{gray!15}5.89 (1.28) & \cellcolor{gray!15}\underline{6.56} (2.98) & & \cellcolor{gray!15}\underline{5.86} (0.42) & \cellcolor{gray!15}7.65 (0.38) & \cellcolor{gray!15}4.24 (0.30) & \cellcolor{gray!15}\underline{11.36} (0.84) \\
  & DDPM & 5.16 (2.55) & 11.24 (1.92) & \underline{2.02} (0.68) & 6.64 (4.13) & & 6.82 (0.43) & 9.44 (0.40) & \underline{3.24} (0.23) & 12.20 (0.99) \\
  & \cellcolor{gray!15}WGAN & \cellcolor{gray!15}6.30 (2.15) & \cellcolor{gray!15}\underline{3.75} (1.08) & \cellcolor{gray!15}15.38 (1.89) & \cellcolor{gray!15}11.52 (4.67) & & \cellcolor{gray!15}6.87 (0.45) & \cellcolor{gray!15}\underline{7.42} (0.31) & \cellcolor{gray!15}6.29 (0.36) & \cellcolor{gray!15}12.08 (0.92) \\
  & \determodel & 330 (13.3) & 85.0 (3.20) & 229 (5.87) & 640 (25.4) & & 28.3 (0.65) & 16.9 (0.30) & 21.7 (0.24) & 54.6 (1.11) \\
  & \cellcolor{gray!15}\gammamodel & \cellcolor{gray!15}4.88 (3.00) & \cellcolor{gray!15}21.48 (3.16) & \cellcolor{gray!15}11.83 (1.40) & \cellcolor{gray!15}16.30 (3.99) & & \cellcolor{gray!15}6.39 (0.55) & \cellcolor{gray!15}9.34 (0.42) & \cellcolor{gray!15}5.54 (0.26) & \cellcolor{gray!15}13.86 (1.10) \\
  \cmidrule(lr){2-6}\cmidrule(lr){8-11}
  \multirow{6}{*}{\rotatebox[origin=c]{90}{\shortstack[c]{$\sigW$ {\scriptsize $(\times 10^{\scalebox{1}[1]{-}5})$}}}}
  & \sigTPP & \textbf{1.96} (2.04) & \textbf{1.53} (1.14) & \textbf{11.88} (7.33) & 1.10 (0.80) & \multirow{6}{*}{\rotatebox[origin=c]{90}{\shortstack[c]{CRPS {\scriptsize $(\times 10^{\scalebox{1}[1]{-}1})$}}}} & 4.61 (0.08) & 3.04 (0.03) & 10.78 (0.10) & 2.54 (0.03) \\
  & \cellcolor{gray!15}VAE & \cellcolor{gray!15}\underline{2.62} (1.79) & \cellcolor{gray!15}\underline{2.18} (0.97) & \cellcolor{gray!15}71.19 (8.12) & \cellcolor{gray!15}0.71 (0.60) & & \cellcolor{gray!15}4.60 (0.08) & \cellcolor{gray!15}\underline{2.99} (0.03) & \cellcolor{gray!15}\underline{9.29} (0.11) & \cellcolor{gray!15}\underline{2.33} (0.03) \\
  & DDPM & 42.9 (13.1) & 7.52 (1.87) & \underline{37.3} (11.9) & 2.25 (1.19) & & 4.74 (0.08) & 3.13 (0.03) & \textbf{9.20} (0.11) & \textbf{2.22} (0.03) \\
  & \cellcolor{gray!15}WGAN & \cellcolor{gray!15}39.4 (12.3) & \cellcolor{gray!15}8.41 (2.95) & \cellcolor{gray!15}116 (12.8) & \cellcolor{gray!15}\underline{0.64} (0.58) & & \cellcolor{gray!15}\textbf{4.53} (0.08) & \cellcolor{gray!15}\textbf{2.95} (0.03) & \cellcolor{gray!15}9.46 (0.11) & \cellcolor{gray!15}2.36 (0.03) \\
  & \determodel & 62.5 (12.7) & 9.40 (2.00) & 172 (11.9) & 1.08 (0.39) & & 6.29 (0.11) & 4.25 (0.03) & 12.82 (0.15) & 3.23 (0.04) \\
  & \cellcolor{gray!15}\gammamodel & \cellcolor{gray!15}5.66 (4.17) & \cellcolor{gray!15}30.3 (5.33) & \cellcolor{gray!15}218 (17.3) & \cellcolor{gray!15}\textbf{0.57} (0.36) & & \cellcolor{gray!15}\underline{4.60} (0.08) & \cellcolor{gray!15}3.18 (0.03) & \cellcolor{gray!15}9.76 (0.11) & \cellcolor{gray!15}2.40 (0.03) \\
  \cmidrule(lr){2-6}\cmidrule(lr){8-11}
  \multirow{6}{*}{\rotatebox[origin=c]{90}{\shortstack[c]{{\footnotesize $\histintloss$} {\scriptsize $(\times 10^{\scalebox{1}[1]{-}2})$}}}}
  & \sigTPP & \underline{3.26} (0.45) & \textbf{2.47} (0.33) & \textbf{3.42} (0.33) & \underline{3.08} (0.47) & \multirow{6}{*}{\rotatebox[origin=c]{90}{\shortstack[c]{{\footnotesize $\histITloss$} {\scriptsize $(\times 10^{\scalebox{1}[1]{-}2})$}}}} & \underline{2.92} (0.58) & \textbf{1.89} (0.40) & \textbf{2.71} (0.33) & \textbf{2.33} (0.49) \\
  & \cellcolor{gray!15}VAE & \cellcolor{gray!15}3.38 (0.46) & \cellcolor{gray!15}\underline{3.97} (0.36) & \cellcolor{gray!15}8.07 (0.58) & \cellcolor{gray!15}\textbf{2.21} (0.40) & & \cellcolor{gray!15}4.32 (0.56) & \cellcolor{gray!15}4.90 (0.43) & \cellcolor{gray!15}\underline{4.01} (0.44) & \cellcolor{gray!15}4.16 (0.42) \\
  & DDPM & 4.71 (0.64) & 7.35 (0.48) & \underline{5.03} (0.50) & 4.11 (0.57) & & 7.48 (0.73) & 5.09 (0.43) & 4.03 (0.39) & 4.23 (0.55) \\
  & \cellcolor{gray!15}WGAN & \cellcolor{gray!15}7.28 (0.67) & \cellcolor{gray!15}5.70 (0.45) & \cellcolor{gray!15}12.80 (0.62) & \cellcolor{gray!15}3.25 (0.49) & & \cellcolor{gray!15}6.68 (0.75) & \cellcolor{gray!15}\underline{2.96} (0.39) & \cellcolor{gray!15}6.16 (0.36) & \cellcolor{gray!15}3.67 (0.47) \\
  & \determodel & 12.6 (0.71) & 12.1 (0.48) & 13.9 (0.49) & 4.16 (0.51) & & 60.8 (5.26) & 48.1 (3.45) & 70.5 (2.33) & 66.6 (6.25) \\
  & \cellcolor{gray!15}\gammamodel & \cellcolor{gray!15}\textbf{3.15} (0.45) & \cellcolor{gray!15}11.66 (0.47) & \cellcolor{gray!15}5.93 (0.54) & \cellcolor{gray!15}3.38 (0.41) & & \cellcolor{gray!15}\textbf{2.88} (0.65) & \cellcolor{gray!15}3.19 (0.39) & \cellcolor{gray!15}6.48 (0.45) & \cellcolor{gray!15}\underline{3.06} (0.64) \\
  \cmidrule(lr){2-6}\cmidrule(lr){8-11}
  \multirow{6}{*}{\rotatebox[origin=c]{90}{\shortstack[c]{ACD {\scriptsize $(\times 10^{\scalebox{1}[1]{-}2})$}}}}
  & \sigTPP & 1.40 (0.45) & \underline{1.30} (0.23) & 2.08 (0.28) & \underline{1.54} (0.42) & \multirow{6}{*}{\rotatebox[origin=c]{90}{\shortstack[c]{PCD {\scriptsize $(\times 10^{\scalebox{1}[1]{-}2})$}}}} & \textbf{7.40} (0.52) & 6.07 (0.42) & 6.17 (0.41) & \underline{8.35} (0.41) \\
  & \cellcolor{gray!15}VAE & \cellcolor{gray!15}\underline{1.39} (0.42) & \cellcolor{gray!15}\textbf{0.92} (0.21) & \cellcolor{gray!15}\textbf{1.29} (0.22) & \cellcolor{gray!15}1.95 (0.45) & & \cellcolor{gray!15}7.47 (0.53) & \cellcolor{gray!15}\underline{5.98} (0.43) & \cellcolor{gray!15}\textbf{6.00} (0.42) & \cellcolor{gray!15}\textbf{8.35} (0.41) \\
  & DDPM & 1.47 (0.48) & 1.46 (0.27) & 1.53 (0.27) & 3.81 (0.45) & & 7.53 (0.52) & 6.05 (0.41) & \underline{6.06} (0.41) & 8.47 (0.42) \\
  & \cellcolor{gray!15}WGAN & \cellcolor{gray!15}1.47 (0.45) & \cellcolor{gray!15}2.65 (0.29) & \cellcolor{gray!15}\underline{1.43} (0.27) & \cellcolor{gray!15}\textbf{1.52} (0.38) & & \cellcolor{gray!15}7.49 (0.52) & \cellcolor{gray!15}\textbf{5.97} (0.43) & \cellcolor{gray!15}6.72 (0.42) & \cellcolor{gray!15}8.67 (0.40) \\
  & \determodel & 49.3 (0.53) & 32.78 (0.29) & 39.38 (0.28) & 63.36 (0.45) & & 96.5 (0.43) & 96.9 (0.17) & 97.4 (0.30) & 97.9 (0.22) \\
  & \cellcolor{gray!15}\gammamodel & \cellcolor{gray!15}\textbf{1.34} (0.42) & \cellcolor{gray!15}1.81 (0.29) & \cellcolor{gray!15}2.81 (0.28) & \cellcolor{gray!15}6.18 (0.45) & & \cellcolor{gray!15}\underline{7.41} (0.53) & \cellcolor{gray!15}6.28 (0.40) & \cellcolor{gray!15}6.39 (0.41) & \cellcolor{gray!15}8.78 (0.41) \\
  \bottomrule
\end{tabular}%
}%
\end{table}

We first use synthetic point processes with known data-generating mechanisms to evaluate \sigTPP. 
Table~\ref{table::synth_results_dist} presents the discrepancy metrics $\mathcal{E}$ and $\WAS$ under $\dtppsym$, $\sigW$, CRPS, $\histITloss$, $\histintloss$, ACD, and PCD.
%%%%%%% # We first ask whether the proposed model improves distributional fidelity on \(\mathcal N\) 
Our first observation is that \sigTPP\ improves distributional fidelity on $\mathcal{N}$ relative to the strongest existing baselines (see also Section~\ref{subsect::evaluation}).
%%%%%%% ##  %
\sigTPP~obtains the lowest $\mathcal{E}$ and $\WAS$ on all datasets (or 3 out of 4 for $\sigW$). 
The relative score of our model is the lowest under $\mathcal{E}$, $\WAS$ and $\sigW$, and, compared to the second-best model (VAE), \sigTPP~improves the performance by $86\%$, $18\%$ and $39\%$, respectively (see Figure~\ref{fig::normalised_score_results_synth} and Table~\ref{table::geo_improvement_vs_deter_grouped}).
These gains are consistent with \sigTPP's training objective, which targets the joint temporal law rather than local event-level behaviour.

%%%%%%% # conditional results
However, it is also striking that while \sigTPP~improves the distributional fit, it underperforms on the CRPS (by approximately $6.5\%$ compared to VAE and DDPM). This suggests that optimising a path-level distributional objective favours global distributional alignment, but does not necessarily improve conditional calibration relative to methods trained directly with conditional losses.
%
%%%%%%% # Other heuristics
Nevertheless, a number of summary-statistic discrepancies indicate that \sigTPP~remains competitive.
$\histITloss, \histintloss, \ACD$, and $\PCD$ exhibit larger sampling variability, and many methods yield overlapping standard errors. \sigTPP\ is either the best method or within a standard error of the best baseline in 13 out of 16 metric-dataset pairs: in $3(+1)/4$ datasets for $\histITloss$, $2(+1)/4$ for $\histintloss$, $0(+2)/4$ for ACD, and $1(+3)/4$ for PCD, where the number in parentheses denotes outcomes within a standard error of the best. When ranking methods separately for each dataset and metric, \sigTPP~has the best average rank of $2.0$, followed by VAE with an average rank of $2.4$ (Table~\ref{table::avg_rank_by_metric_grouped}).
Figure~\ref{fig::qq_epdf_qualitative}(a) showcases this interpretation for H3, a representative challenging synthetic task. The QQ plot shows that \sigTPP~matches the empirical interarrival-time distribution, including the upper tail where VAE and DDPM deviate visibly. The qualitative summaries in Figure~\ref{fig::hawkes_qualitative_summary} further demonstrate that \sigTPP~captures the overall intensity profile, as well as the correlation and autocorrelation structures. These visual diagnostics are consistent with the favourable $\histITloss$ and $\histintloss$ results, since the former reflects interarrival time fit and the latter agreement in the intensity shape.
Overall, across all synthetic metric-dataset pairs, \sigTPP~is best in $17/32$ cases and within a standard error of the best in another $8/32$ cases, making it competitive in about $80\%$ of them. This is confirmed by the relative score where \sigTPP~has the best score on 5 out of 8 metrics, the second lowest on 2, and is only outperformed when looking at CRPS.
%%%%%%% # Overall Comparison
In pairwise comparisons against each baseline, averaged over datasets and metrics, \sigTPP\ outperforms WGAN by $52\%$, VAE by $34\%$, and DDPM by $48\%$.

\begin{figure}[!b]
  \centering
  \begin{subfigure}[b]{0.42\textwidth}
      \vspace{0pt}
      \centering
      \includegraphics[width=\textwidth]{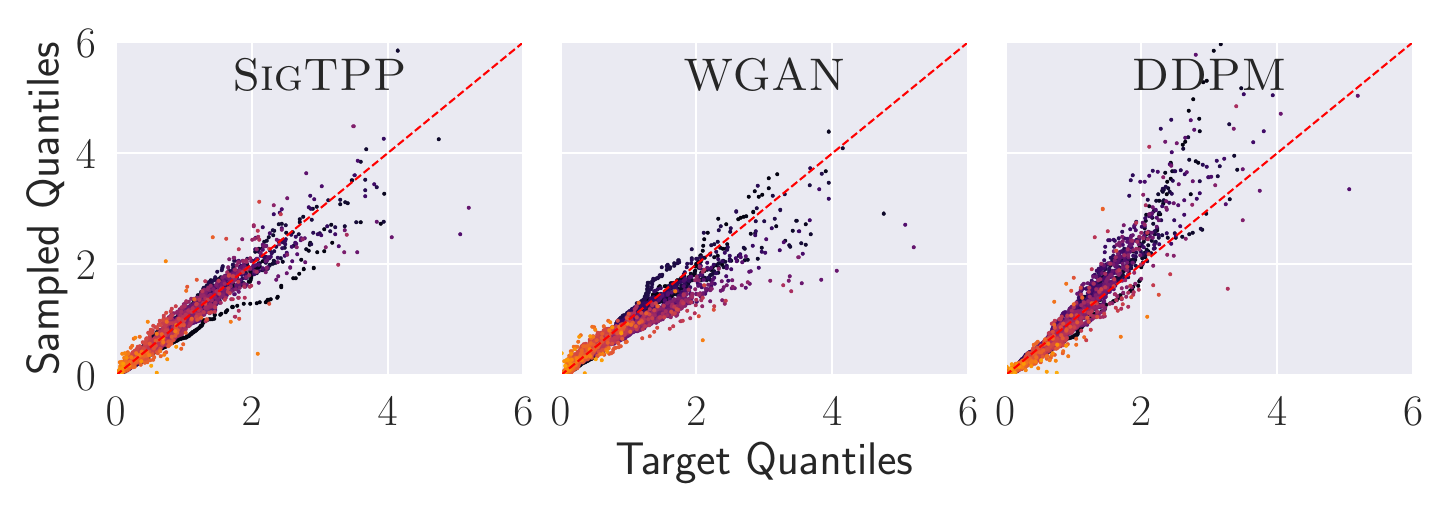}
      \includegraphics[width=\textwidth]{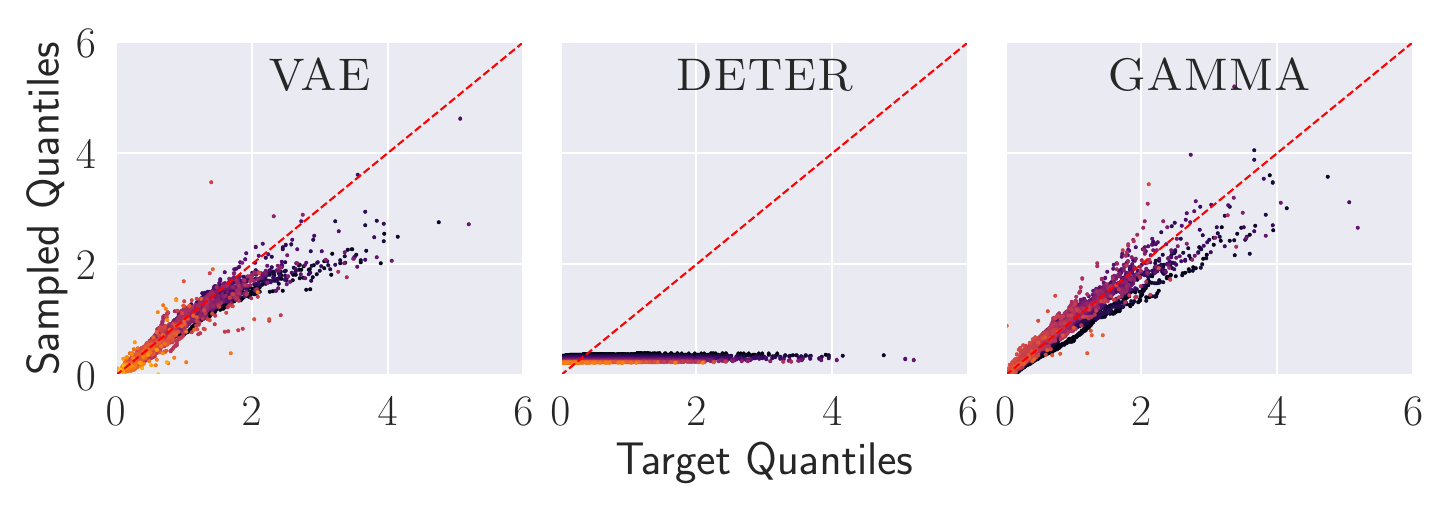}
      \par\vspace{-5pt}
      \makebox[\textwidth][c]{\small(a)}
  \end{subfigure}
  \hfill
  \begin{subfigure}[b]{0.55\textwidth}
      \vspace{0pt}
      \centering
      \includegraphics[width=\textwidth]{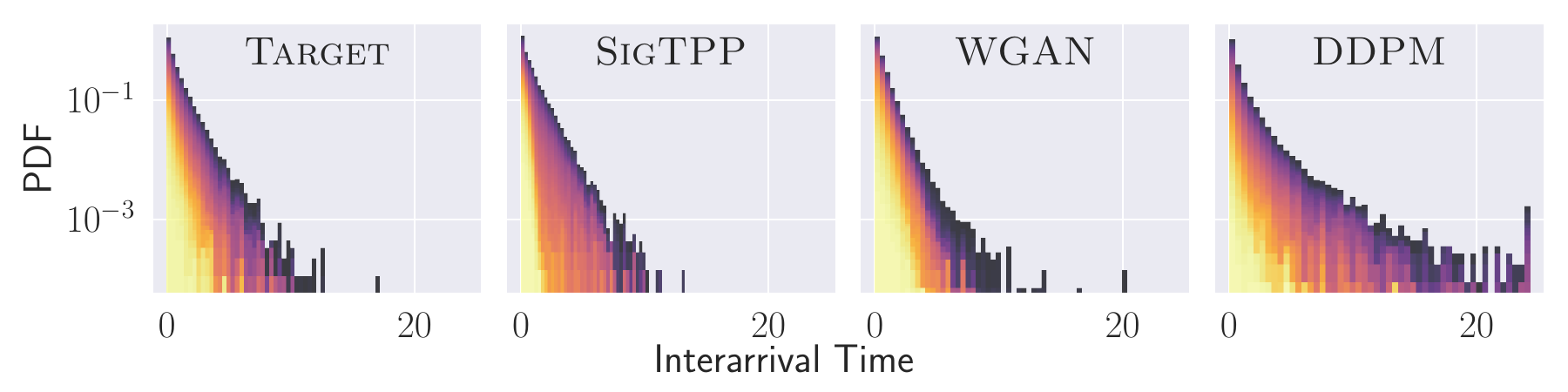}
      \par\vspace{4pt}
      \includegraphics[width=\textwidth]{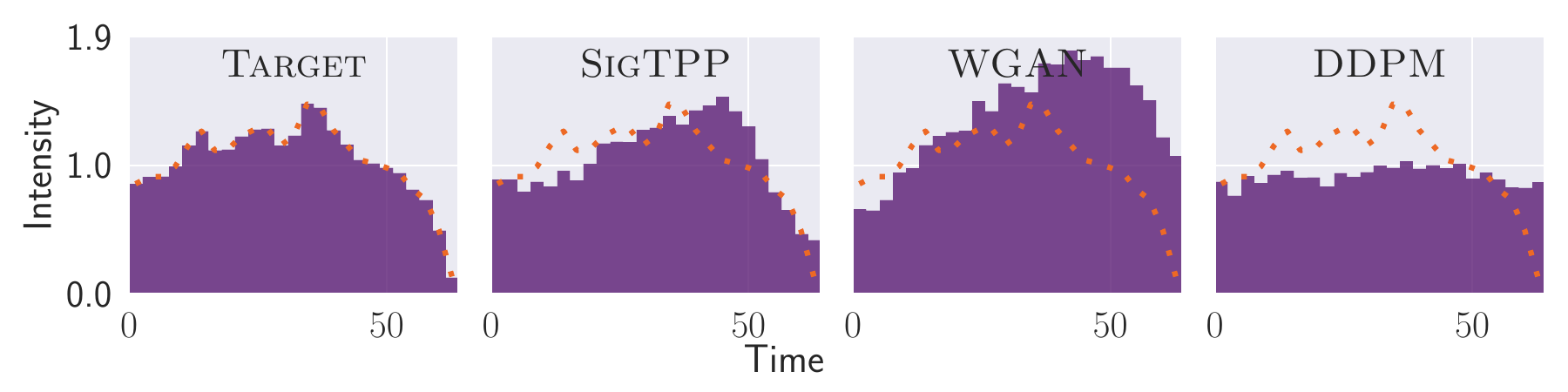}
      \par\vspace{-5pt}
      \makebox[\textwidth][c]{\small(b)}
  \end{subfigure}
\caption{(a) Quantile-quantile (QQ) plots (generated vs. reference sample quantiles on x-axis and y-axis, respectively) for all models on synthetic H3.
(b) Empirical fit on the real-world task SO for \sigTPP, WGAN, and DDPM. Top: histogram estimates of the interarrival times density $p(x)$ on a logarithmic scale. Bottom: empirical intensity functions $\hat{\lambda}(t)$ from the same sequences.}
\label{fig::qq_epdf_qualitative}
\end{figure}

\subsection{Real-World Tasks}
\label{subsect::real}
\begin{table}[t]
\renewcommand{\arraystretch}{1.05}
\setlength{\tabcolsep}{4pt}
  \centering
  \caption{Performance comparison of \sigTPP~against other methods on real-world TPP tasks (EQ, SO, TB, TX, YLP) using several discrepancy metrics (see Appendix~\ref{sec::metrics} for metric definitions). Values report standard error in parentheses over 100 bootstrap replicates. Lower is better~$(\downarrow)$. The best result for each dataset and metric pair is \textbf{bolded}, and the second-best is \underline{underlined}.}
  \label{table::real_results_dist}
  \vspace{0.4em}
  \small
\setlength{\tabcolsep}{0.005\linewidth}
\setlength{\aboverulesep}{0.5pt}
\setlength{\belowrulesep}{0.5pt}
  \begin{adjustbox}{width=\linewidth}
    \begin{tabular}[t]{c l*{5}{r}@{\hspace{6pt}}c*{5}{r}}
      \toprule
      & Model & \multicolumn{1}{c}{EQ} & \multicolumn{1}{c}{SO} & \multicolumn{1}{c}{TB} & \multicolumn{1}{c}{TX} & \multicolumn{1}{c}{YLP} & & \multicolumn{1}{c}{EQ} & \multicolumn{1}{c}{SO} & \multicolumn{1}{c}{TB} & \multicolumn{1}{c}{TX} & \multicolumn{1}{c}{YLP} \\
      \cmidrule(lr){2-7}\cmidrule(lr){9-13}
      \multirow{6}{*}{\rotatebox[origin=c]{90}{\shortstack[c]{$\mathcal{E}$ {\scriptsize $(\times 10^{\scalebox{1}[1]{-}3})$}}}}
      & \sigTPP & 32.2 \footnotesize{(0.67)} & \textbf{7.33} \footnotesize{(1.15)} & 515 \footnotesize{(25.3)} & \underline{79.02} \footnotesize{(2.98)} & \underline{50.04} \footnotesize{(10.7)} & \multirow{6}{*}{\rotatebox[origin=c]{90}{\shortstack[c]{$\WAS$ {\scriptsize $(\times 10^{\scalebox{1}[1]{-}2})$}}}} & \textbf{2.93} \footnotesize{(0.04)} & \textbf{6.34} \footnotesize{(0.41)} & 58.24 \footnotesize{(2.40)} & \textbf{8.03} \footnotesize{(0.17)} & \textbf{15.17} \footnotesize{(1.07)} \\
      & \cellcolor{gray!15}VAE & \cellcolor{gray!15}38.6 \footnotesize{(0.76)} & \cellcolor{gray!15}\underline{13.18} \footnotesize{(2.39)} & \cellcolor{gray!15}535 \footnotesize{(23.7)} & \cellcolor{gray!15}81.15 \footnotesize{(2.94)} & \cellcolor{gray!15}\textbf{36.09} \footnotesize{(24.0)} & & \cellcolor{gray!15}3.05 \footnotesize{(0.05)} & \cellcolor{gray!15}\underline{7.55} \footnotesize{(0.55)} & \cellcolor{gray!15}59.05 \footnotesize{(2.34)} & \cellcolor{gray!15}8.15 \footnotesize{(0.24)} & \cellcolor{gray!15}\underline{15.92} \footnotesize{(2.76)} \\
      & DDPM & \textbf{24.0} \footnotesize{(0.59)} & 19.78 \footnotesize{(5.59)} & \textbf{463} \footnotesize{(21.2)} & 80.70 \footnotesize{(3.24)} & 57.56 \footnotesize{(22.1)} & & 3.07 \footnotesize{(0.06)} & 10.11 \footnotesize{(1.18)} & \textbf{53.44} \footnotesize{(2.20)} & 8.14 \footnotesize{(0.27)} & 20.06 \footnotesize{(1.77)} \\
      & \cellcolor{gray!15}WGAN & \cellcolor{gray!15}\underline{30.2} \footnotesize{(0.83)} & \cellcolor{gray!15}66.85 \footnotesize{(8.19)} & \cellcolor{gray!15}\underline{510} \footnotesize{(25.0)} & \cellcolor{gray!15}\textbf{78.33} \footnotesize{(2.81)} & \cellcolor{gray!15}86.45 \footnotesize{(29.1)} & & \cellcolor{gray!15}\underline{3.01} \footnotesize{(0.04)} & \cellcolor{gray!15}13.82 \footnotesize{(0.62)} & \cellcolor{gray!15}\underline{57.61} \footnotesize{(2.64)} & \cellcolor{gray!15}\underline{8.05} \footnotesize{(0.22)} & \cellcolor{gray!15}22.50 \footnotesize{(3.19)} \\
      & \determodel & 49.8 \footnotesize{(1.00)} & 612 \footnotesize{(25.61)} & 958 \footnotesize{(32.3)} & 161 \footnotesize{(5.46)} & 2394 \footnotesize{(89.5)} & & 3.96 \footnotesize{(0.06)} & 42.9 \footnotesize{(0.90)} & 86.0 \footnotesize{(2.48)} & 14.1 \footnotesize{(0.40)} & 143.1 \footnotesize{(3.70)} \\
      & \cellcolor{gray!15}\gammamodel & \cellcolor{gray!15}35.8 \footnotesize{(0.65)} & \cellcolor{gray!15}31.3 \footnotesize{(3.62)} & \cellcolor{gray!15}539 \footnotesize{(23.3)} & \cellcolor{gray!15}85.7 \footnotesize{(2.88)} & \cellcolor{gray!15}246 \footnotesize{(43.2)} & & \cellcolor{gray!15}3.11 \footnotesize{(0.05)} & \cellcolor{gray!15}10.99 \footnotesize{(0.67)} & \cellcolor{gray!15}60.41 \footnotesize{(2.16)} & \cellcolor{gray!15}8.78 \footnotesize{(0.26)} & \cellcolor{gray!15}35.62 \footnotesize{(4.08)} \\
      \cmidrule(lr){2-7}\cmidrule(lr){9-13}
      \multirow{6}{*}{\rotatebox[origin=c]{90}{\shortstack[c]{$\sigW$ {\scriptsize $(\times 10^{\scalebox{1}[1]{-}5})$}}}}
      & \sigTPP & 361 \footnotesize{(34.0)} & \textbf{0.97} \footnotesize{(0.27)} & \textbf{0.28} \footnotesize{(0.26)} & \textbf{0.70} \footnotesize{(0.44)} & \textbf{0.73} \footnotesize{(0.88)} & \multirow{6}{*}{\rotatebox[origin=c]{90}{\shortstack[c]{CRPS {\scriptsize $(\times 10^{\scalebox{1}[1]{-}1})$}}}} & 5.54 \footnotesize{(0.12)} & 5.34 \footnotesize{(0.09)} & 0.58 \footnotesize{(0.02)} & 1.13 \footnotesize{(0.02)} & 3.06 \footnotesize{(0.12)} \\
      & \cellcolor{gray!15}VAE & \cellcolor{gray!15}\textbf{17.9} \footnotesize{(8.20)} & \cellcolor{gray!15}3.80 \footnotesize{(0.52)} & \cellcolor{gray!15}\underline{0.65} \footnotesize{(0.44)} & \cellcolor{gray!15}6.85 \footnotesize{(1.53)} & \cellcolor{gray!15}1.02 \footnotesize{(1.05)} & & \cellcolor{gray!15}\underline{5.10} \footnotesize{(0.12)} & \cellcolor{gray!15}\underline{4.65} \footnotesize{(0.09)} & \cellcolor{gray!15}\underline{0.36} \footnotesize{(0.01)} & \cellcolor{gray!15}\textbf{1.05} \footnotesize{(0.02)} & \cellcolor{gray!15}\textbf{2.73} \footnotesize{(0.12)} \\
      & DDPM & 1501 \footnotesize{(63)} & \underline{2.44} \footnotesize{(0.44)} & 7.49 \footnotesize{(1.74)} & 8.88 \footnotesize{(1.77)} & 1.66 \footnotesize{(1.05)} & & \textbf{4.80} \footnotesize{(0.11)} & \textbf{4.54} \footnotesize{(0.09)} & \textbf{0.34} \footnotesize{(0.01)} & \underline{1.12} \footnotesize{(0.02)} & \underline{2.77} \footnotesize{(0.11)} \\
      & \cellcolor{gray!15}WGAN & \cellcolor{gray!15}498 \footnotesize{(40.1)} & \cellcolor{gray!15}3.19 \footnotesize{(0.50)} & \cellcolor{gray!15}1.57 \footnotesize{(0.69)} & \cellcolor{gray!15}3.27 \footnotesize{(0.96)} & \cellcolor{gray!15}\underline{0.96} \footnotesize{(1.00)} & & \cellcolor{gray!15}5.79 \footnotesize{(0.12)} & \cellcolor{gray!15}5.00 \footnotesize{(0.10)} & \cellcolor{gray!15}0.36 \footnotesize{(0.01)} & \cellcolor{gray!15}1.15 \footnotesize{(0.02)} & \cellcolor{gray!15}2.93 \footnotesize{(0.11)} \\
      & \determodel & \underline{130} \footnotesize{(13.3)} & 3.41 \footnotesize{(0.46)} & 49.7 \footnotesize{(4.45)} & 24.5 \footnotesize{(2.82)} & 7.78 \footnotesize{(0.61)} & & 6.98 \footnotesize{(0.17)} & 6.51 \footnotesize{(0.13)} & 0.42 \footnotesize{(0.01)} & 1.49 \footnotesize{(0.02)} & 3.66 \footnotesize{(0.15)} \\
      & \cellcolor{gray!15}\gammamodel & \cellcolor{gray!15}571 \footnotesize{(38.1)} & \cellcolor{gray!15}6.61 \footnotesize{(0.80)} & \cellcolor{gray!15}12.2 \footnotesize{(3.44)} & \cellcolor{gray!15}\underline{3.04} \footnotesize{(1.15)} & \cellcolor{gray!15}10.37 \footnotesize{(1.38)} & & \cellcolor{gray!15}6.15 \footnotesize{(0.09)} & \cellcolor{gray!15}4.91 \footnotesize{(0.08)} & \cellcolor{gray!15}0.37 \footnotesize{(0.01)} & \cellcolor{gray!15}1.15 \footnotesize{(0.02)} & \cellcolor{gray!15}3.19 \footnotesize{(0.11)} \\
      \cmidrule(lr){2-7}\cmidrule(lr){9-13}
      \multirow{6}{*}{\rotatebox[origin=c]{90}{\shortstack[c]{{\footnotesize $\histintloss$} {\scriptsize $(\times 10^{\scalebox{1}[1]{-}2})$}}}}
      & \sigTPP & \underline{18.8} \footnotesize{(1.08)} & \textbf{7.9} \footnotesize{(0.63)} & \underline{5.3} \footnotesize{(0.91)} & \underline{3.6} \footnotesize{(0.90)} & \underline{14.5} \footnotesize{(1.12)} & \multirow{6}{*}{\rotatebox[origin=c]{90}{\shortstack[c]{{\footnotesize $\histITloss$} {\scriptsize $(\times 10^{\scalebox{1}[1]{-}2})$}}}} & 9.3 \footnotesize{(0.91)} & \textbf{2.5} \footnotesize{(0.52)} & 11.0 \footnotesize{(0.55)} & 9.7 \footnotesize{(0.45)} & \underline{5.3} \footnotesize{(1.85)} \\
      & \cellcolor{gray!15}VAE & \cellcolor{gray!15}\textbf{12.8} \footnotesize{(1.13)} & \cellcolor{gray!15}9.3 \footnotesize{(0.69)} & \cellcolor{gray!15}10.2 \footnotesize{(1.09)} & \cellcolor{gray!15}4.3 \footnotesize{(0.68)} & \cellcolor{gray!15}\textbf{7.3} \footnotesize{(1.02)} & & \cellcolor{gray!15}\textbf{8.5} \footnotesize{(0.64)} & \cellcolor{gray!15}7.4 \footnotesize{(0.74)} & \cellcolor{gray!15}\textbf{5.8} \footnotesize{(0.52)} & \cellcolor{gray!15}5.1 \footnotesize{(0.49)} & \cellcolor{gray!15}\textbf{4.0} \footnotesize{(0.78)} \\
      & DDPM & 19.2 \footnotesize{(0.61)} & \underline{9.2} \footnotesize{(0.64)} & 6.2 \footnotesize{(1.04)} & \textbf{3.2} \footnotesize{(0.63)} & 17.0 \footnotesize{(1.11)} & & 11.9 \footnotesize{(0.75)} & 7.1 \footnotesize{(0.80)} & 10.9 \footnotesize{(0.67)} & \underline{5.0} \footnotesize{(0.52)} & 6.5 \footnotesize{(0.79)} \\
      & \cellcolor{gray!15}WGAN & \cellcolor{gray!15}23.5 \footnotesize{(1.12)} & \cellcolor{gray!15}13.3 \footnotesize{(0.79)} & \cellcolor{gray!15}\textbf{3.8} \footnotesize{(0.86)} & \cellcolor{gray!15}3.6 \footnotesize{(0.86)} & \cellcolor{gray!15}15.2 \footnotesize{(1.13)} & & \cellcolor{gray!15}\underline{9.3} \footnotesize{(0.55)} & \cellcolor{gray!15}6.5 \footnotesize{(0.67)} & \cellcolor{gray!15}\underline{8.9} \footnotesize{(0.60)} & \cellcolor{gray!15}7.8 \footnotesize{(0.44)} & \cellcolor{gray!15}6.7 \footnotesize{(1.23)} \\
      & \determodel & 21.6 \footnotesize{(1.10)} & 11.5 \footnotesize{(0.60)} & 41.6 \footnotesize{(3.31)} & 18.0 \footnotesize{(1.33)} & 25.3 \footnotesize{(0.93)} & & 32.9 \footnotesize{(1.97)} & 72.6 \footnotesize{(3.56)} & 80.9 \footnotesize{(1.98)} & 56.9 \footnotesize{(6.67)} & 45.5 \footnotesize{(1.62)} \\
      & \cellcolor{gray!15}\gammamodel & \cellcolor{gray!15}27.2 \footnotesize{(1.12)} & \cellcolor{gray!15}11.5 \footnotesize{(0.60)} & \cellcolor{gray!15}16.1 \footnotesize{(1.08)} & \cellcolor{gray!15}7.1 \footnotesize{(0.73)} & \cellcolor{gray!15}31.8 \footnotesize{(1.20)} & & \cellcolor{gray!15}10.0 \footnotesize{(0.83)} & \cellcolor{gray!15}\underline{5.7} \footnotesize{(0.75)} & \cellcolor{gray!15}15.1 \footnotesize{(1.26)} & \cellcolor{gray!15}\textbf{4.9} \footnotesize{(0.41)} & \cellcolor{gray!15}13.2 \footnotesize{(0.85)} \\
      \cmidrule(lr){2-7}\cmidrule(lr){9-13}
      \multirow{6}{*}{\rotatebox[origin=c]{90}{\shortstack[c]{ACD {\scriptsize $(\times 10^{\scalebox{1}[1]{-}2})$}}}}
      & \sigTPP & \underline{2.52} \footnotesize{(0.53)} & \textbf{2.52} \footnotesize{(0.49)} & \textbf{1.02} \footnotesize{(0.28)} & \textbf{2.38} \footnotesize{(0.41)} & 4.28 \footnotesize{(0.61)} & \multirow{6}{*}{\rotatebox[origin=c]{90}{\shortstack[c]{PCD {\scriptsize $(\times 10^{\scalebox{1}[1]{-}2})$}}}} & \textbf{5.58} \footnotesize{(0.42)} & \textbf{10.35} \footnotesize{(0.27)} & 5.90 \footnotesize{(0.25)} & \underline{5.98} \footnotesize{(0.23)} & 13.02 \footnotesize{(0.68)} \\
      & \cellcolor{gray!15}VAE & \cellcolor{gray!15}3.60 \footnotesize{(0.75)} & \cellcolor{gray!15}\underline{3.94} \footnotesize{(0.51)} & \cellcolor{gray!15}2.53 \footnotesize{(0.40)} & \cellcolor{gray!15}\underline{2.42} \footnotesize{(0.43)} & \cellcolor{gray!15}\textbf{2.44} \footnotesize{(0.49)} & & \cellcolor{gray!15}\underline{6.10} \footnotesize{(0.54)} & \cellcolor{gray!15}\underline{10.50} \footnotesize{(0.30)} & \cellcolor{gray!15}\underline{5.84} \footnotesize{(0.27)} & \cellcolor{gray!15}\textbf{5.98} \footnotesize{(0.22)} & \cellcolor{gray!15}\textbf{12.31} \footnotesize{(0.70)} \\
      & DDPM & 10.9 \footnotesize{(0.76)} & 5.36 \footnotesize{(0.51)} & \underline{1.40} \footnotesize{(0.35)} & 2.62 \footnotesize{(0.44)} & \underline{3.35} \footnotesize{(0.60)} & & 12.47 \footnotesize{(0.79)} & 11.12 \footnotesize{(0.33)} & 5.86 \footnotesize{(0.25)} & 6.03 \footnotesize{(0.22)} & \underline{12.80} \footnotesize{(0.68)} \\
      & \cellcolor{gray!15}WGAN & \cellcolor{gray!15}\textbf{1.45} \footnotesize{(0.46)} & \cellcolor{gray!15}4.85 \footnotesize{(0.51)} & \cellcolor{gray!15}2.11 \footnotesize{(0.38)} & \cellcolor{gray!15}3.25 \footnotesize{(0.45)} & \cellcolor{gray!15}5.59 \footnotesize{(0.46)} & & \cellcolor{gray!15}6.96 \footnotesize{(0.45)} & \cellcolor{gray!15}10.80 \footnotesize{(0.30)} & \cellcolor{gray!15}\textbf{5.83} \footnotesize{(0.26)} & \cellcolor{gray!15}6.31 \footnotesize{(0.22)} & \cellcolor{gray!15}13.19 \footnotesize{(0.69)} \\
      & \determodel & 62.5 \footnotesize{(0.76)} & 54.8 \footnotesize{(0.51)} & 74.2 \footnotesize{(0.41)} & 60.7 \footnotesize{(0.50)} & 64.40 \footnotesize{(0.62)} & & 70.32 \footnotesize{(0.81)} & 57.64 \footnotesize{(0.35)} & 94.9 \footnotesize{(0.34)} & 43.4 \footnotesize{(0.31)} & 72.50 \footnotesize{(0.80)} \\
      & \cellcolor{gray!15}\gammamodel & \cellcolor{gray!15}16.9 \footnotesize{(0.76)} & \cellcolor{gray!15}8.02 \footnotesize{(0.51)} & \cellcolor{gray!15}2.91 \footnotesize{(0.41)} & \cellcolor{gray!15}3.52 \footnotesize{(0.47)} & \cellcolor{gray!15}4.62 \footnotesize{(0.61)} & & \cellcolor{gray!15}18.80 \footnotesize{(0.81)} & \cellcolor{gray!15}11.86 \footnotesize{(0.34)} & \cellcolor{gray!15}5.85 \footnotesize{(0.27)} & \cellcolor{gray!15}6.25 \footnotesize{(0.22)} & \cellcolor{gray!15}13.33 \footnotesize{(0.69)} \\
      \bottomrule
    \end{tabular}
  \end{adjustbox}
\end{table}

We now focus our empirical analysis on real-world tasks under the same discrepancy metrics.
The results are enumerated in Table~\ref{table::real_results_dist} and, again, indicate strong performance for \sigTPP~(see Table~\ref{table::geo_improvement_vs_deter_grouped}).
It achieves the best $\WAS$ and $\sigW$ on $4/5$ datasets. 
This is confirmed by the relative scores, for which it obtains the lowest $\mathcal{E}$, $\WAS$, and $\sigW$ among all models. The second-best model on real-world tasks is VAE, which is outperformed on geometric average by $6\%$ across all metrics and datasets. 
The contrast with the synthetic setting is informative. 
Synthetic data is generated by a known mechanism, so distributional recovery is assessed under correct specification. 
The real-world datasets instead combine heterogeneous trajectories, variable observation windows, and latent contextual effects, so the empirical path distribution is only an approximation to a single data-generating law. 
This makes path-level comparison intrinsically harder and likely explains why the gains of \sigTPP~are smaller and less uniform than in the synthetic experiments. 
Among the distributional metrics, $\WAS$ appears to be the most stable across datasets, showing the smallest inter-model variability and the most consistent advantage for \sigTPP. 

The remaining summary-statistic discrepancies provide a more nuanced comparison. 
As on the synthetic datasets, $\histITloss$, $\histintloss$, ACD, and PCD exhibit larger sampling variability, and several methods have overlapping standard errors. 
Across these metrics, \sigTPP~is the best method in 7/20 metric-dataset pairs and lies within a standard error of the best method in an additional 3/20 cases, covering half of all pairs.
Thus, although CRPS consistently favours the conditional baselines, with \sigTPP\ trailing VAE, WGAN, and DDPM by $19\%$, $11\%$, and $21\%$, respectively, it remains competitive on real-world data.

Aggregating across datasets and metrics provides a similar qualitative conclusion. 
In pairwise comparisons, \sigTPP~improves over WGAN by $25\%$, VAE by $6\%$, and DDPM by $30\%$. It has the lowest relative score on 5 out of 8 metrics and is second-best on 2, being outperformed only on CRPS. 
Figure~\ref{fig::qq_epdf_qualitative}(b) supports this interpretation on SO, a commonly studied task in the literature. The interarrival times PDF shows that \sigTPP~captures the overall distribution shape without overweighting the tail as DDPM does. The intensity profile is also captured accurately. 
Appendix Figures~\ref{fig::eq_qualitative_summary}-\ref{fig::yelp_qualitative_summary} provide dataset-level qualitative comparisons of the 6 models and further illustrate these trends.
Average ranks corroborate these observations (Table~\ref{table::avg_rank_by_metric_grouped}). \sigTPP~achieves the best average rank of $2.25$, followed by VAE ($2.45$). Relative scoring reaffirms these outcomes (see Figure~\ref{fig::normalised_score_results_real}).
Overall, the real-world results mirror the synthetic results, but with smaller margins and greater dataset-level variability. 
The persistence of improvements in $\WAS$, $\sigW$, pairwise comparisons, and average ranks suggests that the signature-based objective continues to provide useful distributional alignment under misspecification.
We provide the relative score plot in Figure~\ref{fig::normalised_score_results_all} for a comparison across all datasets.

Finally, Table~\ref{table::pointwise_results_combined} illustrates why pointwise errors are not suitable primary metrics for generative TPP evaluation. 
The deterministic baseline \determodel~outputs a single prediction for each event time and is therefore directly optimised for pointwise accuracy rather than distributional fidelity. 
Although it performs poorly on most path-level and distributional metrics, in some cases by several orders of magnitude, it achieves rank $1.0$ on MAE, outperforming every distributional model on every synthetic dataset. 
Even when using the optimal-predictor variants \MAEmed~and \MSEmean (see definitions in Appendix~\ref{sec::metrics}), \determodel~remains highly competitive, with average ranks of $2.4$ on real-world tasks and $3.5$ on synthetic tasks for \MAEmed, despite producing no predictive distribution. 
These results show that MAE and MSE on event times primarily assess point prediction quality, not generative quality. 
Hence, we do not use them as metrics for generative TPP evaluation, and instead recommend path-level distributional distances. 
For forecasting-focused tasks, longer-horizon predictive metrics may also be appropriate.

\begin{figure}[!b]
  \centering
      \centering
      \includegraphics[width=\textwidth]{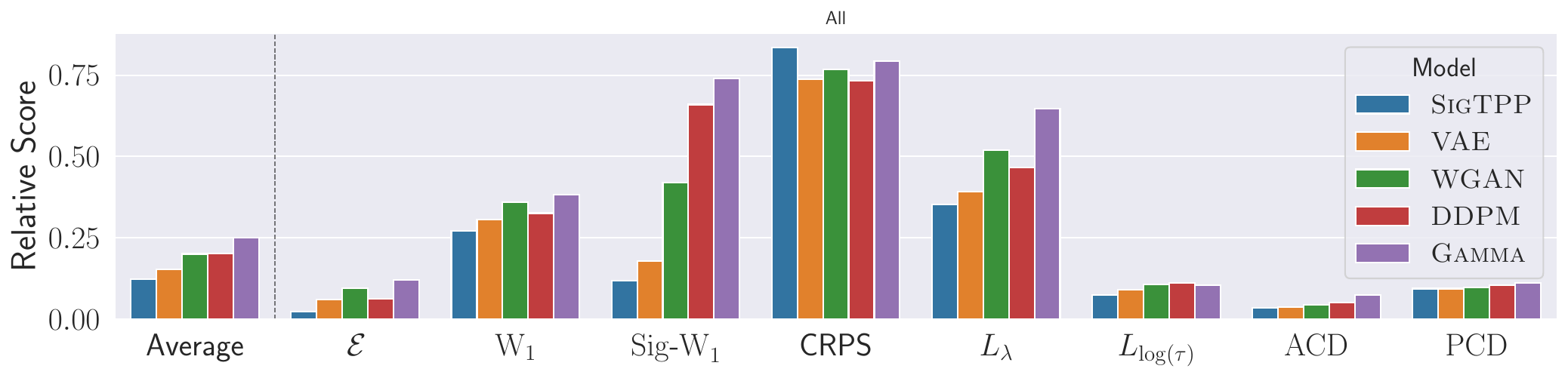}
\caption{
Relative scores with respect to \determodel, on all datasets (see Table~\ref{table::geo_improvement_vs_deter_grouped} for the actual values). 
The leftmost group shows the average relative score across all metrics, followed by the metric-wise scores for each metric (see Appendix~\ref{sec::metrics} for metric definitions). A lower score is better.
}
\label{fig::normalised_score_results_all}
\end{figure}

\subsection{Ablation of the Signature Degree}
\begin{table}[h]
\centering
\footnotesize
\renewcommand{\arraystretch}{1.05}
\setlength{\tabcolsep}{3.5pt}
\setlength{\aboverulesep}{0.5pt}
\setlength{\belowrulesep}{0.5pt}
\caption{Ablation over signature truncation level $M$ on TX and SO. Lower is better~$(\downarrow)$ for all metrics.}
\label{tab::ablation_M}
\vspace{0.4em}
\begin{adjustbox}{max width=\linewidth}
\begin{tabular}{l *{4}{>{\columncolor{gray!10}}r} @{\hspace{0.8em}} *{4}{r} @{\hspace{0.8em}} *{4}{>{\columncolor{gray!10}}r} @{\hspace{0.8em}} *{4}{r}}
\toprule
& \multicolumn{4}{c}{$\mathcal{E}$ ($\times 10^{-3}$)}
& \multicolumn{4}{c}{$\WAS$ ($\times 10^{-2}$)}
& \multicolumn{4}{c}{$\sigW$ ($\times 10^{-5}$)}
& \multicolumn{4}{c}{CRPS ($\times 10^{-1}$)} \\
\cmidrule(lr){2-5}\cmidrule(lr){6-9}\cmidrule(lr){10-13}\cmidrule(lr){14-17}
\multicolumn{1}{c}{$M$} & \multicolumn{1}{c}{2} & \multicolumn{1}{c}{3} & \multicolumn{1}{c}{5} & \multicolumn{1}{c}{6} & \multicolumn{1}{c}{2} & \multicolumn{1}{c}{3} & \multicolumn{1}{c}{5} & \multicolumn{1}{c}{6} & \multicolumn{1}{c}{2} & \multicolumn{1}{c}{3} & \multicolumn{1}{c}{5} & \multicolumn{1}{c}{6} & \multicolumn{1}{c}{2} & \multicolumn{1}{c}{3} & \multicolumn{1}{c}{5} & \multicolumn{1}{c}{6} \\
\cmidrule(lr){2-5}\cmidrule(lr){6-9}\cmidrule(lr){10-13}\cmidrule(lr){14-17}
TX & 91.54 & 77.81 & 78.29 & 75.60 & 8.984 & 7.832 & 7.759 & 7.881 & 0.451 & 2.299 & 0.855 & 3.376 & 1.278 & 1.264 & 1.177 & 1.180 \\
\midrule
SO & 12.91 &  9.68 &  4.63 &  3.64 & 6.552 & 7.190 & 5.671 & 5.369 & 0.149 & 0.308 & 0.856 & 0.489 & 5.371 & 5.352 & 5.357 & 6.164 \\
\bottomrule
\end{tabular}
\end{adjustbox}
\end{table}

We isolate in Table~\ref{tab::ablation_M} the effect of the signature truncation order $M$ on TX and SO, 2 datasets with different statistical properties and sizes (see Table~\ref{table::datasets_stats}). 
Although the number of signature coordinates grows as $\mathcal{O}(d^M)$, increasing $M$ gives dataset-dependent gains. 
On TX, most of the improvement occurs from $M=2$ to $M=3$; larger values produce only small changes in $\WAS$ and CRPS. 
On SO, higher orders improve energy distance and $\WAS$, but this comes at the cost of worse CRPS at $M=6$. 
Thus, larger truncation orders can improve global distributional matching, but they do not uniformly improve calibration or all evaluation metrics. 

To understand that better, note that increasing $M$ changes the set of coordinates used for training rather than simply refining the same discrepancy. 
Low-order coordinates often dominate the signature scale, while higher-order iterated integrals decay factorially for bounded-variation paths, despite the exponential growth in their number~\citep{lyons2007differential}. 
Consequently, large $M$ can introduce weakly varying or poorly conditioned features, increasing estimation noise without necessarily improving the learned path distribution. 
This is consistent with the dead-degree statistics in Table~\ref{table::sig-degree-datasets}.

Overall, the ablation supports $M=3$ for the evaluation and suggests that using a degree higher than 8 does not yield further performance.  

\section{Conclusion}

This paper introduces a common pathwise framework for generative modelling and evaluation of TPPs.
The interarrival embedding $\Phi$ provides a stable, injective lift from \clag counting paths to continuous bounded-variation paths (Theorem~\ref{thrm::stability}), making signature-based tools applicable to discrete event sequences.
Building on $\Phi$, \sigTPP~optimises a global signature Wasserstein loss on whole paths, rather than a sum of local conditional losses.
The same counting-path metric $\dtppsym$ supports the energy distance ($\mathcal{E}$), $\WAS$, as well as $\sigW$ through the embedding.
It also clarifies a practical failure mode of pointwise evaluation: MAE can reward deterministic regressors even when they fail to capture distributional variation.
Across 9 datasets, \sigTPP~achieves the best average rank on 6 of 8 evaluation criteria, best average relative score on 7 of them, and outperforms the other baselines by at least $19\%$ on average across metrics. The gains are largest on pathwise distributional criteria, while diffusion and variational baselines remain competitive on marginal calibration. The limitations of our work are discussed in Appendix~\ref{sect::limitations}.

\section*{Acknowledgements}
NCK is supported by EPSRC grants EP/T517793/1 and EP/W524335/1. VL would like to thank all levels of support from grant EP/X031276/1 (EPSRC) and the SOFAIR Lab (UKRI).

{
% let's reduce the font size of the biblio part
\small
\bibliographystyle{plainnat}
\bibliography{biblio}
}

% DELETE THIS (no need for a NeurIPS checklist)
% \newpage
% \input{checklist}

\newpage
\appendix
\section*{Appendix}

% numberings should start with S
% numbers should reset for the supplementary material / appendix
\setcounter{equation}{0}
\setcounter{figure}{0}
\setcounter{table}{0}
\setcounter{theorem}{0}
\renewcommand{\theequation}{S\arabic{equation}}
\renewcommand{\thefigure}{S\arabic{figure}}
\renewcommand{\thetable}{S\arabic{table}}
\renewcommand{\thetheorem}{S\arabic{theorem}}
\renewcommand{\theHequation}{S\arabic{equation}}
\renewcommand{\theHfigure}{S\arabic{figure}}
\renewcommand{\theHtable}{S\arabic{table}}
\renewcommand{\theHtheorem}{S\arabic{theorem}}

\section{A Pathwise Metric on Counting Paths}
\subsection{Distances induced by the Pathwise Metric on Counting Paths}
\label{subsec::distances_proba_with_dtpp}
\begin{definition}[Metric of Negative Type]
    \label{def::negative_type}
    A metric space $(X, d)$ is of negative type if, for every $n \in \mathbb{N}$, all points $x_1, \dots, x_n \in X$, and all real coefficients $a_1, \dots, a_n\in\mathbb{R}$ satisfying $\sum_{i=1}^n a_i = 0$, the following inequality holds:
    \begin{equation}
        \sum_{i,j=1}^n a_i a_j \, d(x_i, x_j) \leq 0 \, .
    \end{equation}
\end{definition}

Geometrically, a metric $d$ is of negative type if and only if the metric space $(X, \sqrt{d})$ can be embedded isometrically into a Hilbert space~\cite{schoenberg_metric_1938}, i.e. if there exists a map $\Phi: X \to \mathcal{H}$ into a Hilbert space $\mathcal{H}$ such that
\begin{equation}
    d(x, y) = \norm{\Phi(x) - \Phi(y)}_{\mathcal{H}}^2 \, .
\end{equation}

\begin{proposition}[$L^1$ is of Negative Type~\cite{bretagnolle_lois_1967, deza_geometry_1997, lyons2013distance}]
    \label{prop::l1negtype}
    Let $(\Omega, \mathcal{F}, \mu)$ be a measure space.
    Then the metric space $(L^1(\mu), \norm{\cdot}_{L^1})$ is of negative type.
    Equivalently, for all $f_1,\dots,f_n \in L^1(\mu)$ and all $a_1,\dots,a_n \in \R$ such that $\sum_i a_i = 0$, we have
    \begin{equation}
        \sum_{i,j=1}^n a_i a_j\, \norm{f_i - f_j}_{L^1} \le 0 \, .
    \end{equation}
\end{proposition}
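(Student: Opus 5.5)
Since $\norm{f_i-f_j}_{L^1}=\int_\Omega \abs{f_i(\omega)-f_j(\omega)}\,\mu(\mes\omega)$, I will reduce the statement to a pointwise inequality on $\R$ and then integrate it. The key identity is the layer-cake representation of the absolute difference of two reals:
\begin{equation*}
\abs{u-v} = \int_{\R} \bigl(\11charac_{\{u>s\}}-\11charac_{\{v>s\}}\bigr)^2 \mes s \, ,
\end{equation*}
which holds because the integrand equals $1$ exactly for $s$ between $\min(u,v)$ and $\max(u,v)$, and $0$ otherwise. This identity embeds $(\R,\abs{\cdot})$ into $L^2(\R)$ through $u\mapsto \11charac_{(-\infty,u)}$, with the metric realised as a squared Hilbert norm.

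\textbf{Pointwise inequality.} Fix $\omega$ and $s$, and write $g_i:=\11charac_{\{f_i(\omega)>s\}}$. Expanding the square gives
\begin{equation*}
\sum_{i,j} a_i a_j (g_i-g_j)^2 = 2\Bigl(\sum_i a_i\Bigr)\Bigl(\sum_j a_j g_j^2\Bigr) - 2\Bigl(\sum_i a_i g_i\Bigr)^2 = -2\Bigl(\sum_i a_i g_i\Bigr)^2\le 0 ,
\end{equation*}
where the first term vanishes because $\sum_i a_i=0$.

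\textbf{Integration.} I then integrate in $s$ over $\R$ and in $\omega$ against $\mu$. By the layer-cake identity,
\begin{equation*}
\sum_{i,j}a_ia_j\norm{f_i-f_j}_{L^1} = -2\int_\Omega\int_\R \Bigl(\sum_i a_i \11charac_{\{f_i(\omega)>s\}}\Bigr)^2\mes s\,\mu(\mes\omega)\le 0 .
\end{equation*}
The interchange of the finite sum with the integrals is justified by Tonelli: each summand $\abs{a_ia_j}\,(g_i-g_j)^2$ is nonnegative and jointly measurable, and its integral is $\abs{a_ia_j}\norm{f_i-f_j}_{L^1}<\infty$.

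The main point requiring care is this measurability and finiteness bookkeeping, so that the finite sum can be split and the double integral taken. It is routine, since $(\omega,s)\mapsto \11charac_{\{f(\omega)>s\}}$ is measurable on $\Omega\times\R$ and every integral involved is finite. No $\sigma$-finiteness issue arises either: each $f_i\in L^1$ is supported on a $\sigma$-finite set, so we may restrict $\mu$ to the union of these finitely many supports.
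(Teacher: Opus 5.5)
Your proof is correct. It takes a different route simply because the paper gives no argument and only cites the result.

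The classical route in Bretagnolle--Dacunha-Castelle--Krivine goes through stable laws. The map $x\mapsto e^{-t\norm{x}_{L^1}}$ is positive definite for every $t>0$, being the characteristic functional of a symmetric $1$-stable (Cauchy) random measure. Schoenberg's theorem then turns this into conditional negative definiteness of $\norm{x-y}_{L^1}$. That route handles all $L^p$ with $1\le p\le 2$ at once, but it needs that machinery.

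Your layer-cake identity is instead the continuous version of the Deza--Laurent cut-cone argument. You write the $L^1$ metric as a superposition, over $(\omega,s)$, of cut semimetrics $(\mathbf{1}_{\{f(\omega)>s\}}-\mathbf{1}_{\{g(\omega)>s\}})^2$, each trivially of negative type. In doing so you exhibit explicitly the isometric embedding of $(L^1,\norm{\cdot}_{L^1}^{1/2})$ into the Hilbert space $L^2(\mu\otimes\mes s)$. The Schoenberg characterisation quoted just before the proposition only asserts that such an embedding exists. The paper only ever uses $p=1$, for $(\mathcal N,\dtppsym)\subset L^1([0,\Tmax])$, so your elementary argument is all that is needed.

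Two minor points. First, $\mathbf{1}_{(-\infty,u)}$ is not itself in $L^2(\R)$, so the embedding should be $u\mapsto\mathbf{1}_{(-\infty,u)}-\mathbf{1}_{(-\infty,0)}$; differences, and hence your identity, are unchanged. Second, the product-measure, Tonelli and $\sigma$-finiteness bookkeeping can be dropped entirely. Integrating your pointwise identity in $s$ alone already gives $\sum_{i,j}a_ia_j\abs{f_i(\omega)-f_j(\omega)}\le 0$ for every $\omega$ (this is negative type of $\R$, using finite-valued representatives). Integrating in $\omega$ then uses only linearity of the integral on the integrable functions $\abs{f_i-f_j}$.
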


\begin{proposition}[Subspace Inheritance~\cite{bretagnolle_lois_1967, deza_geometry_1997}]
    \label{prop::negtypeinheritance}
    Let $(X,d)$ be a metric space of negative type, and let $Y \subset X$.
    Then the restricted metric space $(Y, d|_{Y\times Y})$ is also of negative type.
\end{proposition}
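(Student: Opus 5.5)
The argument works directly from Definition~\ref{def::negative_type}. Let $(X,d)$ be of negative type and $Y\subset X$. The restriction $d|_{Y\times Y}$ is again a metric, since symmetry, positivity, definiteness and the triangle inequality are universally quantified over points and therefore survive restriction to a subset.

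The negative-type condition is also universally quantified: it is an inequality required for every finite configuration of points with zero-sum coefficients. So we fix $n\in\mathbb{N}$, points $y_1,\dots,y_n\in Y$, and reals $a_1,\dots,a_n$ with $\sum_i a_i=0$. Since $Y\subset X$, each $y_i$ is also a point of $X$, and $d|_{Y\times Y}(y_i,y_j)=d(y_i,y_j)$ for all $i,j$. Applying the negative-type inequality in $X$ to this same configuration gives
\begin{equation*}
\sum_{i,j=1}^n a_i a_j\, d|_{Y\times Y}(y_i,y_j)=\sum_{i,j=1}^n a_i a_j\, d(y_i,y_j)\le 0 ,
\end{equation*}
which is exactly the condition for $(Y,d|_{Y\times Y})$.

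Alternatively, one can use the Schoenberg characterisation quoted above. If $\iota\colon X\to\mathcal{H}$ satisfies $d(x,y)=\norm{\iota(x)-\iota(y)}_{\mathcal{H}}^2$, then the restriction $\iota|_Y$ is an embedding with the same property for $d|_{Y\times Y}$.

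There is no real obstacle here. The only point needing care is checking that the definition contains no existential or global requirement on $X$, such as completeness or linear structure, that might fail for $Y$. It does not, which matters for the intended application: $\mathcal N$ is not a linear subspace of $L^1([0,\Tmax])$, but it is isometrically a subset of it via $\eta\mapsto(t\mapsto\eta_t)$. Combined with Proposition~\ref{prop::l1negtype}, this makes $(\mathcal N,\dtppsym)$ of negative type.
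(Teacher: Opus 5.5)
Your proof is correct: a finite configuration of points in $Y$ with zero-sum coefficients is also such a configuration in $X$, and the restricted metric agrees with $d$ on it, so the defining inequality of Definition~\ref{def::negative_type} carries over unchanged. The paper does not prove this proposition but cites it from the literature; your direct argument is the standard justification behind that citation, and you apply it to $\mathcal N$ exactly as the paper does in Theorem~\ref{thrm:neg_type}.
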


\begin{theorem}[Negative Type Property]
\label{thrm:neg_type}
The metric space $(\mathcal{N}, \dtppsym)$ is of negative type.
\end{theorem}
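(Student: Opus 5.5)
The plan is to realise $(\mathcal N,\dtppsym)$ isometrically inside $L^1([0,\Tmax])$ and then apply Proposition~\ref{prop::l1negtype} together with Proposition~\ref{prop::negtypeinheritance}. Concretely, define $\iota\colon\mathcal N\to L^1([0,\Tmax],\mathrm{Leb})$ by $\iota(\eta):=(t\mapsto\eta_t)$.

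The first step is to check that $\iota$ is well defined. Each $\eta\in\mathcal N$ is nondecreasing, \clag, with $0\le\eta_t\le\eta_{\Tmax}<\infty$. Hence $\eta$ is bounded and Borel measurable, being a finite step function, and $\int_0^{\Tmax}\abs{\eta_t}\mes t\le\Tmax\,\eta_{\Tmax}<\infty$. The second step is to note that, by Definition~\ref{def::tppdistance}, $\norm{\iota(\eta)-\iota(\xi)}_{L^1}=\int_0^{\Tmax}\abs{\eta_t-\xi_t}\mes t=\dtpp{\eta}{\xi}$, so $\iota$ preserves distances.

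The third step is the one requiring care: $\iota$ must be injective. Otherwise $\iota(\mathcal N)$ would only be a subset of equivalence classes and $\dtppsym$ would merely be a pseudometric. Suppose $\eta\neq\xi$, and let $t^\ast$ be a point where $\eta_{t^\ast}\neq\xi_{t^\ast}$. Right-continuity of both paths, together with the fact that they are step functions with finitely many jumps, gives some $\varepsilon>0$ such that $\eta$ and $\xi$ are both constant on $[t^\ast,t^\ast+\varepsilon)$. If $t^\ast=\Tmax$, the absence of a jump at $\Tmax$ lets us instead use an interval $(\Tmax-\varepsilon,\Tmax]$. In either case $\abs{\eta_t-\xi_t}\ge1$ on a set of positive Lebesgue measure, so $\dtpp{\eta}{\xi}>0$. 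Consequently $\iota$ is an isometric bijection onto $Y:=\iota(\mathcal N)\subset L^1$.

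Finally, fix $x_1,\dots,x_n\in\mathcal N$ and $a_i\in\R$ with $\sum_i a_i=0$. Then
\begin{equation*}
\sum_{i,j}a_ia_j\,\dtpp{x_i}{x_j}=\sum_{i,j}a_ia_j\norm{\iota(x_i)-\iota(x_j)}_{L^1}\le0,
\end{equation*}
where the inequality is Proposition~\ref{prop::l1negtype} applied to $f_i=\iota(x_i)$. Equivalently, $Y$ is of negative type by Proposition~\ref{prop::negtypeinheritance}, and this property transfers to $\mathcal N$ through the isometry. No real obstacle is expected beyond the injectivity and measurability bookkeeping, since the negative-type property of $L^1$ is taken as given.
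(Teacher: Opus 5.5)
Your proposal is correct and follows the paper's proof: identify each counting path with $t\mapsto\eta_t$ to get an isometric embedding into $L^1([0,\Tmax])$, then invoke Proposition~\ref{prop::l1negtype} and Proposition~\ref{prop::negtypeinheritance}. Your injectivity check is sound. It is not needed for the inequality itself, since applying Proposition~\ref{prop::l1negtype} to $f_i=\iota(x_i)$ already gives it; it only confirms that $\dtppsym$ is a genuine metric, which the paper asserts separately in the main text.
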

\begin{proof}
    Identifying each $\eta \in \mathcal N$ with its counting path $t \mapsto \eta_t$ gives an isometric embedding of $(\mathcal N,\dtppsym)$ into $L^1([0,\Tmax])$, since $\dtpp{\eta}{\xi}=\norm{\eta-\xi}_{L^1}$. Proposition~\ref{prop::l1negtype} states that $L^1$ is of negative type, and Proposition~\ref{prop::negtypeinheritance} shows that negative type is inherited by subspaces. Therefore, $(\mathcal N,\dtppsym)$ is of negative type.
\end{proof}

\subsection{Direct Integration}
\label{app::proof_tpp_metric_int}

We begin with the following exact discrete formulation of $\dtppsym$ for efficient computation.
\begin{proposition}[Discrete Computation]
\label{prop::discrete_dtpp}
    Let $\eta, \xi \in \mathcal N$ with event times $(t_k)_{k=1}^m$ and $(s_k)_{k=1}^n$. Assuming without loss of generality that $m \leq n$, the distance is given by
    \begin{equation}
        \label{eq::discrete_dtpp}
        \dtpp{\eta}{\xi} = \sum_{k=1}^{m} \abs{t_k - s_k} + (n - m)\Tmax - \sum_{k=m+1}^{n} s_{k} \, .
    \end{equation}
\end{proposition}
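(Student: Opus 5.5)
The plan is to use the layer-cake (level-set) representation of counting paths and integrate level by level. For any $t\in[0,\Tmax]$ I write
\[
\eta_t=\sum_{k\ge 1}\11charac\{t_k\le t\},\qquad \xi_t=\sum_{k\ge 1}\11charac\{s_k\le t\},
\]
with the convention that $t_k:=\Tmax$ for $k>m$ and $s_k:=\Tmax$ for $k>n$. Under this convention the indicator $\11charac\{t_k\le t\}$ vanishes on $[0,\Tmax)$ for every padded index. Because both sums are finite, I can write
\[
\eta_t-\xi_t=\sum_{k=1}^{n}\bigl(\11charac\{t_k\le t\}-\11charac\{s_k\le t\}\bigr).
\]

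The key observation is that the summands all share the same sign at every fixed $t$, so no cancellation is lost when the absolute value is moved inside the sum. Suppose $\eta_t=a$ and $\xi_t=b$ with $a\ge b$. Since the sequences are increasing, $\11charac\{t_k\le t\}=1$ exactly for $k\le a$, and $\11charac\{s_k\le t\}=1$ exactly for $k\le b$. Each summand is therefore $1$ for $b<k\le a$ and $0$ otherwise. The case $a<b$ is symmetric, with every summand in $\{0,-1\}$. This gives the pointwise identity
\[
\abs{\eta_t-\xi_t}=\sum_{k=1}^n\abs{\11charac\{t_k\le t\}-\11charac\{s_k\le t\}}=\sum_{k=1}^n \11charac\{\min(t_k,s_k)\le t<\max(t_k,s_k)\}
\]
for almost every $t$. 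This same-sign step is the main point of the argument. It is exactly the monotone-coupling fact that matches the $k$-th event with the $k$-th event, and it should be stated carefully, including at the jump times themselves, which form a null set and so do not affect the integral.

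Next I integrate over $[0,\Tmax]$ and apply Tonelli's theorem, which is legitimate because the sum is finite and the terms are nonnegative. The $k$-th term contributes $\abs{t_k-s_k}$. For $k\le m$ both times are genuine event times. For $m<k\le n$ the term is $\abs{\Tmax-s_k}=\Tmax-s_k$, because $s_k<\Tmax$. Summing over $k$ gives
\[
\sum_{k=1}^m\abs{t_k-s_k}+(n-m)\Tmax-\sum_{k=m+1}^n s_k,
\]
which is Eq.~\eqref{eq::discrete_dtpp}. The case $m\ge n$ follows by symmetry of $\dtppsym$.

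Finally, I note that this padding with $\Tmax$ is the same device as the Dirac masses at $\Tmax$ mentioned in the main text. This suggests that the identity above is also what underlies the Wasserstein-1 interpretation of $\dtppsym$.
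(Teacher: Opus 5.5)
Your proof is correct and follows essentially the same route as the paper: pair the events by rank, pad the shorter sequence, show that at every time the nonzero summands of $\eta_t-\xi_t$ share a common sign so the absolute value passes through the sum, and integrate the resulting indicators of $[t_k\wedge s_k,\,t_k\vee s_k)$. The differences are cosmetic: you pad with $\Tmax$ rather than $+\infty$, so your representation holds only on $[0,\Tmax)$, which is harmless since $\{\Tmax\}$ is a null set; and you read off the common sign directly from $\eta_t=a\ge b=\xi_t$ rather than through the paper's contradiction argument.
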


\begin{proof}
We prove the discrete expression from the integral representation, as written in Eq.~\ref{eq::discrete_dtpp}. Assuming $m \leq n$, we prove that
\begin{equation}
    \dtpp{\eta}{\xi}
    = \sum_{i=1}^{m} \abs{t_i - s_i}
    + (n - m) \Tmax
    - \sum_{i=m+1}^{n} s_{i} \, .
\end{equation}
Let
\begin{equation}
    f(t) := \eta_t - \xi_t, \quad I(\Tmax) := \int_0^{\Tmax} \abs{\eta_t-\xi_t}\,\mes t
    = \int_0^{\Tmax} \abs{f(t)}\,\mes t \,.
\end{equation}
$f$ records the imbalance between the number of events of $\eta$ and $\xi$ that
have occurred up to time $t$: it is constant between event times and changes by
$+1$ whenever some $t_i$ is crossed and by $-1$ whenever some $s_j$ is crossed.
Pairing events by rank and setting $t_i:=+\infty$ for $i=m{+}1,\dots,n$ (the
process $\eta$ has no further events), we may write
\[
f(t)=\eta_t-\xi_t
     =\sum_{i=1}^{n}\bigl(\11charac_{\{t\ge t_i\}}-\11charac_{\{t\ge s_i\}}\bigr).
\]
The $i$-th summand equals $+1$ for $t\in[t_i,s_i)$ and $-1$ for $t\in[s_i,t_i)$;
in either case its magnitude is the indicator of $[t_i\wedge s_i,\,t_i\vee s_i)$.
Since both $(t_i)$ and $(s_i)$ are nondecreasing, two summands of opposite sign
cannot be nonzero at the same $t$: if term $k$ contributes $+1$ and term $l$
contributes $-1$ with $k<l$, then $s_l\le t<s_k$ contradicts $s_k\le s_l$ (and
symmetrically for $k>l$). Hence at every $t$ the nonzero summands share a common
sign, so the absolute value passes through the sum,
\begin{equation}
\abs{f(t)}=\sum_{i=1}^{n}\bigl|\11charac_{\{t\ge t_i\}}-\11charac_{\{t\ge s_i\}}\bigr|
      =\sum_{i=1}^{n}\11charac_{[t_i\wedge s_i,\,t_i\vee s_i)}(t).
\end{equation}
Splitting off the unpaired indices $i>m$, for which $t_i=+\infty$ and hence
$[t_i\wedge s_i,\,t_i\vee s_i)\cap[0,\Tmax]=[s_i,\Tmax]$, gives, for all $t\in[0,\Tmax]$,
\begin{equation}
    \label{eq:indicator_representation}
    \abs{f(t)}
    =
    \sum_{i=1}^{m} \11charac_{[t_i \wedge s_i,\, t_i \vee s_i)}(t)
    +
    \sum_{j=m+1}^{n} \11charac_{[s_j,\, \Tmax]}(t) \, .
\end{equation}
The first sum counts rank-paired times $(t_i, s_i)$, for which exactly one of the
two events has occurred by time $t$, while the second sum accounts for the
unpaired events $s_j$ with $j>m$.
Integrating both sides of Eq.~\ref{eq:indicator_representation}, we obtain
\begin{align}
    I(\Tmax)
    &= \sum_{i=1}^{m} (t_i \vee s_i - t_i \wedge s_i)
    + \sum_{j=m+1}^{n} (\Tmax - s_j) \notag \\
    &= \sum_{i=1}^{m} |t_i - s_i|
    + (n-m)\Tmax - \sum_{j=m+1}^{n} s_j = \dtpp{\eta}{\xi} \, ,
\end{align}
which establishes Eq.~\ref{eq::discrete_dtpp} for $m \leq n$.
The case $m > n$ follows by symmetry.
\end{proof}

\subsection{Optimal Transport Interpretation}
\label{app::proof_tpp_metric_w1}
Let $\eta, \xi \in \mathcal N$ with event times $(t_k)_{k=1}^m$ and $(s_k)_{k=1}^n$ respectively. For the optimal transport argument, associate to each the point measure $\mu := \sum_{k=1}^{m} \delta_{t_k}$ and $\nu := \sum_{k=1}^{n} \delta_{s_k}$. We write $\delta_{\Tmax}$ for the Dirac measure at $\Tmax$, and use $(\cdot)^+ = \max( \cdot , 0 )$.

We establish that the Wasserstein distance~\cite{ramdas2017wasserstein} for the deterministic measures with anchor points at $\Tmax$ is indeed the $L^1$ distance between the respective counting processes:
\begin{equation}
    \WAS(\tilde\mu,\tilde\nu)
    = \int_0^{\Tmax} \abs{\eta_t-\xi_t}\,\mes t \,,
\end{equation}
where, assuming that $\eta_{\Tmax} = m, \xi_{\Tmax} = n$,
\begin{equation}
    \tilde\mu := \mu + (n-m)^+ \delta_{\Tmax} \, ,
    \qquad
    \tilde\nu := \nu + (m-n)^+ \delta_{\Tmax} \, .
\end{equation}
This establishes the correspondence between our definition and the optimal transport interpretation.
\begin{proof}
By construction, both padded measures have equal total mass. Since their supports are contained in $[0,\Tmax]$, both $\tilde\mu$ and $\tilde\nu$ have finite first moments.
By~\cite{villani_optimal_2009} and generalising to finite positive measures of equal mass and finite first moment:
\begin{equation}
    \WAS(\tilde\mu,\tilde\nu)
    \triangleq \inf_{\gamma \in \Gamma (\tilde\mu,\tilde\nu)}
    \int_{\R \times \R} \abs{x - y} \gamma (\mes x,\mes y)
    = \int_{\R} \abs{F_{\tilde\mu}(t) - F_{\tilde\nu}(t)}\,\mes t \, .
\end{equation}
For $t < \Tmax$, no padded atoms have appeared. Hence, $F_{\tilde\mu}(t)= \tilde\mu((-\infty,t]) = \mu((-\infty,t])= \eta_t,$ and analogously, $F_{\tilde\nu}(t) = \xi_t$. Thus,
\begin{equation}
    \abs{F_{\tilde\mu}(t)-F_{\tilde\nu}(t)} =
    \begin{cases}
        \abs{\eta_t-\xi_t}, & t \in [0,\Tmax ) \, , \\
        0, & t \notin [0,\Tmax) \, .
    \end{cases}
\end{equation}
Therefore, we get the equality
\begin{equation}
    \WAS(\tilde\mu,\tilde\nu) = \int_{\R} \abs{F_{\tilde\mu}(t)-F_{\tilde\nu}(t)}\,\mes t = \int_0^{\Tmax} \abs{\eta_t-\xi_t} \,\mes t \, .
\end{equation}
\end{proof}

\subsection{Counterexample for Strong Negative Type}
\label{app::counter_ex_strong}

The TPP distance is of negative type but not of strong negative type. Consider four realised TPPs with two events each:
\begin{equation}
    \xi_1=(0,2),\qquad \xi_2=(1,3),\qquad \xi_3=(0,3),\qquad \text{and} \qquad \xi_4=(1,2) \, .
\end{equation}
Then, with coefficients $a_1=a_2=1$ and $a_3=a_4=-1$ (such that $\sum_i a_i=0$) one has
\begin{align}
    \sum_{i,j=1}^4 a_i a_j\, \dtpp{\xi_i}{\xi_j}
    &= 2\,\dtpp{\xi_1}{\xi_2}
    + 2\,\dtpp{\xi_3}{\xi_4} \notag\\
    &\quad
    - 2 \cdot \big(
    \dtpp{\xi_1}{\xi_3}
    + \dtpp{\xi_1}{\xi_4}
    + \dtpp{\xi_2}{\xi_3}
    + \dtpp{\xi_2}{\xi_4}
    \big) \notag\\
    &= 2\cdot 2 + 2\cdot 2 - 2(1+1+1+1) \notag\\
    &= 0 \, ,
\end{align}
with the $\xi_i$ not all equal, which violates strong negative type. In general, $L^1$ spaces are of negative type but not of strong negative type~\cite{lyons2013distance}.

\section{Proofs of Theorem~\ref{thrm::stability}}
\label{thrm::proofs}

Following the notation of Theorem~\ref{thrm::stability}, we prove the following statements.
\begin{enumerate}
    \item The map $\Phi$ is Lipschitz (Theorem~\ref{thrm::Phi_lipschitz}).
    \item The map $\Phi$ is injective on $\mathcal N$, hence a bijection between $\mathcal N$ and its image $\Phi(\mathcal N)$ (Theorem~\ref{thrm::bijection}).
    \item The inverse map $\Psi = \Phi|_{\mathcal N}^{-1}$ is continuous on $\Phi(\mathcal N_\delta)$ for each $\delta>0$; more precisely, it satisfies a $\tfrac12$-\holder bound on the separated subset $\Phi(\mathcal N_\delta)$ defined in Appendix~\ref{app::proofs::cont} (Theorem~\ref{thrm::holder_main}). Continuity fails on all of $\Phi(\mathcal N)$; see Remark~\ref{rmq::separation_necessary}.
\end{enumerate}

\subsection{\texorpdfstring{$\Phi$}{Phi} is Lipschitz}
\label{app::proofs::lip}

% \subsubsection{Auxiliary Lemmas}
\begin{lemma}[Affine $L^1$ Bound]\label{lemma::affine}
    Let $a<b$ and let $h\colon[a,b]\to\R$ be an affine function. Then
    \begin{equation}
        \int_a^b \abs{h(t)}\mes{t}
        \;\le\;
        \frac{b-a}{2}\bigl(\abs{h(a)}+\abs{h(b)}\bigr) \,.
    \end{equation}
\end{lemma}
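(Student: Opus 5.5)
The plan is a case split on whether $h$ changes sign on $[a,b]$. The quantity $\int_a^b |h|$ is then either the area of a trapezoid or the sum of two triangle areas, and both can be compared directly to the trapezoid value $\frac{b-a}{2}(|h(a)|+|h(b)|)$.

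\emph{Case 1: no sign change.} Suppose $h(a)$ and $h(b)$ have the same sign, or one of them is zero. Since $h$ is affine it does not change sign on $[a,b]$, so $|h|$ is itself affine there. The trapezoid rule is exact for affine functions, which gives
\[
\int_a^b |h(t)|\,\mes t = \frac{b-a}{2}\bigl(|h(a)|+|h(b)|\bigr),
\]
so the bound holds with equality.

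\emph{Case 2: strict sign change.} Suppose $h(a)h(b)<0$. Then $h$ has a unique root
\[
c = a + (b-a)\,\frac{|h(a)|}{|h(a)|+|h(b)|} \in (a,b).
\]
On each of $[a,c]$ and $[c,b]$ the function $|h|$ is affine and vanishes at $c$, so
\[
\int_a^b |h| = \tfrac12 (c-a)|h(a)| + \tfrac12 (b-c)|h(b)|.
\]
Write $\alpha=|h(a)|$ and $\beta=|h(b)|$. Substituting $c$ turns the right-hand side into
\[
\frac{b-a}{2}\cdot\frac{\alpha^2+\beta^2}{\alpha+\beta}.
\]
Since $\alpha^2+\beta^2 \le (\alpha+\beta)^2$ for nonnegative $\alpha,\beta$, this is at most $\frac{b-a}{2}(\alpha+\beta)$, which is the claimed bound.

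No step is a real obstacle; the only point needing care is the substitution for $c$ in Case 2. A uniform alternative avoids the case split: $|h|$ is convex on $[a,b]$, so it lies below its chord there. Integrating the chord gives exactly the trapezoid value, which yields the inequality in one line. I would present this convexity argument as the proof and keep the explicit case analysis as a remark showing when equality holds.
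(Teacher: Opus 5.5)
Your proposal is correct, and the convexity argument you say you would present as the proof is exactly the paper's: $\abs{h}$ is the composition of the convex function $\abs{\cdot}$ with an affine map, so it lies below its chord on $[a,b]$, and integrating the chord gives the trapezoid value. Your explicit case analysis is also correct. It adds the exact value $\frac{b-a}{2}\cdot\frac{\alpha^2+\beta^2}{\alpha+\beta}$ in the sign-change case, which the paper does not need but which shows equality holds exactly when $h$ does not strictly change sign.
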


\begin{proof}
    The composition of a convex function with an affine function is convex. Since $t\mapsto\abs{h(t)}$ is the composition of the convex function $\abs{\cdot}$ with the affine function $h$, it is convex on $[a,b]$. Every convex function on an interval is bounded above by the chord through its endpoint values:
    \begin{equation}
        \abs{h(t)}
        \;\le\;
        \frac{b-t}{b-a}\,\abs{h(a)}
        +
        \frac{t-a}{b-a}\,\abs{h(b)} \,,
        \qquad t\in[a,b] \,.
    \end{equation}
    Integrating over $[a,b]$ gives the result.
\end{proof}

We recall that a right-continuous function of bounded variation induces a Lebesgue--Stieltjes (total variation) measure $\abs{D y}$; see, e.g., Folland~\cite{folland1999real}, Sections~1.5 and~3.5.

\begin{lemma}[Bounded Variation Time-Warp Inequality]
    \label{lemma::bv_time_warp}
    Let $y:[0,\Tmax]\to\mathbb{R}$ be (continuous\footnote{Continuity is not required for the proof but makes the argument cleaner, and we do not require a more general statement for our proofs.} and) of bounded variation, with total variation
    $\mathrm{TV}(y):=\mathrm{TV}(y;\,[0,\Tmax])$.
    Let $\lambda:[0,\Tmax]\to[0,\Tmax]$ be measurable and set $\norm{\lambda-\mathrm{Id}}_{\infty}
    :=\sup_{t\in[0,\Tmax]}|\lambda(t)-t|.$
    Then
    \begin{equation}
        \label{eq::bound_tv_timerepam}
        \norm{y(\lambda(t))-y(t)}_{L^1}
        \le
        2\norm{\lambda-\mathrm{Id}}_{\infty}\,\mathrm{TV}(y) \,.
    \end{equation}
\end{lemma}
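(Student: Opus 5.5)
The plan is to bound the integrand pointwise by the total variation measure of a small window around $t$, and then integrate that bound using Fubini. Set $\varepsilon := \norm{\lambda-\mathrm{Id}}_{\infty}$. If $\varepsilon=0$ there is nothing to prove. Since $y$ is continuous and $\lambda$ is measurable, $t\mapsto y(\lambda(t))$ is measurable, so the left-hand side of Eq.~\ref{eq::bound_tv_timerepam} is well defined.

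\textbf{Step 1 (pointwise bound).} Fix $t\in[0,\Tmax]$. Then $\lambda(t)$ and $t$ both lie in the interval $I_t := [t-\varepsilon, t+\varepsilon]\cap[0,\Tmax]$. By the definition of total variation, the increment of $y$ between two points of an interval is at most the variation of $y$ over that interval. Using the Lebesgue--Stieltjes measure $\abs{Dy}$ recalled just before the lemma, this gives
\begin{equation*}
\abs{y(\lambda(t))-y(t)} \le \mathrm{TV}(y; I_t) = \abs{Dy}(I_t).
\end{equation*}
For the last equality I use that $y$ is continuous, so $\abs{Dy}$ has no atoms and endpoint conventions do not matter.

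\textbf{Step 2 (Fubini).} Integrate over $t$ and write $\abs{Dy}(I_t)=\int \11charac_{\{\abs{s-t}\le\varepsilon\}}\,\abs{Dy}(\mes s)$, with the measure restricted to $[0,\Tmax]$. By Tonelli for nonnegative integrands,
\begin{equation*}
\int_0^{\Tmax}\abs{Dy}(I_t)\,\mes t = \int_{[0,\Tmax]} \mathrm{Leb}\bigl(\{t\in[0,\Tmax]:\abs{t-s}\le\varepsilon\}\bigr)\,\abs{Dy}(\mes s) \le 2\varepsilon\,\abs{Dy}([0,\Tmax]).
\end{equation*}
Since $\abs{Dy}([0,\Tmax])=\mathrm{TV}(y)$, this is exactly the claimed bound.

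\textbf{Main obstacle.} The step needing the most care is justifying the measure-theoretic identities: that $\mathrm{TV}(y;I)=\abs{Dy}(I)$ for intervals, and that the joint indicator $(s,t)\mapsto\11charac_{\{\abs{s-t}\le\varepsilon\}}$ is measurable, which holds because it is the indicator of a closed set. I would cite Folland for the first fact. If I wanted to avoid measures altogether, a fallback is to use the variation function $V(u):=\mathrm{TV}(y;[0,u])$, which is continuous and nondecreasing. Then $\abs{y(\lambda(t))-y(t)}\le V(\min(t+\varepsilon,\Tmax))-V(\max(t-\varepsilon,0))$. Integrating in $t$ and substituting splits this into two shifted integrals of $V$, and each difference is at most $\varepsilon V(\Tmax)$, which again gives the factor $2\varepsilon\,\mathrm{TV}(y)$.
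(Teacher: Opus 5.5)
Your proposal is correct and follows essentially the same route as the paper. The paper also bounds the integrand pointwise by $|Dy|\bigl([t-\varepsilon,t+\varepsilon]\cap[0,\Tmax]\bigr)$, applies Tonelli to the indicator of the closed set $\{|u-t|\le\varepsilon\}$, and bounds the inner Lebesgue integral by $2\varepsilon$. Your extra remarks (measurability of $y\circ\lambda$, absence of atoms by continuity, and the variation-function fallback) are valid refinements that the paper does not spell out.
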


\begin{proof}
    Let $\abs{D y}$ denote the total variation measure of $y$, i.e. the Lebesgue--Stieltjes measure associated with the total variation function $\mathrm{TV}(y;\,[0,t])$.
    For any $a,b\in[0,\Tmax]$,
    \begin{equation}
        |y(b)-y(a)|
        \;\le\;
        \abs{D y}\big([a\wedge b,\; a\vee b]\big) \,.
    \end{equation}

    Set $\delta:=\norm{\lambda-\mathrm{Id}}_{\infty}$ (smaller than or equal to $\Tmax$). For each $t\in[0,\Tmax]$ we have $\lambda(t)\in[t-\delta,\,t+\delta]\cap[0,\Tmax]$, so that
    $[t \wedge \lambda(t),\; t \vee \lambda(t)] \subseteq [t-\delta,\,t+\delta]\cap[0,\Tmax]$.
    By monotonicity of measures,
    \begin{equation}
        |y(\lambda(t))-y(t)|
        \leq
        \abs{D y}\big([t \wedge \lambda(t),\; t \vee \lambda(t)] \big)
        \leq
        \abs{D y}\big([t-\delta,\,t+\delta]\cap[0,\Tmax]\big) \,.
    \end{equation}
    Integrating over $[0,\Tmax]$ and writing the right-hand side as a double integral,
    \begin{align}
        \int_0^{\Tmax} \abs{y(\lambda(t))-y(t)} \mes t
        & \le
        \int_0^{\Tmax} \int_{[0,\Tmax]} \11charac_{\{|u-t|\le\delta\}}\, \abs{D y}(\mes u) \mes t \,.
    \end{align}
    The integrand is nonnegative and jointly measurable
    (the set $\{(t,u):|u-t|\le\delta\}$ is closed in $\mathbb{R}^{2}$),
    so Tonelli's theorem gives the desired inequality
    \begin{align}
        \int_0^{\Tmax} \abs{y(\lambda(t))-y(t)}\, \mes t
        & \le
        \int_{[0,\Tmax]} \biggl(\int_0^{\Tmax} \11charac_{\{|u-t|\le\delta\}}\,\mes t\biggr) \abs{D y}(\mes u) \notag \\
        & \le
        \int_{[0,\Tmax]} 2\delta\, \abs{D y}(\mes u)
        =
        2\delta\,\mathrm{TV}(y) \,.
    \end{align}
\end{proof}

\begin{lemma}[Interarrival Telescoping Bound]\label{lemma::interarrival}
    Let $0 = t_0 < t_1 < \cdots < t_m < t_{m+1} = \Tmax$ and $0 = s_0 < s_1 < \cdots < s_n < s_{n+1} = \Tmax$ be two augmented grids (in the sense of Convention~\ref{conv::boundary}) with $m \le n$. Define the interarrival times
    \begin{equation}
        \tau_k := t_k - t_{k-1},\quad k=1,\dots,m{+}1,
        \qquad
        \varsigma_k := s_k - s_{k-1},\quad k=1,\dots,n{+}1\,,
    \end{equation}
    and set $\tau_0 := \varsigma_0 := 0$. Extend the shorter grid by padding: $t_i := \Tmax \,$ for $i = m{+}1,\dots,n$, so that both sequences $(t_i)_{i=1}^n$ and $(s_i)_{i=1}^n$ have the same length.

    Then the first $m{+}2$ interarrival differences are controlled by the event-time differences:
    \begin{equation}
        \sum_{k=0}^{m+1} \abs{\tau_k - \varsigma_k} \le 2\sum_{i=1}^{n}\abs{t_i - s_i} \,.
    \end{equation}
\end{lemma}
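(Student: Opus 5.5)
The plan is to write each interarrival difference as a difference of consecutive event-time discrepancies and then telescope with the triangle inequality. Set $e_i := t_i - s_i$ for $i=0,\dots,n$. Here $e_0 = 0$ because $t_0=s_0=0$. For $i=m{+}1,\dots,n$ we use the padded values $t_i=\Tmax$. The term $k=0$ contributes nothing, since $\tau_0=\varsigma_0=0$.

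\textbf{Indices $1\le k\le m$.} Both interarrival times are genuine increments of their grids, so
\begin{equation*}
\tau_k-\varsigma_k=(t_k-t_{k-1})-(s_k-s_{k-1})=e_k-e_{k-1}.
\end{equation*}
The triangle inequality then gives $\abs{\tau_k-\varsigma_k}\le \abs{e_k}+\abs{e_{k-1}}$.

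\textbf{Index $k=m{+}1$.} This is the only delicate step, because $\tau_{m+1}=\Tmax-t_m$ uses the boundary point of the $t$-grid, while $\varsigma_{m+1}=s_{m+1}-s_m$ may be an interior increment of the $s$-grid. I split into two cases.
\begin{itemize}
\item If $m=n$, then $s_{m+1}=\Tmax$ as well. Hence $\tau_{m+1}-\varsigma_{m+1}=-(t_m-s_m)=-e_m$, and the bound is $\abs{e_m}$.
\item If $m<n$, then $m{+}1\le n$, so the padded value $t_{m+1}=\Tmax$ is available. We get $\tau_{m+1}=t_{m+1}-t_m$, hence $\tau_{m+1}-\varsigma_{m+1}=e_{m+1}-e_m$ with $e_{m+1}=\Tmax-s_{m+1}$. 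Again $\abs{\tau_{m+1}-\varsigma_{m+1}}\le\abs{e_{m+1}}+\abs{e_m}$.
\end{itemize}
The point to check is that the padding convention $t_i:=\Tmax$ is exactly what makes the boundary interarrival fit the same telescoping form. No term $e_i$ with $i>n$ ever appears.

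\textbf{Summing.} Adding these bounds over $k=0,\dots,m{+}1$, each $\abs{e_i}$ with $1\le i\le \min(m{+}1,n)$ appears at most twice: once from index $k=i$ and once from $k=i{+}1$. The term $e_0$ vanishes. Therefore
\begin{equation*}
\sum_{k=0}^{m+1}\abs{\tau_k-\varsigma_k}\le 2\sum_{i=1}^{\min(m+1,n)}\abs{e_i}\le 2\sum_{i=1}^{n}\abs{t_i-s_i},
\end{equation*}
and the last inequality holds because the omitted terms are nonnegative.

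I expect the main (mild) obstacle to be the bookkeeping at $k=m{+}1$, namely distinguishing $m=n$ from $m<n$ and confirming that the padded $t_{m+1}$ lies within the summation range.
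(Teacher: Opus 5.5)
Your proposal is correct and follows essentially the same telescoping argument as the paper: write $\tau_k-\varsigma_k=(t_k-s_k)-(t_{k-1}-s_{k-1})$, apply the triangle inequality, and sum so that each discrepancy appears at most twice. Your explicit case split at $k=m+1$ matches the paper's uniform treatment using $t_{m+1}=\Tmax$, together with its remark that when $m=n$ the extra term $\abs{t_{n+1}-s_{n+1}}$ vanishes.
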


\begin{proof}
    For $k = 1, \dots, m{+}1$, the telescoping identity
    \begin{equation}
        \tau_k - \varsigma_k
        = (t_k - t_{k-1}) - (s_k - s_{k-1})
        = (t_k - s_k) - (t_{k-1} - s_{k-1})
    \end{equation}
    implies, by the triangle inequality,
    \begin{equation}
        \abs{\tau_k - \varsigma_k}
        \le
        \abs{t_k - s_k} + \abs{t_{k-1} - s_{k-1}} \,.
    \end{equation}
    Summing over $k = 1, \dots, m{+}1$ yields
    \begin{equation}
        \sum_{k=1}^{m+1} \abs{\tau_k - \varsigma_k}
        \le
        \sum_{k=1}^{m+1} \abs{t_k - s_k}
        +
        \sum_{k=0}^{m} \abs{t_k - s_k} \,.
    \end{equation}
    Since $\tau_0 = \varsigma_0 = 0$ and $t_0 = s_0 = 0$, the boundary terms vanish. Noting that all summands are non-negative, we obtain
    \begin{equation}
        \sum_{k=0}^{m+1} \abs{\tau_k - \varsigma_k}
        \le
        2 \sum_{k=1}^{m+1} \abs{t_k - s_k}
        \le
        2 \sum_{k=1}^{n} \abs{t_k - s_k} \,.
    \end{equation}
    In the above, when $m=n$, the sum includes the extra term $\abs{t_{n+1} - s_{n+1} } = \Tmax - \Tmax = 0$.
\end{proof}

\begin{lemma}[Grid Mismatch Controlled by a Time-Change]
    \label{lemma::time_change_bound}
    Let $\xi\in \mathcal N$ with $n$ events and augmented grid
    $0=s_0<s_1<\cdots<s_{n+1}=\Tmax$,
    and let $y=\Phi(\xi)\in \mathcal C$ be the piecewise-linear interpolant through
    $(s_k,\varsigma_k)_{k=0}^{n+1}$, where $\varsigma_0:=0$ and $\varsigma_k:=s_k-s_{k-1}$ for $k=1,\dots,n{+}1$ are the
    interarrival times (Convention~\ref{conv::boundary}).

    Let $0=t_0<t_1<\cdots<t_{m+1}=\Tmax$ be a second augmented grid with
    $m\le n$.
    Define:
    \begin{enumerate}[label=(\alph*)]
\item the piecewise-linear map $\lambda\colon[0,\Tmax]\to[0,s_{m+1}]$ by $\lambda(t_k)=s_k$ for $k=0,\dots,m{+}1$, affine on each $[t_k,t_{k+1}]$;
\item the re-gridded interpolant $\tilde y$ on the $t$-grid: $\tilde y(t_k):=\varsigma_k$ for $k=0,\dots,m{+}1$, affine between nodes.
    \end{enumerate}
    Extend the shorter grid by padding $t_i := \Tmax$ for $i = m+1,\dots,n$. Then:
    \begin{enumerate}[label=(\roman*)]
\item $\tilde y(t) = y(\lambda(t))$ for all $t\in[0,\Tmax]$.
\item $\norm{\tilde y - y}_{L^1}
\le 4\Tmax\,\norm{\lambda-\mathrm{Id}}_\infty$.
\item $\norm{\lambda - \mathrm{Id}}_\infty \le \sum_{k=1}^n \abs{t_k - s_k}$.
    \end{enumerate}
    Consequently,
    \begin{equation}\label{eq:grid:new}
        \norm{\tilde y - y}_{L^1}
        \;\le\;
        4\Tmax\sum_{k=1}^n \abs{t_k - s_k} \,.
    \end{equation}
\end{lemma}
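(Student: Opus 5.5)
The plan is to prove the three items in order and then chain them. Everything follows from three facts: piecewise-linear maps compose well, Lemma~\ref{lemma::bv_time_warp} is already available, and an affine function attains its extrema at the endpoints of its interval.

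\textbf{(i) Identity $\tilde y = y\circ\lambda$.} Fix $k\in\{0,\dots,m\}$. By construction, $\lambda$ maps $[t_k,t_{k+1}]$ affinely and increasingly onto $[s_k,s_{k+1}]$. Since $y=\Phi(\xi)$ is affine on each cell $[s_k,s_{k+1}]$ of the $s$-grid, the composition $y\circ\lambda$ is affine on $[t_k,t_{k+1}]$. It takes the values $y(s_k)=\varsigma_k$ and $y(s_{k+1})=\varsigma_{k+1}$ at the two endpoints. The map $\tilde y$ is also affine on $[t_k,t_{k+1}]$ with the same endpoint values. Two affine functions that agree at both endpoints of an interval coincide on it, and the cells $[t_k,t_{k+1}]$ cover $[0,\Tmax]$, so (i) follows. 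Note that $\lambda$ takes values in $[0,s_{m+1}]\subseteq[0,\Tmax]$ and is continuous, hence measurable, as Lemma~\ref{lemma::bv_time_warp} requires.

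\textbf{(ii) $L^1$ bound.} By (i), $\norm{\tilde y-y}_{L^1}=\norm{y\circ\lambda-y}_{L^1}$. Lemma~\ref{lemma::bv_time_warp} then gives the bound $2\norm{\lambda-\mathrm{Id}}_\infty\,\mathrm{TV}(y)$. It remains to show $\mathrm{TV}(y)\le 2\Tmax$. Because $y$ is piecewise linear with nodes $(s_k,\varsigma_k)$,
\[
\mathrm{TV}(y)=\sum_{k=0}^{n}\abs{\varsigma_{k+1}-\varsigma_k}\le\sum_{k=0}^{n}(\varsigma_{k+1}+\varsigma_k)\le 2\sum_{k=1}^{n+1}\varsigma_k=2\Tmax .
\]
The first inequality uses that all $\varsigma_k\ge0$. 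The final equality holds because the interarrival times telescope to $s_{n+1}-s_0=\Tmax$.

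\textbf{(iii) Sup-norm of the time change.} On each $[t_k,t_{k+1}]$ the function $\lambda-\mathrm{Id}$ is affine, so its absolute value is maximised at a node. Hence $\norm{\lambda-\mathrm{Id}}_\infty=\max_{0\le k\le m+1}\abs{s_k-t_k}$. The node $k=0$ contributes $0$. Each node $1\le k\le m$ contributes a term that appears in $\sum_{k=1}^n\abs{t_k-s_k}$. For the last node $k=m+1$ I would split into two cases:
\begin{itemize}
\item If $m<n$, then $t_{m+1}=\Tmax$ is exactly the padded value, so $\abs{\Tmax-s_{m+1}}=\abs{t_{m+1}-s_{m+1}}$ is a term of the sum.
\item If $m=n$, both grids end at $\Tmax$, so this contribution is $0$.
\end{itemize}
In both cases the maximum is at most the full sum, because all summands are nonnegative. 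Combining (ii) and (iii) gives Eq.~\eqref{eq:grid:new}.

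The only points needing care are the endpoint bookkeeping in (iii) and the total-variation bound in (ii); the rest is routine.
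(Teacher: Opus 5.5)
Your proposal is correct and follows the paper's proof essentially step for step. In (i) you match the affine pieces of $y\circ\lambda$ and $\tilde y$ on each cell $[t_k,t_{k+1}]$, using that two affine functions agreeing at both endpoints coincide, where the paper substitutes explicitly. In (ii) you apply Lemma~\ref{lemma::bv_time_warp} with the same bound $\mathrm{TV}(y)\le 2T_{\max}$, and in (iii) you locate the maximum of $\abs{\lambda-\mathrm{Id}}$ at a node and use the padding convention, with the $m=n$ endpoint term vanishing.
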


\begin{proof}
    \noindent\textit{Part~(i): Identity $\tilde{y}=y\circ\lambda$.}\;
    Fix $k\in\{0,\dots,m\}$ and $t\in[t_k,t_{k+1}]$. Since the grids are strictly increasing, $t_{k+1}-t_k>0, k \leq m$ and $s_{k+1}-s_k>0, k \leq n$. The fact that the grid is strictly increasing also ensures that on the interval $[t_k,t_{k+1}]$, the map $\lambda$ is strictly increasing and affine with $\lambda(t_k)=s_k$ and $\lambda(t_{k+1})=s_{k+1}$, hence
    \begin{equation}
        \lambda\bigl([t_k,t_{k+1}]\bigr)=[s_k,s_{k+1}] \,,
    \end{equation}
    so in particular $\lambda(t)\in[s_k,s_{k+1}]$ and $y(\lambda(t))$ is given by the $k$-th linear piece of $y$.

    By definition of $\tilde{y}$,
    \begin{equation}
        \tilde{y}(t) = \varsigma_k + \frac{\varsigma_{k+1}-\varsigma_k}{t_{k+1}-t_k}(t-t_k) \,.
    \end{equation}
    Since $\lambda$ is linear on $[t_k,t_{k+1}]$,
    \begin{equation}
        \lambda(t)=s_k+\frac{s_{k+1}-s_k}{t_{k+1}-t_k}(t-t_k) \,,
        \qquad\text{equivalently}\qquad
        \frac{\lambda(t)-s_k}{s_{k+1}-s_k}=\frac{t-t_k}{t_{k+1}-t_k} \,.
    \end{equation}
    Substituting this into the expression for $y$ on $[s_k,s_{k+1}]$ yields
    \begin{equation}
        y(\lambda(t))
        =
        \varsigma_k + \frac{\varsigma_{k+1}-\varsigma_k}{s_{k+1}-s_k}\bigl(\lambda(t)-s_k\bigr)
        =
        \varsigma_k + \frac{\varsigma_{k+1}-\varsigma_k}{t_{k+1}-t_k}(t-t_k)
        =
        \tilde{y}(t) \,,
    \end{equation}
    which proves~(i).

    \medskip\noindent
    \textit{Part~(ii): $L^1$-bound via Lemma~\ref{lemma::bv_time_warp}.}\;
    The map $\lambda:[0,\Tmax]\to[0,\Tmax]$ is piecewise linear, hence measurable and $y$ is piecewise linear hence continuous and of bounded variation, so all hypotheses of Lemma~\ref{lemma::bv_time_warp} are fulfilled. Eq.~\ref{eq::bound_tv_timerepam} together with Part~(i) gives
    \begin{equation}
        \norm{\tilde y - y}_{L^1}
        = \norm{y\circ\lambda - y}_{L^1}
        \le 2\,\mathrm{TV}(y)\,\norm{\lambda - \mathrm{Id}}_\infty \,.
    \end{equation}

    Since $y$ is piecewise linear with nodes $\varsigma_0=0$, $\varsigma_k\ge 0$, and $\sum_{k=1}^{n+1}\varsigma_k=\Tmax$,
    \begin{equation}
        \mathrm{TV}(y)
        =
        \sum_{k=0}^{n}|\varsigma_{k+1}-\varsigma_k|
        \le
        \sum_{k=0}^{n}(\varsigma_{k+1}+\varsigma_k)
        \leq
        2\sum_{k=1}^{n+1}\varsigma_k
        =
        2\Tmax \,.
    \end{equation}
    This yields~(ii):
    \begin{equation}
        \norm{\tilde{y}-y}_{L^1} \le 4 \Tmax\,\norm{\lambda-\mathrm{Id}}_\infty \,.
    \end{equation}

    \medskip\noindent
    \textit{Part~(iii): Control of the time change.}\;
    Since $t\mapsto \lambda(t)-t$ is affine on each interval $[t_k,t_{k+1}]$, the function $|\lambda(t)-t|$ is convex on each such interval. Therefore, its maximum over $[0,\Tmax]$ is attained at a node. Because all terms are nonnegative, we may bound the maximum by the sum
    \begin{equation}
        \norm{\lambda-\mathrm{Id}}_\infty= \max_{0\le k\le m+1}|s_k-t_k| \leq \sum_{k=0}^{m+1}|s_k-t_k| \,.
    \end{equation}
    The term at $k=0$ vanishes since $s_0=t_0=0$. Finally, using the padding convention $t_i:=\Tmax$ for $i=m+1,\dots,n$, the sum $\sum_{i=1}^{n}|t_i-s_i|$ contains all nonzero terms $|s_k-t_k|$ for $k=1,\dots,m+1$; when $m=n$, the omitted endpoint term is $|t_{n+1}-s_{n+1}|=0$. It also contains additional nonnegative terms for $i=m+2,\dots,n$. Hence,
    \begin{equation}
        \norm{\lambda-\mathrm{Id}}_\infty \leq \sum_{k=1}^{m+1}|s_k-t_k|
        \le
        \sum_{i=1}^{n}|t_i-s_i| \,,
    \end{equation}
    which proves~(iii).

    \medskip\noindent
    \textit{Conclusion.}\;
    Combining~(ii) and~(iii) gives
    \begin{equation}
        \norm{\tilde{y}-y}_{L^1}
        \le
        4 \Tmax\sum_{i=1}^{n}|t_i-s_i| \,.
    \end{equation}
\end{proof}

\begin{remark}[On the Map $\lambda$]
When $m < n$, the map $\lambda$ satisfies $\lambda(\Tmax) = s_{m+1} < \Tmax$, so $\lambda$ is not surjective onto $[0,\Tmax]$. It aligns the grids only up to the $(m+1)$-th node and compresses the remaining time $[s_{m+1}, \Tmax]$ to the single point $\lambda(\Tmax)$. When $m = n$, $\lambda$ is a bijection of $[0,\Tmax]$.
\end{remark}

% \subsubsection{Theorem Lipschitz}
\begin{theorem}[Global Lipschitz continuity of $\Phi$]
    \label{thrm::Phi_lipschitz}
    Equip $\mathcal N$ with $\dtppsym$ and $\mathcal C$ with $d_1(x,y):=\int_0^{\Tmax}\abs{x(t)-y(t)}\mes{t}$. Then $\Phi\colon(\mathcal N,\dtppsym)\to(\mathcal C,d_1)$ is globally Lipschitz:
    \begin{equation}
        d_1\bigl(\Phi(\eta),\Phi(\xi)\bigr) \le 5\Tmax\,\dtpp{\eta}{\xi} \qquad\text{for all }\eta,\xi\in\mathcal N\,.
    \end{equation}
\end{theorem}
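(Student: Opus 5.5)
Without loss of generality I take $m\le n$, write $x:=\Phi(\eta)$ and $y:=\Phi(\xi)$, and pad the shorter grid with $t_i:=\Tmax$ for $i=m+1,\dots,n$, as in Lemma~\ref{lemma::interarrival} and Lemma~\ref{lemma::time_change_bound}. The strategy is to insert the re-gridded interpolant $\tilde y$ of Lemma~\ref{lemma::time_change_bound}. This is the piecewise-linear function on the $t$-grid taking the values $\varsigma_k$ at $t_k$ for $k=0,\dots,m+1$. The triangle inequality then gives
\begin{equation*}
d_1(x,y)\le \norm{x-\tilde y}_{L^1}+\norm{\tilde y-y}_{L^1}.
\end{equation*}
The second term is already controlled by Lemma~\ref{lemma::time_change_bound}: it is at most $4\Tmax\sum_{i=1}^n\abs{t_i-s_i}$.

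For the first term, $x$ and $\tilde y$ live on the same grid $(t_k)_{k=0}^{m+1}$, so $x-\tilde y$ is affine on each $[t_k,t_{k+1}]$ with nodal values $\tau_k-\varsigma_k$. Lemma~\ref{lemma::affine} applied on each piece, followed by summation, gives
\begin{equation*}
\norm{x-\tilde y}_{L^1}\le \sum_{k=0}^{m}\frac{t_{k+1}-t_k}{2}\bigl(\abs{\tau_k-\varsigma_k}+\abs{\tau_{k+1}-\varsigma_{k+1}}\bigr)\le \Tmax\cdot\tfrac12\cdot 2\sum_{k=0}^{m+1}\abs{\tau_k-\varsigma_k},
\end{equation*}
using $t_{k+1}-t_k\le\Tmax$. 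This is crude, but the lost factors only affect the constant. Lemma~\ref{lemma::interarrival} then bounds $\sum_{k=0}^{m+1}\abs{\tau_k-\varsigma_k}$ by $2\sum_i\abs{t_i-s_i}$.

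The remaining step, and the delicate one, is relating $\sum_{i=1}^n\abs{t_i-s_i}$ with padding to $\dtpp{\eta}{\xi}$. Proposition~\ref{prop::discrete_dtpp} gives exactly
\begin{equation*}
\dtpp{\eta}{\xi}=\sum_{i\le m}\abs{t_i-s_i}+\sum_{i>m}(\Tmax-s_i),
\end{equation*}
and this is precisely $\sum_{i=1}^n\abs{t_i-s_i}$ under the padding $t_i=\Tmax$. The identification is therefore an equality, not merely a bound.

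The real obstacle is the constant. The naive bound on the first term is $\Tmax\cdot 2\,\dtppsym$, which added to $4\Tmax\,\dtppsym$ gives $6\Tmax$ rather than $5\Tmax$. To save a factor, I would avoid summing both endpoint values with weight $\Tmax$. Each node value $\abs{\tau_k-\varsigma_k}$ is shared by the two adjacent intervals with weights $(t_{k+1}-t_{k-1})/2\le\Tmax/2$ in total. This gives
\begin{equation*}
\norm{x-\tilde y}_{L^1}\le \tfrac{\Tmax}{2}\sum_k\abs{\tau_k-\varsigma_k}\le \Tmax\,\dtppsym .
\end{equation*}
Combined with the $4\Tmax$ term, this yields the stated $5\Tmax$. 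If that sharing argument needs care at the endpoints $k=0$ and $k=m+1$, I would note that $\tau_0=\varsigma_0=0$ removes the $k=0$ term, and that the last node carries only a single half-interval weight.
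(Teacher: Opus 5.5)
Your proposal is correct and follows the paper's proof essentially step for step. You split through the re-gridded interpolant $\tilde y$ and bound the horizontal term by $4\Tmax\,\dtppsym$ via Lemma~\ref{lemma::time_change_bound}; the vertical term gets $\Tmax\,\dtppsym$ from Lemma~\ref{lemma::affine} with the shared-node weighting (each nodal error has total weight at most $\Tmax/2$), then Lemma~\ref{lemma::interarrival} and the padded-sum identity of Proposition~\ref{prop::discrete_dtpp}, so your refinement from the crude $6\Tmax$ to $5\Tmax$ is exactly the paper's.
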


\begin{proof}
    Let $\eta \in \mathcal{N}$ have $m$ events at times $(t_k)_{k=1}^m$ and $\xi \in \mathcal{N}$ have $n$ events at times $(s_k)_{k=1}^n$, with $m\le n$ (WLOG by symmetry of $\dtppsym$). Write $x:=\Phi(\eta)$, $y:=\Phi(\xi)$.

    Construct the re-gridded interpolant $\tilde y$ on the $t$-grid with nodal values $\tilde y(t_k):=\varsigma_k$ for $k=0,\dots,m{+}1$, as in Lemma~\ref{lemma::time_change_bound}. By the triangle inequality,
    \begin{equation}\label{eq:tri:new}
        \norm{x-y}_{L^1}
        \;\le\;
        \underbrace{\norm{x-\tilde y}_{L^1}}_{\text{(I): vertical}}
        \;+\;
        \underbrace{\norm{\tilde y-y}_{L^1}}_{\text{(II): horizontal}} \,.
    \end{equation}
    \noindent\textit{Bound~(I): Vertical mismatch (same grid, different nodal values).}\;
    On each $[t_k,t_{k+1}]$, $x-\tilde y$ is affine. By Lemma~\ref{lemma::affine},
    \begin{equation}
        \int_{t_k}^{t_{k+1}}\abs{x(t)-\tilde y(t)}\mes{t}
        \;\le\;
        \frac{\Delta_k}{2}
        \bigl(\abs{\tau_k-\varsigma_k}+\abs{\tau_{k+1}-\varsigma_{k+1}}\bigr) \,,
    \end{equation}
    where $\Delta_k:=t_{k+1}-t_k$. Summing over $k=0,\dots,m$, each $\abs{\tau_j-\varsigma_j}$ appears with coefficient at most $\Tmax/2$ (since $\Delta_{j-1}+\Delta_j\le\Tmax$). Hence
    \begin{equation}
        \norm{x-\tilde y}_{L^1}
        \;\le\;
        \frac{\Tmax}{2}\sum_{k=0}^{m+1}\abs{\tau_k-\varsigma_k}
        \;\le\;
        \Tmax\sum_{k=1}^n\abs{t_k-s_k}
        \;=\;
        \Tmax\;\dtpp{\eta}{\xi} \,,
    \end{equation}
    where the second inequality applies Lemma~\ref{lemma::interarrival}.

    \medskip\noindent
    \textit{Bound~(II): Horizontal mismatch (different grids).}\;
    Recall that $y=\Phi(\xi)$ is the piecewise--linear interpolant on the grid $0=s_0<s_1<\cdots<s_{n+1}=\Tmax$ with nodal values $y(s_k)=\varsigma_k$, and that $\tilde y$ is the piecewise--linear interpolant on $0=t_0<t_1<\cdots<t_{m+1}=\Tmax$ with the same nodal values $\tilde y(t_k)=\varsigma_k$ for $k=0,\dots,m{+}1$.

    Define the piecewise-linear increasing map $\lambda:[0,\Tmax]\to[0,\Tmax]$ by $\lambda(t_k)=s_k$ for $k=0,\dots,m{+}1$, linear on each $[t_k,t_{k+1}]$. $\lambda$ is the unique increasing piecewise-linear map sending until index $m$ each knot $t_k$ to $s_k$. Then Lemma~\ref{lemma::time_change_bound} gives $\tilde y(t)=y(\lambda(t))$ for all $t$ and
    \begin{equation}
        \norm{\tilde y - y}_{L^1}
        \;\le\;
        4\Tmax\sum_{k=1}^n\abs{t_k-s_k}
        \;=\;
        4\Tmax\;\dtpp{\eta}{\xi} \,.
    \end{equation}

    \medskip\noindent
    \textit{Conclusion.}\;
    Combining Eq.~\ref{eq:tri:new} with Bounds~(I) and~(II):
    \begin{equation}
        d_1\bigl(\Phi(\eta),\Phi(\xi)\bigr)
        \;\le\;
        (1+4)\,\Tmax\;\dtpp{\eta}{\xi}
        \;=\;
        5\Tmax\;\dtpp{\eta}{\xi} \,.
    \end{equation}
\end{proof}

\begin{remark}[Sharpness of the Lipschitz Constant]
The constant $5\Tmax$ in Theorem~\ref{thrm::Phi_lipschitz} is not sharp. Several intermediate bounds (notably $\max\le\sum$ in Lemma~\ref{lemma::time_change_bound}(iii)) are deliberately coarse. The structurally relevant feature is the linear scaling $\mathcal O(\Tmax)$, which is natural: the $L^1$ metrics on both spaces integrate over $[0,\Tmax]$.
\end{remark}

\subsection{\texorpdfstring{$\Phi$}{Phi} is a Bijection}
\label{app::proofs::bij}

\begin{figure}
    \centering
    \includegraphics[width=0.5\linewidth]{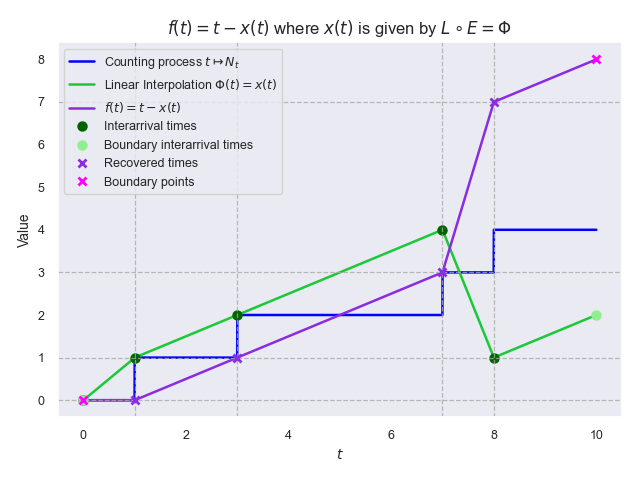}
    \caption{Recovery of event times with $f$ by iterative level-hitting. Here, the TPP $\eta$ is constructed artificially with times $t_k = [1,3,7,8]$, interarrival times $\tau_k = [1,2,4,1]$, and $\Tmax = 10$. Starting from $t_0 = 0$, each event time is obtained as $t_k=\inf\{t>t_{k-1}: f(t)=t_{k-1}\}$. Horizontal dashed lines denote thresholds $t_{k-1}$, and vertical dashed lines mark the corresponding hit times $t_k$. The figure illustrates the constructive recovery underlying the bijectivity of $\Phi$.}
    \label{fig::inverse}
\end{figure}

\begin{definition}[Backshift Transform]
    \label{def::backshift_transform}
    For $x\in\mathcal C$, the backshift transform of $x$ is
    \begin{equation}
        f_x(t) := t - x(t) \,,\qquad t \in [0, \Tmax] \,.
    \end{equation}
\end{definition}

When $x = \Phi(\eta)$ for some $\eta\in\mathcal N$ with event times $0<t_1<\cdots<t_m<\Tmax$ and interarrival times $\tau_k = t_k - t_{k-1}$ (Convention~\ref{conv::boundary}), the key observation is that $f_x(t_k) = t_k - \tau_k = t_{k-1}$. That is, $f_x$ maps each event time to its predecessor. This is the basis for the level-set recovery of Lemma~\ref{lemma::level_set_recovery}.

\begin{lemma}[Level-Set Recovery]
    \label{lemma::level_set_recovery}
    Let $\eta\in\mathcal N$ have event times $0 < t_1 < \cdots < t_m < \Tmax$, augmented by $t_0:=0$ and $t_{m+1}:=\Tmax$ (Convention~\ref{conv::boundary}), interarrival times $\tau_k := t_k - t_{k-1}$, and set $x := \Phi(\eta)$. If $m = 0$ then $f_x \equiv 0$ on $[0,\Tmax]$; assume henceforth $m \ge 1$. Then the backshift transform $f_x(t) = t - x(t)$ satisfies the following properties.
    \begin{enumerate}[label=(\roman*)]
        \item $f_x$ is continuous and piecewise-linear on $[0,\Tmax]$.
        \item $f_x(0)=0$, $f_x(t_k) = t_{k-1}$ for $k=1,\dots,m$, and $f_x(\Tmax)=t_m$.
        \item $f_x\equiv 0$ on $[0,t_1]$.
        \item $f_x$ is strictly increasing on $[t_1, \Tmax]$.
        \item For $k = 1,\dots,m$, $t_k = \sup\{t \in (t_{k-1}, \Tmax) : f_x(t) = t_{k-1}\}$.
    \end{enumerate}
\end{lemma}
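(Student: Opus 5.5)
The whole lemma follows from one explicit computation: on each cell of the augmented grid, $f_x$ is the affine interpolant between consecutive event times. I would establish that first and then read off (i)--(v).

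Fix $k\in\{0,\dots,m\}$ and $t\in[t_k,t_{k+1}]$. By Definition~\ref{def::phi_embedding}, $x$ is affine there with $x(t_k)=\tau_k$ and $x(t_{k+1})=\tau_{k+1}$. Since $t\mapsto t$ is also affine, $f_x=t-x(t)$ is affine on $[t_k,t_{k+1}]$, with endpoint values
\begin{equation*}
f_x(t_k)=t_k-\tau_k,\qquad f_x(t_{k+1})=t_{k+1}-\tau_{k+1}=t_k .
\end{equation*}
Using $\tau_0=0$ and $\tau_k=t_k-t_{k-1}$ for $k\ge1$, we get $f_x(t_k)=t_{k-1}$ for $k\ge1$ and $f_x(t_0)=0$. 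Hence $f_x$ on $[t_k,t_{k+1}]$ is the affine function from $t_{k-1}$ to $t_k$ (with the convention $t_{-1}:=0$ for $k=0$).

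\textbf{(i) and (ii).} Property (i) holds because $f_x$ is a difference of continuous piecewise-linear functions on the same grid. Property (ii) is the list of node values just computed; in particular the endpoint $t_{m+1}=\Tmax$ gives $f_x(\Tmax)=\Tmax-\tau_{m+1}=t_m$. The case $m=0$ is the same computation: $\tau_1=\Tmax$, so $f_x$ is affine from $0$ to $0$, i.e.\ $f_x\equiv0$.

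\textbf{(iii) and (iv).} On $[t_0,t_1]$ both endpoint values are $0$, which gives (iii). For $k\ge1$, the slope on $[t_k,t_{k+1}]$ is $(t_k-t_{k-1})/(t_{k+1}-t_k)>0$, so $f_x$ is strictly increasing on each such cell. These cells share endpoints and together cover $[t_1,\Tmax]$, so by continuity $f_x$ is strictly increasing there, which is (iv).

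\textbf{(v).} I would split into two cases.
\begin{itemize}
\item[$k=1$:] The threshold is $t_0=0$. By (iii) and (iv), $f_x=0$ exactly on $[0,t_1]$ and $f_x>0$ on $(t_1,\Tmax]$. So the level set inside $(0,\Tmax)$ is $(0,t_1]$, whose supremum is $t_1$. This is why the statement uses $\sup$ rather than a unique-hit formulation.
\item[$k\ge2$:] The threshold is $t_{k-1}>0$. On $[0,t_1]$ we have $f_x=0<t_{k-1}$, so every solution lies in $[t_1,\Tmax]$. There $f_x$ is strictly increasing, hence injective, and $f_x(t_k)=t_{k-1}$ with $t_k\in(t_{k-1},\Tmax)$. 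So the level set is exactly $\{t_k\}$ and its supremum is $t_k$.
\end{itemize}

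I expect no real obstacle. The only care needed is the bookkeeping at the boundary cells, namely $\tau_0=0$ on the first cell and $\tau_{m+1}$ on the last cell, and the degenerate flat piece that makes the $k=1$ case of (v) a supremum rather than a unique solution.
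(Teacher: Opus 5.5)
Your proposal is correct and follows essentially the same route as the paper: compute the node values $f_x(t_k)=t_{k-1}$, use the flat first cell and the positive increments $\tau_k$ on later cells for (iii)--(iv), and split (v) into the $k=1$ supremum case and the $k\ge 2$ unique-root case. You also check the $m=0$ case and $f_x(\Tmax)=t_m$ via $\tau_{m+1}$ explicitly, which the paper's proof leaves implicit.
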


\begin{proof}
    \textit{(i)}\; Since $x \in \mathcal{C}$ is continuous and piecewise-linear on the grid $(t_k)_{k=0}^{m+1}$ and the map $t \mapsto t$ is linear, their difference $f_x$ inherits the same properties.

    \textit{(ii)}\; At each event time $t_k$,
    \begin{equation}
        x(t_k) = \tau_k = t_k - t_{k-1} \,,
    \end{equation}
    hence
    \begin{equation}
        f_x(t_k) = t_k - \tau_k = t_{k-1} \,.
    \end{equation}
    At the origin, $x(0) = \tau_0 = 0$, so $f_x(0) = 0$.

    \textit{(iii)}\;
    On $[0, t_1]$, $x$ has slope equal to $1$ by definition and is linear with $x(0)=0$. So $x(t) = t$ and $f_x \equiv 0$.

    \textit{(iv)}\; On each $[t_k, t_{k+1}]$ for $k = 1,\dots,m$, the function $f_x$ is affine with
    \begin{equation}
        f_x(t_{k+1}) - f_x(t_k) = t_k - t_{k-1} = \tau_k > 0 \,,
    \end{equation}
    where strict positivity follows from simplicity of $\eta$ (i.e.\ $\tau_k > 0$ for $k = 1,\dots,m$). Since $f_x$ is affine and strictly increasing on each subinterval and continuous across nodes, it is strictly increasing on all of $[t_1, \Tmax]$.

    \textit{(v)}\; For $k = 1$: by (iii), $f_x \equiv 0$ on $[0, t_1]$, and by (iv), $f_x > 0$ on $(t_1, \Tmax]$. Hence $\{t \in (0, \Tmax) : f_x(t) = 0\} = (0, t_1]$ and $\sup = t_1$. For $k \geq 2$: by (iv), $f_x$ is strictly increasing on $[t_{k-1}, \Tmax] \subset [t_1, \Tmax]$. Since $f_x(t_k) = t_{k-1}$ and $f_x$ is strictly increasing on this interval, $t_k$ is the unique solution of $f_x(t) = t_{k-1}$ in $[t_{k-1}, \Tmax]$, hence the supremum.
\end{proof}

\begin{lemma}[Inverse Recovery]
    \label{lemma::inversemap}
    For $x \in \Phi(\mathcal{N})$, define the sequence $(t_k)_{k \geq 0}$ by
    \begin{equation}
        \label{eq:inverse-recursion}
        t_0 := 0,
        \qquad
        t_k := \sup\bigl\{ t \in (t_{k-1}, \Tmax) : f_x(t) = t_{k-1} \bigr\},
        \quad k \ge 1 \,,
    \end{equation}
    and let $m\ge 0$ be the first index for which the level set $\{ t \in (t_m, \Tmax) : f_x(t) = t_m \}$ is empty or has supremum equal to $\Tmax$. Then:
    \begin{enumerate}[label=(\roman*)]
        \item The recovered sequence satisfies $0 < t_1 < t_2 < \cdots < t_m < \Tmax$ and $m < \infty$; moreover $m = 0$ if and only if $x(t)=t$ for all $t$, the embedding of the eventless path.

        \item The map $\Psi : \Phi(\mathcal{N}) \to \mathcal{N}$ defined by
        \begin{equation}\label{eq:Psi-def}
            \Psi(x)_t := \#\{k \in \{1,\dots,m\} : t_k \le t\} \,,
        \end{equation}
        is a well-defined element of $\mathcal N$.
    
       \item If $x = \Phi(\eta)$ for some $\eta\in\mathcal N$ with event times $(s_k)_{k=1}^n$, then $m = n$ and $t_k = s_k$ for $k=1,\dots,n$; equivalently, $\Psi(\Phi(\eta)) = \eta$.
    \end{enumerate}
\end{lemma}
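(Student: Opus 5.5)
The plan is to reduce everything to Lemma~\ref{lemma::level_set_recovery} by proving part~(iii) first, and then reading off (i) and (ii) from it. Since $x\in\Phi(\mathcal N)$, there is some $\eta\in\mathcal N$ with $x=\Phi(\eta)$. Let $0<s_1<\cdots<s_n<\Tmax$ be its event times, augmented by $s_0=0$ and $s_{n+1}=\Tmax$. I will show by induction on $k$ that the recursion~\eqref{eq:inverse-recursion} reproduces $t_k=s_k$ for every $k\le n$. I will then show that the stopping rule fires exactly at index $n$.

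\emph{Case $n=0$.} Here $x(t)=t$, so $f_x\equiv 0$. The level set $\{t\in(0,\Tmax): f_x(t)=0\}$ is all of $(0,\Tmax)$, whose supremum is $\Tmax$. The stopping rule therefore fires at index $0$, giving $m=0$ and $\Psi(x)\equiv 0=\eta$.

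\emph{Case $n\ge1$.} The induction rests on Lemma~\ref{lemma::level_set_recovery}.
\begin{itemize}
\item Base step, $k=1$: by Lemma~\ref{lemma::level_set_recovery}(iii)--(iv), the level set $\{t\in(0,\Tmax):f_x(t)=0\}$ equals $(0,s_1]$. It is nonempty and has supremum $s_1<\Tmax$, so the procedure does not stop at index $0$, and it sets $t_1=s_1$.
\item Inductive step: suppose $t_{k-1}=s_{k-1}$ with $2\le k\le n$. By Lemma~\ref{lemma::level_set_recovery}(iv), $f_x$ is strictly increasing on $[s_{k-1},\Tmax]$. By part~(ii) of that lemma, $f_x(s_k)=s_{k-1}$. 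Hence the level set $\{t\in(s_{k-1},\Tmax): f_x(t)=s_{k-1}\}$ is exactly $\{s_k\}$. It is nonempty with supremum $s_k<\Tmax$, so there is no premature stop, and the procedure sets $t_k=s_k$. This is precisely Lemma~\ref{lemma::level_set_recovery}(v).
\end{itemize}

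\emph{Stopping at index $n$.} Consider the level set $\{t\in(s_n,\Tmax): f_x(t)=s_n\}$. On $[s_n,\Tmax]$ the function $f_x$ is strictly increasing. It starts at $f_x(s_n)=s_{n-1}<s_n$ and ends at $f_x(\Tmax)=s_n$, by Lemma~\ref{lemma::level_set_recovery}(ii). The value $s_n$ is therefore attained only at $t=\Tmax$, which is excluded from the open interval, so the level set is empty. Thus $m=n$ is the first stopping index. This proves~(iii), and in particular $\Psi(\Phi(\eta))=\eta$.

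The remaining items follow from~(iii).
\begin{itemize}
\item For (i): the recovered times are the $s_k$, which are strictly increasing in $(0,\Tmax)$, and $m=n<\infty$. For the ``$m=0$ iff $x=\mathrm{Id}$'' claim, $m=0$ holds iff $n=0$. If $n=0$ then $x=\mathrm{Id}$. If $n\ge1$ then $x(\Tmax)=\Tmax-s_n<\Tmax$, so $x\neq\mathrm{Id}$.
\item For (ii): $\Psi(x)$ is the counting path of finitely many distinct times in $(0,\Tmax)$. It is therefore c\`adl\`ag, nondecreasing, has unit jumps, satisfies $\Psi(x)_0=0$, and has no jump at $\Tmax$, so it lies in $\mathcal N$. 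Well-definedness, i.e.\ independence from the choice of preimage $\eta$, is automatic: the recursion depends only on $x$, and (iii) shows the output equals any preimage.
\end{itemize}

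The main obstacle is the boundary bookkeeping. One must confirm that the stopping criterion neither triggers early, which requires $s_k<\Tmax$ and a nonempty level set for $k\le n$, nor fails to trigger at $n$. Both hinge on using the open interval $(t_{k-1},\Tmax)$ together with $f_x(\Tmax)=s_n$. The degenerate flat piece of $f_x$ on $[0,s_1]$ is handled by taking the supremum rather than the infimum.
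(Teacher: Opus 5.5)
Your proposal is correct and follows essentially the same route as the paper: fix a preimage $\eta$, dispose of $n=0$ directly, recover $t_k=s_k$ by induction using Lemma~\ref{lemma::level_set_recovery}, and show the index-$n$ level set is empty because $f_x$ is strictly increasing on $[s_n,\Tmax]$ with $f_x(\Tmax)=s_n$. You also spell out two points the paper leaves implicit: the converse direction of ``$m=0$ iff $x=\mathrm{Id}$'', and the fact that $\Psi(x)$ does not depend on the choice of preimage.
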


\begin{proof}
    Fix $\eta \in \mathcal{N}$ with event times $0 < s_1 < \cdots < s_n < \Tmax$, augmented by $s_0:=0$ and $s_{n+1}:=\Tmax$ (Convention~\ref{conv::boundary}), interarrival times $\varsigma_k := s_k - s_{k-1}$, and set $x := \Phi(\eta)$.
    If $n = 0$, then $x(t) = t$ and $f_x \equiv 0$, so the index-$0$ level set is $(0,\Tmax)$, whose supremum is $\Tmax$: the recursion stops immediately with $m = 0$ and $\Psi(x) \equiv 0 = \eta$. Assume henceforth $n \ge 1$.

    We show by induction that the recursion in Eq.~\ref{eq:inverse-recursion} recovers $t_k = s_k$ for every $k$; well-definedness of $\Psi$ is then immediate.

    \medskip\noindent
    \textit{Part~(i): Strict monotonicity and finiteness.}\;
    \emph{Base case ($k=1$).}
    On $[0,s_1]$, the path $x$ interpolates linearly from $(0,0)$ to $(s_1,\varsigma_1)=(s_1,s_1)$, hence $x(t)=t$ and $f_x(t)=0$ for all $t \in [0,s_1]$. By Lemma~\ref{lemma::level_set_recovery}(iv), $f_x$ is strictly increasing on $[s_1,\Tmax]$, so $f_x(t)>0$ for all $t>s_1$.
    Therefore
    \begin{equation}
        \{ t \in (0,\Tmax) : f_x(t)=0 \} = (0,s_1] \,,
    \end{equation}
    and the supremum yields $t_1=s_1$.

    \emph{Inductive step.}
    Assume $t_j=s_j$ for all $j \le k-1$. By Lemma~\ref{lemma::level_set_recovery}, the equation $f_x(t)=s_{k-1}$ has a unique solution in $(s_{k-1},\Tmax)$, namely $s_k$. Hence $t_k = \sup\{ s_k \} = s_k,$ and in particular $t_k>t_{k-1}$. For each $k=1,\dots,n$ the level set has supremum $s_k<\Tmax$, so the stopping rule does not trigger before index $n$.

    \emph{Termination.}
    It remains to show that no time is recorded after $t_n=s_n$. By Lemma~\ref{lemma::level_set_recovery}(ii) to (iv), $f_x\le f_x(s_n)=s_{n-1}<s_n$ on $[0,s_n]$, and $f_x$ is strictly increasing on $[s_n,\Tmax]$ with $f_x(\Tmax)=s_n$; hence $f_x(t)<s_n$ for every $t\in(s_n,\Tmax)$. The level set $\{ t \in (s_n,\Tmax) : f_x(t)=s_n \}$ is therefore empty, and the recursion terminates with $m=n<\infty$.

    \medskip\noindent
    \textit{Part~(ii): Well-definedness of $\Psi(x)$.}\;
    By (i), the sequence $(t_k)_{k=1}^m$ is finite and strictly increasing. The process defined in Eq.~\ref{eq:Psi-def} is therefore nondecreasing, \clag, has unit jumps only, satisfies $\Psi(x)_0=0$, and has $\Psi(x)_{\Tmax}=m<\infty$. Hence $\Psi(x) \in \mathcal{N}$. When $m=0$, $\Psi(x)\equiv 0$ is the eventless path, which lies in $\mathcal N$ trivially.
\end{proof}

\begin{theorem}[Bijective Embedding]
    \label{thrm::bijection}
    The following identities hold.
    \begin{enumerate}[label=(\roman*)]
        \item $\Psi \circ \Phi = \mathrm{Id}_{\mathcal{N}}$.
        \item $\Phi \circ \Psi = \mathrm{Id}_{\Phi(\mathcal{N})}$.
    \end{enumerate}
    In particular, $\Phi : \mathcal{N} \to \Phi(\mathcal{N})$ is a bijection with inverse $\Psi$.
\end{theorem}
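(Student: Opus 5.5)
Both identities follow almost directly from Lemma~\ref{lemma::inversemap}, so the plan is to reduce each to that lemma and a formal set-theoretic argument.

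For (i), I fix $\eta\in\mathcal N$ with event times $0<s_1<\cdots<s_n<\Tmax$ and set $x:=\Phi(\eta)$. By Lemma~\ref{lemma::inversemap}(iii), the level-set recursion in Eq.~\ref{eq:inverse-recursion} applied to $x$ terminates with $m=n$ and returns $t_k=s_k$ for every $k$. Hence, by definition (Eq.~\ref{eq:Psi-def}), $\Psi(x)_t=\#\{k: s_k\le t\}=\eta_t$ for all $t\in[0,\Tmax]$, which gives $\Psi\circ\Phi=\mathrm{Id}_{\mathcal N}$. The degenerate case $n=0$ is already handled in that lemma, where $x(t)=t$ and $f_x\equiv 0$.

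For (ii), I take $x\in\Phi(\mathcal N)$ and write $x=\Phi(\eta)$ for some $\eta\in\mathcal N$. Then $\Phi(\Psi(x))=\Phi(\Psi(\Phi(\eta)))=\Phi(\eta)=x$ by (i). The one point to make explicit is that $\Psi(x)$ depends only on $x$ and not on the chosen preimage $\eta$. This holds because the recursion is defined purely in terms of $f_x(t)=t-x(t)$, and Lemma~\ref{lemma::inversemap}(ii) already shows that $\Psi$ is a well-defined map $\Phi(\mathcal N)\to\mathcal N$.

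Finally, bijectivity follows formally. From (i), $\Phi$ has a left inverse, so it is injective: if $\Phi(\eta)=\Phi(\xi)$, then $\eta=\Psi\Phi(\eta)=\Psi\Phi(\xi)=\xi$. It is surjective onto $\Phi(\mathcal N)$ by definition of the image. Together with (ii), $\Psi$ is the two-sided inverse of $\Phi\colon\mathcal N\to\Phi(\mathcal N)$. There is no real obstacle, since the substantive work (uniqueness of the level-hitting times via strict monotonicity of $f_x$ on $[t_1,\Tmax]$, and termination at index $n$) is contained in Lemmas~\ref{lemma::level_set_recovery} and~\ref{lemma::inversemap}. The only care needed is to check that the stopping rule of the recursion agrees with the termination argument in the proof of Lemma~\ref{lemma::inversemap}, namely that the level set at index $n$ is empty.
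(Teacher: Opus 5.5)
Your proposal is correct and follows essentially the same route as the paper: (i) is read off from Lemma~\ref{lemma::inversemap}(iii) together with the definition of $\Psi$, and (ii) follows by writing $x=\Phi(\eta)$ and applying (i), after which bijectivity is formal. Your remark that $\Psi(x)$ depends only on $x$, because the recursion is defined through $f_x$ alone, is left implicit in the paper, but it does not change the argument.
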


\begin{proof}
    We continue with the notation of the previous lemmas. First we prove $\Psi \circ \Phi = \mathrm{Id}_{\mathcal{N}}$. Let $\eta \in \mathcal{N}$ have event times $(t_k)_{k=1}^m$. Lemma~\ref{lemma::inversemap} shows that applying the recursion to $x=\Phi(\eta)$ recovers all event times exactly, with no extra times. Therefore,
    \begin{equation}
        \Psi(\Phi(\eta))_t = \#\{k \in \{1,\dots,m\} : t_k \le t\} = \eta_t \,, \qquad t \in [0,\Tmax] \,.
    \end{equation}

    Next we prove $\Phi \circ \Psi = \mathrm{Id}_{\Phi(\mathcal{N})}$. Let $x \in \Phi(\mathcal{N})$, so $x=\Phi(\eta)$ for some $\eta \in \mathcal{N}$ with event times $(t_k)_{k=1}^m$. By the previous statement, $\Psi(x)$ recovers exactly these event times. Reapplying $\Phi$ yields the same piecewise-linear interpolation through
    $(t_k,\tau_k)$, hence $\Phi(\Psi(x))=x$.

    The maps $\Phi$ and $\Psi$ are therefore mutual inverses, and
    $\Phi : \mathcal{N} \to \Phi(\mathcal{N})$ is a bijection.
\end{proof}

\subsection{\texorpdfstring{$\Psi$}{Psi} is Continuous on Each Sieve \texorpdfstring{$\Phi(\mathcal N_\delta)$}{Phi(N\_delta)}}
\label{app::proofs::cont}
\textbf{\holder stability of the inverse.}\;
Fix $\delta > 0$ and let
\begin{equation}
    \mathcal{N}_\delta := \bigl\{\eta \in \mathcal{N} :
    \tau_k \ge \delta \text{ for all } k=1,\dots,m+1\bigr\} \,,
\end{equation}
where $m = \eta_{\Tmax}$ and $\tau_1,\dots,\tau_{m+1}$ are the interarrival times
(Convention~\ref{conv::boundary}).

The separation parameter $\delta$ excludes near-simultaneous events, which cause the embedding's slopes to become indistinguishable. In practice, temporal data recorded at finite precision satisfies $\eta\in\mathcal N_\delta$ for some resolution-dependent~$\delta$. The Lipschitz bound and bijectivity hold without any separation assumption.

\begin{remark}[Separation is necessary]
    \label{rmq::separation_necessary}
    The inverse $\Psi$ is not continuous on all of $\Phi(\mathcal N)$, so the sieve $\mathcal N_\delta$ cannot be dispensed with. Let $\eta\in\mathcal N$ have event times $0<t_1<\cdots<t_m<\Tmax$ (possibly $m=0$, with the convention $t_1:=\Tmax$), and for $0<h<t_1$ let $\eta^{(h)}\in\mathcal N$ have event times $\{h,t_1,\dots,t_m\}$. Then $\dtpp{\eta^{(h)}}{\eta}=\Tmax-h$, while $\norm{\Phi(\eta^{(h)})-\Phi(\eta)}_\infty\le h$, so $d_1\bigl(\Phi(\eta^{(h)}),\Phi(\eta)\bigr)\le\Tmax\,h\to0$ as $h\downarrow0$. Hence $\Phi(\eta^{(h)})\to\Phi(\eta)$ in $(\mathcal C,d_1)$ while $\dtpp{\eta^{(h)}}{\eta}\to\Tmax$, and $\Psi$ admits no modulus of continuity at any point of $\Phi(\mathcal N)$. On $\mathcal N_\delta$ this construction is unavailable: it forces the first interarrival time $\tau_1=h<\delta$.
\end{remark}

% \subsubsection{Auxiliary Lemmas}
\begin{lemma}[Quantitative Bounds for $f_x$ on the Sieve]
    \label{lemma::props_fx_sieve}
    Let $\delta > 0$, $M_\delta := \Tmax/\delta$. Let $\eta \in \mathcal{N}_\delta$ have $m \ge 1$ events, and write $(t_k)_{k=0}^{m+1}$ for its augmented event-time grid and $(\tau_k)_{k=0}^{m+1}$ for the corresponding interarrival times (Convention~\ref{conv::boundary}), and set $x := \Phi(\eta)$, $f_x(t) := t - x(t)$. Then on each interval $[t_k, t_{k+1}]$, $k = 1,\dots,m$, the function $f_x$ is affine with slope
    \begin{equation}
        r_k \;:=\; \frac{\tau_k}{\tau_{k+1}}
        \;\in\; \bigl[M_\delta^{-1},\;M_\delta\bigr] \,.
    \end{equation}
    In particular, $f_x$ is $M_\delta$-Lipschitz on $[0, \Tmax]$.
\end{lemma}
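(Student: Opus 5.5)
The plan is a direct slope computation on each linear piece, followed by separate bounds on the two sieve-controlled quantities $\tau_k$ and $\tau_{k+1}$.

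\textbf{Slope on each piece.} Fix $k\in\{1,\dots,m\}$. By Definition~\ref{def::phi_embedding}, $x$ is affine on $[t_k,t_{k+1}]$ with slope $(\tau_{k+1}-\tau_k)/(t_{k+1}-t_k)$. The denominator is $t_{k+1}-t_k=\tau_{k+1}$, so $f_x(t)=t-x(t)$ is affine there with slope
\[
1-\frac{\tau_{k+1}-\tau_k}{\tau_{k+1}}=\frac{\tau_k}{\tau_{k+1}}=r_k .
\]
This agrees with Lemma~\ref{lemma::level_set_recovery}(ii): the increment $f_x(t_{k+1})-f_x(t_k)=t_k-t_{k-1}=\tau_k$ over an interval of length $\tau_{k+1}$ (for $k=m$, this uses $f_x(\Tmax)=t_m$).

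\textbf{Bounding the ratio.} Since $\eta\in\mathcal N_\delta$, we have $\tau_j\ge\delta$ for all $j=1,\dots,m+1$. Each $\tau_j\le \Tmax$, because the interarrival times are nonnegative and sum to $\Tmax$. Both $\tau_k$ and $\tau_{k+1}$ therefore lie in $[\delta,\Tmax]$ (here $k\ge1$ and $k+1\le m+1$, so both are covered by the sieve condition). It follows that
\[
r_k\le \Tmax/\delta=M_\delta \qquad\text{and}\qquad r_k\ge \delta/\Tmax=M_\delta^{-1}.
\]

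\textbf{Lipschitz conclusion.} By Lemma~\ref{lemma::level_set_recovery}(iii), $f_x\equiv0$ on $[0,t_1]$, so its slope there is $0$. On the remaining pieces the slopes lie in $[M_\delta^{-1},M_\delta]$. Since $M_\delta\ge1$ (because $\delta\le\tau_1\le\Tmax$), every slope has absolute value at most $M_\delta$. A continuous piecewise-affine function whose slopes are bounded in absolute value by $L$ is $L$-Lipschitz, by summing increments across nodes. Hence $f_x$ is $M_\delta$-Lipschitz on $[0,\Tmax]$.

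There is no real obstacle in this argument. The only points needing care are the index bookkeeping, which ensures that both $\tau_k$ and $\tau_{k+1}$ fall under the sieve condition, and the separate treatment of the first piece $[0,t_1]$.
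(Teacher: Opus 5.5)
Your proposal is correct and follows the paper's proof essentially step for step. You compute the slope of $x$ on each piece as $(\tau_{k+1}-\tau_k)/\tau_{k+1}$, bound the ratio $\tau_k/\tau_{k+1}$ using $\tau_j\in[\delta,\Tmax]$, and read off the Lipschitz constant from the zero slope on $[0,t_1]$ and the bounded slopes elsewhere. Your extra checks (the index bookkeeping for $\tau_{m+1}$ and the observation $M_\delta\ge 1$) simply make explicit what the paper leaves implicit.
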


\begin{proof}
    Fix $k \in \{1,\dots,m\}$.
    By construction, $x = \Phi(\eta)$ is affine on $[t_k, t_{k+1}]$ with values $x(t_k) = \tau_k$ and $x(t_{k+1}) = \tau_{k+1}$.
    Its slope is $(\tau_{k+1} - \tau_k)/\tau_{k+1}$,
    so $f_x = \mathrm{id} - x$ is affine on $[t_k, t_{k+1}]$ with slope
    \begin{equation}
        1 - \frac{\tau_{k+1} - \tau_k}{\tau_{k+1}}
        \;=\;
        \frac{\tau_k}{\tau_{k+1}} \,.
    \end{equation}
    Since $\eta \in \mathcal N_\delta$, every interarrival time lies in $[\delta, \Tmax]$, giving
    $r_k = \tau_k / \tau_{k+1} \in [\delta/\Tmax, \Tmax/\delta] = [1/M_\delta, M_\delta]$.

    Finally, $f_x$ is continuous and piecewise affine.
    On $[0, t_1]$ the slope is~$0$; on each subsequent piece it is $r_k \le M_\delta$.
    Hence $f_x$ is $M_\delta$-Lipschitz.
\end{proof}

\begin{proposition}[Restricted Invertibility of the Backshift Transform on the Sieve]
    \label{prop::Finverse}
    Let $\delta > 0$, $M_\delta := \Tmax/\delta$. Let $\eta \in \mathcal{N}_\delta$ have $m \ge 1$ events, and write $(t_k)_{k=0}^{m+1}$ for its augmented event-time grid (Convention~\ref{conv::boundary}), and set $x := \Phi(\eta)$, $f_x(t) := t - x(t)$. Then:
    \begin{enumerate}[label=(\roman*)]
        \item The restriction $f_x\big|_{[t_1,\Tmax]} : [t_1,\Tmax] \to [0, t_m]$ is a continuous, strictly increasing bijection.
        \item Its inverse $F_x := \bigl(f_x\big|_{[t_1,\Tmax]}\bigr)^{-1}$ satisfies, for all $u,v \in [0,t_m]$,
        \begin{equation}
            \abs{F_x(u) - F_x(v)} \le M_\delta\,\abs{u-v} \,.
        \end{equation}
    \end{enumerate}
\end{proposition}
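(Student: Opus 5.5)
Part~(i) follows almost entirely from Lemma~\ref{lemma::level_set_recovery}. That lemma already gives that $f_x$ is continuous and piecewise affine (item (i)) and strictly increasing on $[t_1,\Tmax]$ (item (iv)). Its endpoint values are $f_x(t_1)=t_0=0$ and $f_x(\Tmax)=t_m$ (item (ii)). By the intermediate value theorem, the image of $[t_1,\Tmax]$ is therefore exactly $[0,t_m]$, and strict monotonicity gives injectivity. So $f_x|_{[t_1,\Tmax]}$ is a continuous strictly increasing bijection onto $[0,t_m]$. Its inverse $F_x$ is then automatically continuous and strictly increasing, since it is the inverse of a monotone continuous bijection between compact intervals. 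In this step the sieve assumption is only used implicitly, through $\tau_k>0$.

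For part~(ii), I will use the quantitative slope information of Lemma~\ref{lemma::props_fx_sieve}. On each piece $[t_k,t_{k+1}]$, $k=1,\dots,m$, the map $f_x$ is affine with slope $r_k\ge M_\delta^{-1}>0$. Consequently, $F_x$ is piecewise affine on the image grid $0=f_x(t_1)<f_x(t_2)<\cdots<f_x(t_{m+1})=t_m$, i.e.\ on the nodes $t_0<t_1<\cdots<t_m$. On the piece $[t_{k-1},t_k]$ its slope is $1/r_k=\tau_{k+1}/\tau_k\le M_\delta$.

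To pass from this piecewise slope bound to the global estimate, take $u<v$ in $[0,t_m]$ and subdivide $[u,v]$ at the intermediate nodes $t_j$ lying in it. On each subinterval $F_x$ is affine with slope at most $M_\delta$, so its increment is at most $M_\delta$ times the subinterval length. Summing over the subintervals (a telescoping argument) gives
\[
\abs{F_x(v)-F_x(u)}\le M_\delta(v-u).
\]
Equivalently, for $a=F_x(u)$ and $b=F_x(v)$ in $[t_1,\Tmax]$, the increment satisfies $f_x(b)-f_x(a)=\int_a^b f_x'\ge M_\delta^{-1}(b-a)$, because $f_x'\ge M_\delta^{-1}$ almost everywhere on $[t_1,\Tmax]$. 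Rearranging gives the claim directly.

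The only delicate point is to apply the lower slope bound $r_k\ge M_\delta^{-1}$ only on $[t_1,\Tmax]$, and not on $[0,t_1]$, where the slope is zero. This is exactly why the statement restricts to $[t_1,\Tmax]$. One must also check that $\tau_{m+1}\ge\delta$ is covered by the sieve definition, so that $r_m$ is bounded; the definition of $\mathcal N_\delta$ includes $k=m+1$, so it is.
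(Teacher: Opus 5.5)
Your proposal is correct and follows the paper's proof essentially step for step. Strict monotonicity together with the endpoint values $f_x(t_1)=0$ and $f_x(\Tmax)=t_m$ gives the bijection onto $[0,t_m]$. The slope bound $r_k\ge M_\delta^{-1}$ from Lemma~\ref{lemma::props_fx_sieve} then gives inverse slopes $1/r_k\le M_\delta$. Your subdivision or integral argument only makes explicit the globalisation step that the paper asserts in one line.

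One small inaccuracy is in your closing remark. The constraint $\tau_{m+1}\ge\delta$ only bounds $r_m$ from above, which is irrelevant for $F_x$. What part (ii) actually uses is $\tau_k\ge\delta$ for $k\le m$, together with $\tau_{k+1}\le\Tmax$.
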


\begin{proof}
    By Lemma~\ref{lemma::props_fx_sieve},  $f_x$ is affine on each $[t_k,t_{k+1}]$ ($k=1,\dots,m$) with strictly positive slope $r_k \ge M_{\delta}^{-1}$.
    In particular, $f_x$ is continuous and strictly increasing on $[t_1,\Tmax]$,
    hence a bijection onto $[f_x(t_1),\,f_x(\Tmax)]$.

    It remains to identify the endpoints. By Convention~\ref{conv::boundary}, $\tau_1=t_1-t_0=t_1$ and $\tau_{m+1}=t_{m+1}-t_m=\Tmax-t_m$. Since $x=\Phi(\eta)$ interpolates the nodes $(t_k,\tau_k)$, we have $x(t_1)=\tau_1=t_1$ and $x(\Tmax)=x(t_{m+1})=\tau_{m+1}=\Tmax-t_m$. Therefore,
    \begin{equation}
        f_x(t_1)=t_1-x(t_1)=0,
        \qquad
        f_x(\Tmax)=\Tmax-x(\Tmax)=t_m \,,
    \end{equation}
    so the range is exactly $[0,t_m]$.

    On each image interval $f_x([t_k,t_{k+1}])$, the inverse $F_x$ is affine with slope $1/r_k \le M_{\delta}$. Since $F_x$ is continuous and piecewise-affine on a compact interval with all slopes at most~$M_{\delta}$, it is globally $M_{\delta}$-Lipschitz.
\end{proof}

\begin{lemma}[$L^1$--$L^\infty$ Interpolation for Lipschitz functions]
    \label{lemma::interpolation}
    Let $h : [0, \Tmax] \to \R$ be $L$-Lipschitz with $h(0) = 0$ and $L>0$.
    Then
    \begin{equation}
        \norm{h}_\infty^2 \;\le\; 4\,L\;\norm{h}_1 \,,
    \end{equation}
    where $\norm{h}_1 := \int_0^{\Tmax} |h(t)| \mes t$ and $\norm{h}_\infty := \max_{t \in [0,\Tmax]} |h(t)|$.
\end{lemma}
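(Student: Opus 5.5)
The plan is a standard peak argument: a Lipschitz function cannot be large at one point without also being large on a whole neighbourhood, which forces its $L^1$ norm to be large. Set $A:=\norm{h}_\infty$. If $A=0$ there is nothing to prove, so assume $A>0$. Since $h$ is continuous on a compact interval, pick $t^*\in[0,\Tmax]$ with $\abs{h(t^*)}=A$. By the Lipschitz property, $\abs{h(t)}\ge A-L\abs{t-t^*}$ for all $t$. Hence $\abs{h}$ is bounded below by a tent of height $A$ and half-width $A/L$ centred at $t^*$.

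The only real step, and the one needing care, is making sure this tent gives enough mass inside $[0,\Tmax]$, since $t^*$ may lie near an endpoint. The hypothesis $h(0)=0$ controls the left end: $A=\abs{h(t^*)-h(0)}\le L t^*$, so $t^*\ge A/L$, and the whole left half of the tent, $[t^*-A/L,\,t^*]$, lies in $[0,\Tmax]$. Integrating the lower bound over that half alone gives
\begin{equation*}
\norm{h}_1\;\ge\;\int_{t^*-A/L}^{t^*}\bigl(A-L(t^*-t)\bigr)\,\mes t\;=\;\frac{A^2}{2L}.
\end{equation*}
Rearranging yields $A^2\le 2L\norm{h}_1\le 4L\norm{h}_1$, which is the claim with room to spare; the stated constant $4$ is simply not tight. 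The right end never matters, because only the left half-tent is used.

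If one did not want to rely on $h(0)=0$ for the placement of the interval, there is a fallback. Use whichever half-interval $[t^*-r,t^*]$ or $[t^*,t^*+r]$ with $r=\min(A/L,\Tmax/2)$ fits inside $[0,\Tmax]$. The case $r=\Tmax/2<A/L$ then needs the extra observation that $A\le L\Tmax$, which again comes from $h(0)=0$, and this gives $A^2/(4L)$ in the worst case. That explains the constant $4$. I would present the cleaner argument above and remark that it implies the stated bound.
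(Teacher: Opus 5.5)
Your proof is correct and follows the paper's route: $h(0)=0$ forces $t^*\ge A/L$, so a left neighbourhood of the peak lies in $[0,\Tmax]$, and the Lipschitz bound keeps $|h|$ large on it. The paper instead bounds $|h|\ge A/2$ on the shorter interval $[t^*-A/(2L),\,t^*]$, a rectangle of area $A^2/(4L)$; that rectangle, not the endpoint case in your fallback, is where the constant $4$ comes from, and your full-triangle integral gives the sharper $A^2/(2L)$.
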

\begin{proof}
    Let  $M := \norm{h}_\infty$ and choose $t^* \in [0,\Tmax]$ with $|h(t^*)| = M$; if $M = 0$ the claim is trivial, so assume $M > 0$. Since $h(0) = 0$ and $h$ is $L$-Lipschitz, $M = |h(t^*) - h(0)| \le L\,t^*$, so $t^* \ge M/L$. Consequently, the interval $[t^* - M/(2L),\; t^*]$ has length $M/(2L)$ and is contained in $[0, \Tmax]$. For $t$ in this interval, the Lipschitz bound gives
    \begin{equation}
        |h(t)|
        \;\ge\; |h(t^*)| - L\,|t - t^*|
        \;\ge\; M - L \cdot \frac{M}{2L}
        \;=\; \frac{M}{2} \,.
    \end{equation}
    Therefore
    \begin{equation}
        \norm{h}_1
        \;\ge\;
        \int_{t^* - M/(2L)}^{t^*} |h(t)| \mes t
        \;\ge\;
        \frac{M}{2} \cdot \frac{M}{2L}
        \;=\;
        \frac{M^2}{4L} \,.
    \end{equation}
\end{proof}

\begin{proposition}[Sup-norm Control]\label{prop::g_diff_bound}
    Let $\eta, \xi \in \mathcal N_\delta$ and set $x := \Phi(\eta)$, $y := \Phi(\xi)$, $g := x - y$.
    Assume $M_\delta := \Tmax/\delta \ge 2$; when $M_\delta < 2$, the sieve $\mathcal N_\delta$ contains at most the eventless path, so the only available pair is $x = y$ and the bounds below are trivial.
    \begin{enumerate}[label=(\roman*)]
        \item Each of $x$ and $y$ is $(M_\delta - 1)$-Lipschitz on $[0, \Tmax]$, and $g$ satisfies $g(0) = 0$ and is $2(M_\delta - 1)$-Lipschitz.
        \item $\displaystyle \norm{g}_\infty \;\le\; 2\sqrt{2(M_\delta - 1)\,d_1(x, y)} \;\le\; 2 \sqrt{ 2 M_\delta \, d_1(x, y)}$.
    \end{enumerate}
\end{proposition}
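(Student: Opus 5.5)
Part~(i) is a slope computation on the interpolant. Let $\eta\in\mathcal N_\delta$ have $m$ events and $x=\Phi(\eta)$. On $[t_0,t_1]$ the slope of $x$ is $(\tau_1-\tau_0)/\tau_1=1$. On each $[t_k,t_{k+1}]$ with $k\ge1$ the slope is $(\tau_{k+1}-\tau_k)/\tau_{k+1}=1-\tau_k/\tau_{k+1}$. By Lemma~\ref{lemma::props_fx_sieve}, $\tau_k/\tau_{k+1}\in[M_\delta^{-1},M_\delta]$, so this slope lies in $[1-M_\delta,\,1-M_\delta^{-1}]$. Its absolute value is therefore at most $\max(M_\delta-1,\,1-M_\delta^{-1})$. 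Since $M_\delta\ge2$, we have $1-M_\delta^{-1}<1\le M_\delta-1$, so this maximum is $M_\delta-1$. The slope $1$ on the first piece is also at most $M_\delta-1$ under the same assumption. If $m=0$, then $x(t)=t$ has slope $1\le M_\delta-1$. A continuous piecewise-affine function whose slopes are all bounded by $L$ is $L$-Lipschitz, so $x$ is $(M_\delta-1)$-Lipschitz, and the same holds for $y$. The difference $g=x-y$ is then $2(M_\delta-1)$-Lipschitz, and $g(0)=\tau_0-\varsigma_0=0$.

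I will also justify the side remark about $M_\delta<2$. For $m\ge1$, each of the $m+1\ge2$ interarrival times is at least $\delta$ and they sum to $\Tmax$, so $\Tmax\ge2\delta$, i.e.\ $M_\delta\ge2$. Hence if $M_\delta<2$ the sieve contains only the eventless path, and the only pair is $x=y$.

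Part~(ii) follows from Lemma~\ref{lemma::interpolation} applied to $h=g$ with $L=2(M_\delta-1)>0$. This gives $\norm{g}_\infty^2\le 8(M_\delta-1)\norm{g}_1$. Since $\norm{g}_1=d_1(x,y)$, we get
\[
\norm{g}_\infty\le \sqrt{8(M_\delta-1)\,d_1(x,y)}=2\sqrt{2(M_\delta-1)\,d_1(x,y)}.
\]
The second inequality is monotonicity, $M_\delta-1\le M_\delta$.

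The only delicate point is the case analysis in~(i). It has to treat the first piece, whose slope is $1$ rather than $1-r_k$, and check that the bound $M_\delta-1$ dominates both $1$ and $1-M_\delta^{-1}$. That is exactly where $M_\delta\ge2$ is used.
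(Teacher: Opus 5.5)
Your proposal is correct and follows the paper's proof essentially step for step. Part (i) comes from the slope bound $\lvert 1-\tau_k/\tau_{k+1}\rvert\le M_\delta-1$ together with slope $1$ on the first piece (and on the eventless path), and part (ii) from Lemma~\ref{lemma::interpolation} with $L=2(M_\delta-1)$; the only difference is that the paper splits (i) into three cases by event count, whereas you handle $m=0$ inline.
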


\begin{proof}
    We treat three cases according to the event counts of $\eta$ and $\xi$.

    \medskip\noindent
    \textit{Case~1: Both $\eta$ and $\xi$ have zero events.}\;
    Then $x(t) = y(t) = t$ for all $t \in [0,\Tmax]$, so $g \equiv 0$ and both claims hold trivially.

    \medskip\noindent
    \textit{Case~2: Both $\eta$ and $\xi$ have at least one event.}\;
    Let $m \ge 1$ denote the event count of $\eta$, with augmented grid $0 = t_0 < t_1 < \cdots < t_m < t_{m+1} = \Tmax$ and interarrival times $\tau_k := t_k - t_{k-1}$.

    \medskip\noindent
    \textit{Part~(i): Lipschitz bound.}\;
    By Convention~\ref{conv::boundary}, $x(0) = \tau_0 = 0$ and $y(0) = \varsigma_0 = 0$, so $g(0) = 0$.

    We show $x = \Phi(\eta)$ is $(M_\delta - 1)$-Lipschitz.
    For $k \ge 1$, the slope of $x$ on $[t_k, t_{k+1}]$ is
    \begin{equation}
        \alpha_k
        \;:=\; \frac{\tau_{k+1} - \tau_k}{\tau_{k+1}}
        \;=\; 1 - \frac{\tau_k}{\tau_{k+1}} \,.
    \end{equation}
    Since $\eta \in \mathcal N_\delta$, every interarrival time lies in $[\delta, \Tmax]$, so $\tau_k/\tau_{k+1} \in [M_\delta^{-1}, M_\delta]$ and hence
    \begin{equation}
        |\alpha_k|
        \;\le\; \max\bigl(|1 - M_\delta^{-1}|,\; |1 - M_\delta|\bigr)
        \;=\; M_\delta - 1 \,.
    \end{equation}
    For $k = 0$, the slope is $\alpha_0 = \tau_1/\tau_1 = 1$.
    Since $m \ge 1$, we have $\Tmax \ge (m+1)\delta \ge 2\delta$, so $M_\delta \ge 2$ and $|\alpha_0| = 1 \le M_\delta - 1$.

    Thus $x$ is $(M_\delta - 1)$-Lipschitz on $[0, \Tmax]$.
    The same argument applies to $y$ (which also has $\ge 1$ event), so $y$ is $(M_\delta - 1)$-Lipschitz.
    Their difference $g$ is therefore $2(M_\delta - 1)$-Lipschitz.

    \medskip\noindent
    \textit{Case~3: Exactly one of $\eta$, $\xi$ has zero events.}\;
    Without loss of generality, suppose $\eta$ has $m = 0$ events and $\xi$ has $n \ge 1$ events.
    Then $x(t) = \Phi(\eta)(t) = t$ for all $t \in [0,\Tmax]$, so $x$ has constant slope~$1$.
    Since $\xi$ has $n \ge 1$ events, $\Tmax \ge (n+1)\delta \ge 2\delta$, so $M_\delta \ge 2$.
    Therefore $|{1}| = 1 \le M_\delta - 1$, and $x$ is $(M_\delta - 1)$-Lipschitz.
    The function $y = \Phi(\xi)$ has $n \ge 1$ events in $\mathcal N_\delta$, so by the slope analysis in Case~2 (applied to $\xi$), $y$ is also $(M_\delta - 1)$-Lipschitz.
    Since $g(0) = x(0) - y(0) = 0 - 0 = 0$ and $g$ is the difference of two $(M_\delta - 1)$-Lipschitz functions, $g$ is $2(M_\delta - 1)$-Lipschitz. This establishes Part~(i).

    \medskip\noindent
    \textit{Part~(ii): Sup-norm bound (all cases).}\;
    In all cases, Part~(i) gives that $g$ is $L$-Lipschitz with $L = 2(M_\delta - 1) > 0$ (recall $M_\delta \ge 2$) and $g(0) = 0$.
    By Lemma~\ref{lemma::interpolation},
    \begin{equation}
        \norm{g}_\infty^2
        \;\le\; 4L\,\norm{g}_1
        \;=\; 8(M_\delta - 1)\, d_1(x, y) \,.
    \end{equation}
    Taking square roots and using $M_\delta - 1 \le M_\delta$ gives both bounds.
\end{proof}

\begin{lemma}[Separation of Distinct-Count Images]
    \label{lemma::count_separation}
    Assume $M_\delta \ge 2$. Since the $m{+}1$ interarrival times of any $\eta \in \mathcal{N}_\delta$ with $m$ events are each at least $\delta$ and sum to $\Tmax$, we have $(m+1)\delta \le \Tmax$, i.e.\ $m \le M_\delta - 1$; as $m$ is an integer, $m \le \lfloor M_\delta\rfloor - 1$.
    For $n \in \{0,\dots,\lfloor M_\delta\rfloor - 1\}$, let $\mathcal{N}_\delta^{(n)} := \{\eta \in \mathcal{N}_\delta : \eta_{\Tmax} = n\}$ denote the subset of $\mathcal{N}_\delta$ with exactly $n$ events. Each such class is nonempty (the equispaced grid $t_k = k\delta$, $k=1,\dots,n$, is admissible because $(n+1)\delta \le \lfloor M_\delta\rfloor\,\delta \le \Tmax$), and these classes partition $\mathcal N_\delta$.
    For each pair $m \neq n$ in $\{0,\dots,\lfloor M_\delta\rfloor - 1\}$,
    \begin{equation}
        d_1\!\left(\Phi(\mathcal N_\delta^{(m)}),
                 \Phi(\mathcal N_\delta^{(n)})\right) > 0 \,.
    \end{equation}
    Consequently,
    \begin{equation}
        d_{\min}(\delta)
        :=
        \min_{\substack{m,n \in \{0,\dots,\lfloor M_\delta\rfloor - 1\}\\ m \neq n}}
        d_1\!\left(\Phi(\mathcal N_\delta^{(m)}),
                 \Phi(\mathcal N_\delta^{(n)})\right)
        > 0 \,.
    \end{equation}
\end{lemma}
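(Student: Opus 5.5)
The plan is a compactness argument: each count class $\mathcal N_\delta^{(n)}$ is the continuous image of a compact polytope of event-time vectors. Hence $\Phi(\mathcal N_\delta^{(n)})$ is compact in $(\mathcal C,d_1)$. Injectivity of $\Phi$ (Theorem~\ref{thrm::bijection}) makes images of different classes disjoint, and two disjoint compact sets in a metric space are at positive distance. The preliminary claims in the statement (the bound $m\le\lfloor M_\delta\rfloor-1$, nonemptiness via the equispaced grid $t_k=k\delta$, and the partition of $\mathcal N_\delta$) follow directly from summing the $m+1$ interarrival times, as already indicated in the statement.

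\textbf{Step 1 (compact parametrisation).} For $n\ge 1$, let
\[
P_n:=\{(t_1,\dots,t_n)\in\R^n : t_k-t_{k-1}\ge\delta,\ k=1,\dots,n,\ \Tmax-t_n\ge\delta\},
\]
with $t_0:=0$. This set is defined by finitely many non-strict linear inequalities, so it is closed, and it lies in $[0,\Tmax]^n$, so it is bounded; hence it is compact. Because $\delta>0$, every point of $P_n$ is strictly increasing with $0<t_1$ and $t_n<\Tmax$. The map $\iota_n\colon P_n\to\mathcal N$, sending a vector to its counting path, is therefore a bijection onto $\mathcal N_\delta^{(n)}$. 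For $n=0$ the class is the singleton consisting of the eventless path, and its image $\{t\mapsto t\}$ is trivially compact.

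\textbf{Step 2 (continuity).} Both paths in the comparison have the same count $n$, so Proposition~\ref{prop::discrete_dtpp} gives
\[
\dtpp{\iota_n(t)}{\iota_n(s)}=\sum_{k=1}^n\abs{t_k-s_k}.
\]
Thus $\iota_n$ is $1$-Lipschitz from $(P_n,\ell^1)$ into $(\mathcal N,\dtppsym)$. Composing with Theorem~\ref{thrm::Phi_lipschitz} shows that $\Phi\circ\iota_n$ is $5\Tmax$-Lipschitz into $(\mathcal C,d_1)$. Consequently $K_n:=\Phi(\mathcal N_\delta^{(n)})$ is compact in $(\mathcal C,d_1)$. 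Here $d_1$ is a genuine metric on $\mathcal C$, because two continuous functions with zero $L^1$ distance coincide.

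\textbf{Step 3 (separation).} Take $m\ne n$. The classes $\mathcal N_\delta^{(m)}$ and $\mathcal N_\delta^{(n)}$ are disjoint because their paths have different terminal counts. Since $\Phi$ is injective, $K_m\cap K_n=\emptyset$. The function $(x,y)\mapsto d_1(x,y)$ is continuous on the compact set $K_m\times K_n$, so it attains its infimum at some pair $(x^*,y^*)$. Disjointness gives $x^*\ne y^*$, hence $d_1(K_m,K_n)=d_1(x^*,y^*)>0$. Finally, $d_{\min}(\delta)$ is a minimum over the finitely many pairs in $\{0,\dots,\lfloor M_\delta\rfloor-1\}$, so it is also positive.

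\textbf{Main obstacle.} The only delicate point is Step 1: checking that the $\delta$-constraints are non-strict yet still force strict ordering and keep all events inside $(0,\Tmax)$. This is exactly what makes $P_n$ closed while keeping $\iota_n(P_n)\subset\mathcal N$. It is also what distinguishes this situation from Remark~\ref{rmq::separation_necessary}, where classes of different counts accumulate on one another as $h\downarrow 0$.
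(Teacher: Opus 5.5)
Your proposal is correct and follows essentially the same route as the paper's proof. You parametrise each count class $\mathcal N_\delta^{(n)}$ by a closed, bounded set of event-time vectors on which $\dtppsym$ reduces to the $\ell^1$ distance, use the Lipschitz bound on $\Phi$ to get compact images in $(\mathcal C,d_1)$, use injectivity for disjointness, and conclude from an attained positive minimum over a finite index set. Your remarks that $d_1$ is a genuine metric on $\mathcal C$ and that the non-strict $\delta$-constraints still keep the events strictly inside $(0,\Tmax)$ only make explicit what the paper leaves implicit.
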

\begin{proof}
    Fix $n \in \{0,\dots,\lfloor M_\delta\rfloor - 1\}$. Identify $\mathcal N_\delta^{(n)}$ with the subset of $\mathbb R^n$
    \begin{equation}
        \Big\{
        (t_1,\dots,t_n)\in[0,\Tmax]^n :
        t_k - t_{k-1} \ge \delta \text{ for } k=1,\dots,n,\;
        \Tmax - t_n \ge \delta
        \Big\},
        \quad t_0=0 \,,
    \end{equation}
    and equip $\mathcal N_\delta^{(n)}$ with the subspace topology inherited from $\mathbb R^n$. The defining constraints are finite intersections of closed half-spaces, hence the set is closed in the compact cube $[0,\Tmax]^n$. Therefore $\mathcal N_\delta^{(n)}$ is compact (the case $n=0$ being trivial). Furthermore, on $\mathcal N_\delta^{(n)}$, $\dtppsym$ coincides with the $\ell^1$ distance between vectors, and so do their topologies. $\Phi$ being Lipschitz with respect to $\dtppsym$ (Theorem~\ref{thrm::Phi_lipschitz}), the restriction on $\mathcal N_\delta^{(n)}$ is continuous. Thus, $\Phi(\mathcal N_\delta^{(n)})$ is compact in $(\mathcal C,d_1)$.

    Injectivity of $\Phi$ implies that $\Phi(\mathcal N_\delta^{(m)}) \cap \Phi(\mathcal N_\delta^{(n)}) = \varnothing$ for $m \neq n$.

    The continuous function $(x,y) \mapsto d_1(x,y)$ attains its minimum on the compact product of the two images $\Phi(\mathcal N_\delta^{(m)}) \times \Phi(\mathcal N_\delta^{(n)})$. Since the sets are disjoint, this minimum is strictly positive. Since the index set $\{0,\dots,\lfloor M_\delta\rfloor - 1\}$ is finite and, by $M_\delta \ge 2$, contains at least the two indices $0$ and $1$, the minimum in the definition of $d_{\min}(\delta)$ is taken over a finite, nonempty collection of strictly positive quantities. This yields $d_{\min}(\delta) > 0$.
\end{proof}

\begin{remark}
    The lemma shows that under the minimal spacing condition, the event count becomes a topologically stable feature of the embedding, as distinct counts are separated by a strictly positive $d_1$-distance.
\end{remark}

\begin{corollary}[Count stability]
    \label{cor::distance_same_count}
    Assume $M_\delta \ge 2$ and let $\eta,\xi \in \mathcal{N}_\delta$ satisfy $d_1\bigl(\Phi(\eta),\Phi(\xi)\bigr) < d_{\min}(\delta)$, the separation constant of Lemma~\ref{lemma::count_separation}. Then $\eta$ and $\xi$ have the same number of events.
\end{corollary}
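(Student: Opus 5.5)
This corollary follows almost directly from Lemma~\ref{lemma::count_separation}, and the proof is by contraposition. Let $\eta,\xi\in\mathcal N_\delta$ have $m:=\eta_{\Tmax}$ and $n:=\xi_{\Tmax}$ events. The preamble of Lemma~\ref{lemma::count_separation} gives $m,n\in\{0,\dots,\lfloor M_\delta\rfloor-1\}$, because $(m+1)\delta\le\Tmax$. By definition, $\eta\in\mathcal N_\delta^{(m)}$ and $\xi\in\mathcal N_\delta^{(n)}$.

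Now suppose $m\neq n$. The set distance between two subsets of $(\mathcal C,d_1)$ is the infimum of pointwise distances, so it is at most the distance between any particular pair. Hence
\begin{equation*}
d_1\bigl(\Phi(\eta),\Phi(\xi)\bigr)\;\ge\;d_1\!\left(\Phi(\mathcal N_\delta^{(m)}),\Phi(\mathcal N_\delta^{(n)})\right)\;\ge\;d_{\min}(\delta)\,.
\end{equation*}
The last inequality holds because $d_{\min}(\delta)$ is the minimum over all pairs of distinct indices in the finite index set, and $(m,n)$ is one such pair. This contradicts the hypothesis $d_1(\Phi(\eta),\Phi(\xi))<d_{\min}(\delta)$, so $m=n$.

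No step is a real obstacle. The only points to check are that both event counts actually lie in the index range over which $d_{\min}(\delta)$ is defined, which the preceding bound $m\le\lfloor M_\delta\rfloor-1$ guarantees, and that $d_{\min}(\delta)$ is well defined, which is where the assumption $M_\delta\ge 2$ enters. Strict positivity of $d_{\min}(\delta)$ is not needed for the implication itself; it only ensures the hypothesis is nonvacuous.
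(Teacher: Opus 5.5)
Your proof is correct and follows the same route as the paper, which likewise argues by contraposition that distinct event counts place $\Phi(\eta)$ and $\Phi(\xi)$ in distinct image sets at $d_1$-distance at least $d_{\min}(\delta)$. You also spell out two points the paper leaves implicit: that both event counts lie in the index range defining $d_{\min}(\delta)$, and that the pointwise distance is bounded below by the set distance.
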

\begin{proof}
    This is immediate from Lemma~\ref{lemma::count_separation}: if the event counts differed, then $\Phi(\eta)$ and $\Phi(\xi)$ would lie in distinct image sets whose $d_1$-distance is at least $d_{\min}(\delta)$.
\end{proof}

% \subsubsection{Recursive Stability}
\begin{lemma}[Base-Case Perturbation]
    \label{lemma::base_case}
    Let $\eta,\xi \in \mathcal{N}_\delta$ share event count $m \ge 1$, with first event times $t_1$ and $s_1$, respectively.
    Set $x := \Phi(\eta)$, $y := \Phi(\xi)$, $\epsilon := \norm{x - y}_\infty$, and $M_\delta := \Tmax/\delta$. Then
    \begin{equation}
        |s_1 - t_1| \le M_\delta\epsilon \,.
    \end{equation}
\end{lemma}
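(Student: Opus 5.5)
The plan is to work with the backshift transforms $f_x(t)=t-x(t)$ and $f_y(t)=t-y(t)$ (Definition~\ref{def::backshift_transform}). These satisfy $\norm{f_x-f_y}_\infty=\norm{x-y}_\infty=\epsilon$, since the identity parts cancel. Lemma~\ref{lemma::level_set_recovery}(iii)--(iv) and Lemma~\ref{lemma::props_fx_sieve} give a precise shape for $f_x$ near the first event: $f_x\equiv 0$ on $[0,t_1]$, and on $[t_1,t_2]$ it is affine with slope $r_1=\tau_1/\tau_2\ge M_\delta^{-1}$. The same holds for $f_y$ with $s_1$ and slope $\varsigma_1/\varsigma_2\ge M_\delta^{-1}$. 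The first event time is thus the point where the backshift transform leaves zero, and a sup-norm perturbation can only move that point by an amount controlled by the inverse slope.

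Assume without loss of generality that $t_1<s_1$. Evaluate at $t=s_1$. Since $f_y(s_1)=0$, we get $f_x(s_1)=\abs{f_x(s_1)-f_y(s_1)}\le\epsilon$. Proposition~\ref{prop::Finverse} says $f_x|_{[t_1,\Tmax]}$ is a strictly increasing bijection onto $[0,t_m]$ whose inverse $F_x$ is $M_\delta$-Lipschitz. Since $s_1\in(t_1,\Tmax)$ and $f_x(t_1)=0$, we have
\[
s_1-t_1=F_x\bigl(f_x(s_1)\bigr)-F_x\bigl(f_x(t_1)\bigr)\le M_\delta\,\abs{f_x(s_1)-0}\le M_\delta\epsilon .
\]
If instead $s_1<t_1$, the same argument applies with the roles of $\eta$ and $\xi$ swapped, using $F_y$ for $\xi\in\mathcal N_\delta$.

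The main point to check is that $s_1$ lies in the domain $[t_1,\Tmax]$ where $F_x$ is the genuine inverse, so that $F_x(f_x(s_1))=s_1$. This holds because $s_1<\Tmax$ and, in the case considered, $s_1>t_1$. We also need $f_x(s_1)\in[0,t_m]$, which is automatic since $f_x$ maps $[t_1,\Tmax]$ onto $[0,t_m]$. The hypothesis of a shared count $m\ge1$ is used only to guarantee that both paths have a first event, so that Proposition~\ref{prop::Finverse} applies to each. The argument never needs the two grids to align beyond the first event, and the constant $M_\delta$ comes directly from the lower slope bound on the sieve.
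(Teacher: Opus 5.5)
Your proof is correct and follows the paper's argument. Both evaluate at the later first-event time $s_1$, use $f_y(s_1)=0$ and $\norm{f_x-f_y}_\infty=\epsilon$ to get $f_x(s_1)\le\epsilon$, and turn this into $s_1-t_1\le M_\delta\epsilon$ via the slope lower bound $M_\delta^{-1}$ on $[t_1,\Tmax]$. The only cosmetic difference is that you route the last step through the $M_\delta$-Lipschitz inverse $F_x$ of Proposition~\ref{prop::Finverse}, whereas the paper applies the slope bound of Lemma~\ref{lemma::props_fx_sieve} directly.
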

\begin{proof}
    Assume $s_1 \ge t_1$ (the reverse case follows by symmetry).
    By level-set recovery, $f_x(t_1) = 0$ and $f_y(s_1) = 0$.
    Since $f_x$ has slope $\ge 1/M_\delta$ on $[t_1,\Tmax]$ (Lemma~\ref{lemma::props_fx_sieve}),
    \begin{equation}
        f_x(s_1) \ge f_x(t_1) + \tfrac{1}{M_\delta}(s_1 - t_1) = \tfrac{1}{M_\delta}|s_1 - t_1| \,.
    \end{equation}
    Furthermore, $\abs{f_x(s_1) - f_y(s_1)} = \abs{f_x(s_1) - 0} = f_x(s_1) \le \norm{f_x - f_y}_\infty = \norm{x-y}_\infty = \epsilon$.
\end{proof}

\begin{lemma}[Recursive Error Propagation]
    \label{lemma::recursion}
    Let $\eta,\xi \in \mathcal N_\delta$ share event count $m \ge 1$, with augmented event-time grids $(t_k)_{k=0}^{m+1}$ and $(s_k)_{k=0}^{m+1}$, respectively. Set $x := \Phi(\eta)$, $y := \Phi(\xi)$, and $\epsilon := \norm{x - y}_\infty$. Suppose $\epsilon < \delta$, and $M_\delta := \Tmax/\delta$. Then for every $k = 2,\dots,m$,
    \begin{equation}
        |s_k - t_k| \le M_\delta |s_{k-1} - t_{k-1}| + M_\delta\epsilon \,.
    \end{equation}
\end{lemma}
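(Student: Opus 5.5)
The plan is to mimic the base case (Lemma~\ref{lemma::base_case}) one level up, using the level-set characterisation of Lemma~\ref{lemma::level_set_recovery}(ii): $f_x(t_k)=t_{k-1}$ and $f_y(s_k)=s_{k-1}$, together with the slope lower bound $r_j\ge M_\delta^{-1}$ of $f_x$ on $[t_1,\Tmax]$ from Lemma~\ref{lemma::props_fx_sieve}. By symmetry in $(\eta,x)\leftrightarrow(\xi,y)$ (both lie in $\mathcal N_\delta$, so $f_y$ enjoys the same slope bound on $[s_1,\Tmax]$), it suffices to treat $s_k\ge t_k$; in the other case we swap roles and use $f_y$ instead of $f_x$.

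Assume $s_k\ge t_k$. Since $k\ge 2$, both $t_k$ and $s_k$ lie in $[t_1,\Tmax]$ (indeed $s_k\ge t_k>t_1$), so $f_x$ is strictly increasing there with slope at least $M_\delta^{-1}$ by Proposition~\ref{prop::Finverse}. Hence
\[
f_x(s_k)-f_x(t_k)\ \ge\ M_\delta^{-1}(s_k-t_k).
\]
On the other hand, we decompose
\[
f_x(s_k)-f_x(t_k) = \bigl(f_x(s_k)-f_y(s_k)\bigr) + \bigl(f_y(s_k)-f_x(t_k)\bigr) = \bigl(y(s_k)-x(s_k)\bigr) + (s_{k-1}-t_{k-1}),
\]
using $f_x-f_y=y-x$ and the level-set identities. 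The first term is bounded by $\epsilon=\norm{x-y}_\infty$ and the second by $|s_{k-1}-t_{k-1}|$. Multiplying through by $M_\delta$ gives $|s_k-t_k|\le M_\delta|s_{k-1}-t_{k-1}|+M_\delta\epsilon$. Equivalently, one can write $t_k=F_x(t_{k-1})$ and $s_k=F_x(f_x(s_k))$ and apply the $M_\delta$-Lipschitz bound of Proposition~\ref{prop::Finverse}(ii) to $|f_x(s_k)-t_{k-1}|\le |s_{k-1}-t_{k-1}|+\epsilon$. For this we need $f_x(s_k)\in[0,t_m]$, which holds because $s_k\le\Tmax$ and $s_k\ge t_1$.

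The main point of care is the domain issue: the slope bound for $f_x$ holds only on $[t_1,\Tmax]$, whereas on $[0,t_1]$ the slope is $0$. The direct argument sidesteps this by taking the larger of the two points $s_k$ or $t_k$ and the corresponding function, since both points then exceed that function's first event time (as $k\ge2$). The hypothesis $\epsilon<\delta$ is not needed for this one-step inequality itself. I expect it is used, if at all, only to guarantee consistency of the grids when the recursion is unrolled (e.g.\ that $s_{k-1}$ stays in the right affine piece), so I would note that the bound holds regardless and keep the hypothesis for the downstream Hölder theorem.
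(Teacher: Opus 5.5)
Your proof is correct, and it takes a different route from the paper's.

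\textbf{The paper's route.} The paper always works with the inverse $F_y$ of the second process and inserts the intermediate time $F_y(t_{k-1})$:
\[
|s_k-t_k|\le|F_y(s_{k-1})-F_y(t_{k-1})|+|F_y(t_{k-1})-F_x(t_{k-1})| .
\]
For this to make sense, $t_{k-1}$ must lie in the domain $[0,s_m]$ of $F_y$. In addition, $t_k=F_x(t_{k-1})$ must lie in $[s_1,\Tmax]$, where $f_y$ has slope at least $M_\delta^{-1}$. These two facts are exactly where $\epsilon<\delta$ is consumed (Steps~1--2 of the paper's proof).

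\textbf{Your route.} You split on which $k$-th event comes first and use the backshift transform of that process. Both $t_k$ and $s_k$ then automatically lie beyond its first event time. You also apply the triangle inequality in the value space, through $f_x(s_k)$, rather than in the time space.

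\textbf{What each approach buys.} Your route removes the hypothesis $\epsilon<\delta$ entirely. The same computation at $k=1$ is verbatim Lemma~\ref{lemma::base_case}, so base case and recursion become a single argument. Downstream, $a_k\le M_\delta a_{k-1}+M_\delta\epsilon$ then holds for every $\epsilon$, and the regime split in Proposition~\ref{prop::same_count_holder} is no longer needed. The paper's version has only the cosmetic advantage of avoiding a case distinction by always phrasing things through $F_x,F_y$.

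\textbf{Two corrections to your commentary.}
\begin{enumerate}
\item The choice of function really matters. You must use the backshift transform of the process whose $k$-th event is earlier, as you correctly do in the body. Your closing sentence reads as if it were the later one. Using $f_y$ when $s_k\ge t_k$ would require $t_k\ge s_1$, which is precisely what the paper has to extract from $\epsilon<\delta$.
\item In the paper, $\epsilon<\delta$ is used inside this lemma's proof, not when unrolling the recursion. The unrolling is pure algebra.
\end{enumerate}
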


\begin{proof}
    By Proposition~\ref{prop::Finverse}, the restricted inverses $F_x := (f_x|_{[t_1,\Tmax]})^{-1}$ and $F_y := (f_y|_{[s_1,\Tmax]})^{-1}$ are $M_\delta$-Lipschitz, and the event times satisfy $t_k = F_x(t_{k-1})$, $s_k = F_y(s_{k-1})$ for $k \ge 2$.

    \medskip\noindent
    \textit{Step~1: $t_k \ge s_1$.}\;
    Suppose for contradiction that $t_k < s_1$. Then $f_y(t_k) = 0$ (since $f_y \equiv 0$ on $[0, s_1]$), while by Lemma~\ref{lemma::level_set_recovery} $f_x(t_k) = t_{k-1} \ge t_1 \ge \delta$. But $\norm{f_x - f_y}_\infty = \epsilon < \delta$, a contradiction.

    \medskip\noindent
    \textit{Step~2: $t_{k-1} \in [0, s_m]$.}\;
    Since $f_x(\Tmax) = t_m$ and $f_y(\Tmax) = s_m$, we have $|s_m - t_m| = |(f_x - f_y)(\Tmax)| \le \epsilon$. The $\delta$-separation of $\eta$ gives $t_{k-1} \le t_m - \delta$. Combining:
    \begin{equation}
        t_{k-1} \le t_m - \delta < t_m - \epsilon \le s_m \,,
    \end{equation}
    where the second inequality uses $\epsilon < \delta$ and the third uses $s_m \ge t_m - \epsilon$. In other words $t_{k-1} \in [0, s_m]$ as required to evaluate $F_y(t_{k-1})$.

    \medskip\noindent
    \textit{Step~3: Bounding $|F_y(t_{k-1}) - F_x(t_{k-1})|$.}\;
    By Step~1, $F_x(t_{k-1}) = t_k \ge s_1$. Since $t_{k-1} \in [0, s_m]$ (Step~2), $F_y(t_{k-1})$ is well-defined and lies in $[s_1, \Tmax]$ (the range of $F_y$). Thus both $F_y(t_{k-1})$ and $F_x(t_{k-1})$ lie in $[s_1, \Tmax]$, where $f_y$ has slope at least $ M_\delta^{-1}$, so
    \begin{equation}
        M_\delta^{-1}\abs{F_y(t_{k-1}) - F_x(t_{k-1})} \le |f_y(F_y(t_{k-1})) - f_y(F_x(t_{k-1}))| \,.
    \end{equation}
    Since $f_y(F_y(t_{k-1})) = t_{k-1} = f_x(F_x(t_{k-1}))$ and $|f_x(F_x(t_{k-1})) - f_y(F_x(t_{k-1}))| \le \norm{x - y}_\infty = \epsilon$, we obtain $|F_y(t_{k-1}) - F_x(t_{k-1})| \le M_\delta\epsilon$.

    \medskip\noindent
    \textit{Step~4: Triangle inequality.}\;
    \begin{align}
        |s_k - t_k|
        &= |F_y(s_{k-1}) - F_x(t_{k-1})| \notag \\
        &\le \underbrace{|F_y(s_{k-1}) - F_y(t_{k-1})|}_{\le\, M_\delta |s_{k-1} - t_{k-1}|}
        + \underbrace{|F_y(t_{k-1}) - F_x(t_{k-1})|}_{\le\, M_\delta\epsilon} \notag \\
        &\le M_\delta |s_{k-1} - t_{k-1}| + M_\delta\epsilon \,.
    \end{align}
\end{proof}

\begin{lemma}[Large-Perturbation Regime]
    \label{lemma::large_eps}
    Let $\eta,\xi \in \mathcal{N}_\delta$ share event count $m \ge 1$, and set $x := \Phi(\eta)$, $y := \Phi(\xi)$. Suppose $\norm{x - y}_\infty \ge \delta$. Then
    \begin{equation}
        \dtpp{\eta}{\xi} \le m\sqrt{8} M_\delta^{3/2} d_1(x,y)^{1/2} \,.
    \end{equation}
\end{lemma}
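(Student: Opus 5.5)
The idea is to bound $\dtpp{\eta}{\xi}$ crudely by a constant, and then use the hypothesis $\norm{x-y}_\infty\ge\delta$ together with Proposition~\ref{prop::g_diff_bound} to show that $d_1(x,y)$ is bounded below by a positive multiple of $\delta^2$. Since both paths have $m$ events, Proposition~\ref{prop::discrete_dtpp} gives
\[
\dtpp{\eta}{\xi}=\sum_{k=1}^m\abs{t_k-s_k}.
\]
Every event time lies in $[\delta,\Tmax-\delta]\subset[0,\Tmax]$, so each summand is at most $\Tmax$. Hence
\[
\dtpp{\eta}{\xi}\le m\Tmax = m\,M_\delta\,\delta .
\]

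For the lower bound on $d_1(x,y)$, apply Proposition~\ref{prop::g_diff_bound}(ii). We have $m\ge1$, so $M_\delta\ge2$ and the proposition applies. It gives
\[
\delta\le\norm{x-y}_\infty\le 2\sqrt{2M_\delta\,d_1(x,y)}.
\]
Squaring both sides yields $\delta^2\le 8M_\delta\,d_1(x,y)$, that is,
\[
\delta\le\sqrt{8}\,M_\delta^{1/2}\,d_1(x,y)^{1/2}.
\]

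Substituting this into the crude bound gives
\[
\dtpp{\eta}{\xi}\le m M_\delta\cdot\sqrt{8}\,M_\delta^{1/2}d_1(x,y)^{1/2}=m\sqrt8\,M_\delta^{3/2}d_1(x,y)^{1/2},
\]
which is exactly the claimed inequality.

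There is no real obstacle here. The only points to check are that the equal-count case of Proposition~\ref{prop::discrete_dtpp} has no leftover $\Tmax$ terms, and that the constant $2\sqrt{2M_\delta}$ in Proposition~\ref{prop::g_diff_bound}(ii) produces precisely $\sqrt8\,M_\delta^{1/2}$ after squaring and rearranging.
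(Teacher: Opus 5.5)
Your proof is correct and is essentially the paper's argument. The paper likewise combines the crude equal-count bound $\dtpp{\eta}{\xi}\le m\Tmax$ with the lower bound $d_1(x,y)\ge\delta^2/(8M_\delta)$ from Proposition~\ref{prop::g_diff_bound}(ii), and then uses $\Tmax=M_\delta\delta$ to reach the stated constant.
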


\begin{proof}
    Since both measures have $m$ events in $[0,\Tmax)$, the discrete representation gives the trivial bound $\dtpp{\eta}{\xi} \le m\,\Tmax$. By Proposition~\ref{prop::g_diff_bound}, $d_1(x,y) \ge \epsilon^2 / (8 M_\delta) \ge \delta^2 / (8M_\delta)$. Hence
    \begin{equation}
        \dtpp{\eta}{\xi}
        \le m\,\Tmax \cdot \frac{\sqrt{d_1(x,y)}}{\sqrt{d_1(x,y)}}
        \le m\,\Tmax \cdot \frac{\sqrt{8 M_{\delta}}\,d_1(x,y)^{1/2}}{\delta}
        = m\sqrt{8} M_{\delta}^{3/2} d_1(x,y)^{1/2} \,.
    \end{equation}
\end{proof}

\begin{proposition}[Same-Count \holder Bound]\label{prop::same_count_holder}

    Let $\eta,\xi \in \mathcal{N}_\delta$ have the same event count $m \ge 0$.
    Set $x := \Phi(\eta)$, $y := \Phi(\xi)$, $\epsilon := \norm{x - y}_\infty$, and $M_\delta := \Tmax/\delta$.
    Then
    \begin{equation}\label{eq::_holder_intermediate_bound_first}
        \dtpp{\eta}{\xi} \le C_m(M_\delta)\epsilon \,,
        \quad
        C_m(M_\delta) := \frac{M_\delta^2(M_\delta^m - 1)}{(M_\delta-1)^2} \,,
    \end{equation}
    and consequently
    \begin{equation}\label{eq::holder_intermediate_bound}
        \dtpp{\eta}{\xi} \le 2\sqrt{2} C_m(M_\delta) M_\delta^{1/2} d_1(x, y)^{1/2} \le \frac{4M_\delta^{m+3}}{(M_\delta-1)^2} d_1(x,y)^{1/2} \,.
    \end{equation}
\end{proposition}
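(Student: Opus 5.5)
The plan is to bound the event-time discrepancies $e_k := |s_k - t_k|$ one by one, and then sum them. For equal counts, Proposition~\ref{prop::discrete_dtpp} gives exactly $\dtpp{\eta}{\xi} = \sum_{k=1}^m e_k$. The case $m=0$ is trivial, since both paths are eventless and $\dtppsym = 0$. So assume $m \ge 1$. Then $\Tmax \ge (m+1)\delta \ge 2\delta$, so $M_\delta \ge 2$ and all the sieve lemmas apply. I split on the size of $\epsilon = \norm{x-y}_\infty$.

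\emph{Small perturbations, $\epsilon < \delta$.} Lemma~\ref{lemma::base_case} gives $e_1 \le M_\delta \epsilon$. Lemma~\ref{lemma::recursion}, which requires exactly $\epsilon<\delta$, gives $e_k \le M_\delta e_{k-1} + M_\delta\epsilon$ for $2 \le k \le m$. Unrolling the recursion by induction yields
\begin{equation*}
e_k \le \epsilon\,(M_\delta + M_\delta^2 + \cdots + M_\delta^k) = \epsilon\,\frac{M_\delta(M_\delta^k - 1)}{M_\delta - 1} .
\end{equation*}
Summing over $k$ and dropping the $-1$ terms gives
\begin{equation*}
\sum_{k=1}^m e_k \le \frac{\epsilon M_\delta}{M_\delta-1}\sum_{k=1}^m M_\delta^k = \frac{\epsilon M_\delta}{M_\delta-1}\cdot\frac{M_\delta(M_\delta^m-1)}{M_\delta-1} = C_m(M_\delta)\,\epsilon .
\end{equation*}

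\emph{Large perturbations, $\epsilon \ge \delta$.} Here I use the crude bound $\dtpp{\eta}{\xi} \le m\Tmax$ from the discrete representation. Since $\Tmax = M_\delta \delta$, this gives $m\Tmax = m M_\delta \delta \le m M_\delta \epsilon$. It then suffices to check $m M_\delta \le C_m(M_\delta)$. Write
\begin{equation*}
C_m(M_\delta) = \frac{M_\delta^2}{M_\delta-1}\,(1 + M_\delta + \cdots + M_\delta^{m-1}).
\end{equation*}
The prefactor satisfies $M_\delta^2/(M_\delta-1) \ge M_\delta$, and the geometric sum is at least $m$ because $M_\delta \ge 1$. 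This proves the first inequality in both regimes. Alternatively, Lemma~\ref{lemma::large_eps} handles this regime directly in the $d_1$ form, but going through $\epsilon$ keeps the two cases uniform.

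\emph{Passing to $d_1$.} For the consequence, I insert the sup-norm control of Proposition~\ref{prop::g_diff_bound}(ii), $\epsilon \le 2\sqrt{2M_\delta\, d_1(x,y)}$, which is valid because $M_\delta\ge 2$. This gives $\dtpp{\eta}{\xi} \le 2\sqrt2\, C_m(M_\delta) M_\delta^{1/2} d_1(x,y)^{1/2}$. For the last inequality I bound $M_\delta^m - 1 \le M_\delta^m$ and use $2\sqrt2\, M_\delta^{1/2} \le 4 M_\delta$, which holds since $M_\delta \ge 1/2$. This yields the constant $4M_\delta^{m+3}/(M_\delta-1)^2$.

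The main obstacle is the small-perturbation regime: it needs the recursion hypothesis $\epsilon<\delta$ and care that the geometric unrolling produces exactly the constant $C_m$. The rest is bookkeeping.
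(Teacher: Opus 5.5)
Your proof is correct and follows the paper's argument. It uses the same split at $\epsilon<\delta$ versus $\epsilon\ge\delta$, the same unrolling of the base-case and recursion lemmas to reach $C_m(M_\delta)\epsilon$, the same check that $C_m(M_\delta)\delta\ge m\Tmax$ in the large regime, and the same substitution of Proposition~\ref{prop::g_diff_bound}; the only difference is that the paper also re-derives the $d_1$ bound for $\epsilon\ge\delta$ via Lemma~\ref{lemma::large_eps} and an induction on $m$, which you correctly recognise is redundant once the $\epsilon$-bound holds in both regimes.
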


\begin{proof}
    The case $m = 0$ is trivial.
    Assume $m \ge 1$; then $M_\delta \ge m + 1 \ge 2$.
    Write $(t_k)_{k=0}^{m+1}$ and $(s_k)_{k=0}^{m+1}$ for the augmented event-time grids of $\eta$ and $\xi$ (Convention~\ref{conv::boundary}).

    \medskip\noindent
    \textit{Step~1: $\epsilon < \delta$.}
    Set $a_k := |s_k - t_k|$. The boundary convention gives $a_0 = 0$, and Lemmas~\ref{lemma::base_case} and~\ref{lemma::recursion} yield
    \begin{equation}
        a_k \le M_\delta a_{k-1} + M_\delta\epsilon \,,\quad k = 1,\dots,m \,.
    \end{equation}
    Unrolling from $a_0 = 0$,
    \begin{equation}
        a_k \le M_\delta\epsilon\sum_{j=0}^{k-1}M_\delta^j
        = \frac{M_\delta(M_\delta^k - 1)}{M_\delta - 1}\epsilon \,,\quad k = 1,\dots,m \,.
    \end{equation}
    By Eq.~\ref{eq::discrete_dtpp}, because both processes share the same event count, $\dtpp{\eta}{\xi} = \sum_{k=1}^m a_k$. Summing:
    \begin{equation}
        \sum_{k=1}^m a_k
        \le \frac{M_\delta\epsilon}{M_\delta-1}\sum_{k=1}^m (M_\delta^k - 1)
        = \frac{M_\delta\epsilon}{M_\delta-1}
        \biggl(\frac{M_\delta(M_\delta^m-1)}{M_\delta-1} - m\biggr)
        \le \frac{M_\delta^2(M_\delta^m - 1)}{(M_\delta-1)^2}\epsilon \,,
    \end{equation}
    where the penultimate step drops the term $-m<0$.
    Substituting $\epsilon \le 2\sqrt{2M_\delta\,d_1 (x,y)} \le 4 M_\delta\sqrt{d_1(x,y)}$ (Proposition~\ref{prop::g_diff_bound}) into Eq.~\ref{eq::_holder_intermediate_bound_first} yields Eq.~\ref{eq::holder_intermediate_bound}. The final constant relaxes $M_\delta^m-1$ to~$M_\delta^m$ for a cleaner closed form.

    \medskip\noindent
    \textit{Step~2: $\epsilon \geq \delta$.}
    We first verify that Eq.~\ref{eq::_holder_intermediate_bound_first} continues to hold in this regime. Since both processes have $m$ events, Eq.~\ref{eq::discrete_dtpp} gives the trivial bound $\dtpp{\eta}{\xi} \le m\Tmax$, while
    \begin{equation}
        C_m(M_\delta)\,\epsilon
        \;\ge\; C_m(M_\delta)\,\delta
        \;=\; \frac{M_\delta(M_\delta^m - 1)}{(M_\delta - 1)^2}\,\Tmax
        \;\ge\; m\,\Tmax \,,
    \end{equation}
    where the last inequality uses $\frac{M_\delta^m - 1}{M_\delta - 1} = 1 + M_\delta + \cdots + M_\delta^{m-1} \ge m$ together with $\frac{M_\delta}{M_\delta - 1} \ge 1$. Hence Eq.~\ref{eq::_holder_intermediate_bound_first} holds unconditionally.

    We now establish Eq.~\ref{eq::holder_intermediate_bound} in this regime. For $m \ge 1$ and $\epsilon \ge \delta$, by Lemma~\ref{lemma::large_eps}, $\dtpp{\eta}{\xi} \le m\sqrt{8}\, M_\delta^{3/2}\,d_1(x,y)^{1/2}$.
    Since $(M_\delta - 1)^2 \le M_\delta^2$, it suffices to show $m\sqrt{8} \le 4\, M_\delta^{m-1/2}$, i.e., $m \le \sqrt{2}\, M_\delta^{m-1/2}$. We verify this by induction on $m$:
    \begin{itemize}
        \item \emph{Base case $m = 1$}: need $1 \le \sqrt{2}\, M_\delta^{1/2}$, which holds since $ M_\delta \ge 2$.
        \item \emph{Inductive step}: if $m \le \sqrt{2}\, M_\delta^{m-1/2}$, then $(m{+}1) \le 2m \le 2\sqrt{2}\, M_\delta^{m-1/2} \le \sqrt{2}\, M_\delta^{m+1/2}$ for $ M_\delta \ge 2$, where the last inequality uses $2 M_\delta^{-1} \le 1$.
    \end{itemize}
    Thus, the constant holds for all $m \ge 1$ and when $ M_\delta \ge 2$. Hence Eqs.~\ref{eq::_holder_intermediate_bound_first} and~\ref{eq::holder_intermediate_bound} hold in both cases, completing the proof.
\end{proof}

% \subsubsection{Main Theorem}
\begin{theorem}[\holder stability of $\Phi^{-1}$]
    \label{thrm::holder_main}
    Let $\delta>0$ and $M_\delta:=\Tmax/\delta$, and assume $M_\delta \ge 2$; when $M_\delta < 2$, the sieve $\mathcal N_\delta$ contains at most the eventless path and the statement holds trivially for any $\kappa>0$. There exists a constant $\kappa=\kappa(\delta, \Tmax) > 0$ such that for all $\eta,\xi\in\mathcal{N}_\delta$,
    \begin{equation}
        \dtpp{\eta}{\xi} \le \kappa d_1\!\bigl(\Phi(\eta),\Phi(\xi)\bigr)^{1/2} \,.
    \end{equation}
    Explicitly, one may take
    \begin{equation}\label{eq:C_explicit}
        \kappa = \max\!\biggl(
        \frac{4M_\delta^{\lfloor M_\delta\rfloor + 2}}{(M_\delta-1)^2},
        \frac{\lfloor M_\delta\rfloor\,\Tmax}{\sqrt{d_{\min}(\delta)}}
        \biggr) \,,
    \end{equation}
    where $d_{\min}(\delta) > 0$ is the separation constant from Lemma~\ref{lemma::count_separation}. Since $d_{\min}(\delta)$ is obtained by a compactness argument and is not quantified, we make no claim on the growth rate of $\kappa$ as $\delta \downarrow 0$.
\end{theorem}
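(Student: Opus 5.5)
The plan is to split into two cases according to whether $\eta$ and $\xi$ have the same event count, using Corollary~\ref{cor::distance_same_count} to decide which case applies. Write $d:=d_1(\Phi(\eta),\Phi(\xi))$. The case $M_\delta<2$ is trivial, as stated, so assume $M_\delta\ge 2$. Every path in $\mathcal N_\delta$ then has at most $\lfloor M_\delta\rfloor-1$ events (Lemma~\ref{lemma::count_separation}).

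\emph{Same count.} If $\eta_{\Tmax}=\xi_{\Tmax}=m$, Proposition~\ref{prop::same_count_holder} gives
\begin{equation*}
\dtpp{\eta}{\xi}\le \frac{4M_\delta^{m+3}}{(M_\delta-1)^2}\,d^{1/2}.
\end{equation*}
Since $m\le\lfloor M_\delta\rfloor-1$ and $M_\delta\ge 2>1$, the factor is monotone in $m$. So $M_\delta^{m+3}\le M_\delta^{\lfloor M_\delta\rfloor+2}$, which is the first entry in the definition of $\kappa$. Both small- and large-perturbation regimes are already handled inside Proposition~\ref{prop::same_count_holder}, so nothing more is needed here.

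\emph{Different counts.} If $\eta_{\Tmax}=m\neq n=\xi_{\Tmax}$, Corollary~\ref{cor::distance_same_count} (equivalently Lemma~\ref{lemma::count_separation}) forces $d\ge d_{\min}(\delta)>0$. I would then use a crude bound on $\dtppsym$ from Proposition~\ref{prop::discrete_dtpp}. Assume $m\le n$ without loss of generality. Each paired term $|t_k-s_k|$ is at most $\Tmax$, and each unpaired term $\Tmax-s_k$ is at most $\Tmax$. Hence
\begin{equation*}
\dtpp{\eta}{\xi}\le n\,\Tmax\le(\lfloor M_\delta\rfloor-1)\Tmax\le\lfloor M_\delta\rfloor\Tmax.
\end{equation*}
Multiplying by $d^{1/2}/\sqrt{d_{\min}(\delta)}\ge 1$ gives
\begin{equation*}
\dtpp{\eta}{\xi}\le\frac{\lfloor M_\delta\rfloor\Tmax}{\sqrt{d_{\min}(\delta)}}\,d^{1/2},
\end{equation*}
which is the second entry of $\kappa$. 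Taking the maximum of the two constants covers every pair.

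The substantive difficulty is not in this assembly but in what the earlier results supply. These are the recursive propagation bound for same-count pairs, which is what makes the constant exponential in $M_\delta$, and the positivity of $d_{\min}(\delta)$, which relies on compactness of each class $\mathcal N_\delta^{(n)}$ and injectivity of $\Phi$. Given those, the one step to check carefully is the count bound $m\le\lfloor M_\delta\rfloor-1$: it is what justifies substituting the worst-case exponent and bounding $\dtppsym$ uniformly. I would also note that the eventless path ($m=0$) fits both cases without special treatment.
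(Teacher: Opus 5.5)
Your proposal is correct and follows the paper's proof essentially step for step: the same-count case uses Proposition~\ref{prop::same_count_holder} with the exponent bounded via $m\le\lfloor M_\delta\rfloor-1$, and the different-count case combines the separation $d_1\ge d_{\min}(\delta)$ from Lemma~\ref{lemma::count_separation} with the crude bound on $\dtppsym$ from Proposition~\ref{prop::discrete_dtpp}. Your intermediate bound $n\Tmax\le(\lfloor M_\delta\rfloor-1)\Tmax$ is marginally sharper than the paper's $\max(m,n)\Tmax\le\lfloor M_\delta\rfloor\Tmax$, but it leads to the same constant $\kappa$.
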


\begin{proof}
    Let $m$ and $n$ denote the event counts of $\eta$ and $\xi$, and set $x := \Phi(\eta)$, $y := \Phi(\xi)$.

    \medskip\noindent
    \textit{Case~1: $m=n$.}\;
    Proposition~\ref{prop::same_count_holder} gives $\dtpp{\eta}{\xi}\le C_{\mathrm{rec}}\,d_1(x,y)^{1/2}$ with $C_{\mathrm{rec}}:=4\,M_\delta^{\lfloor M_\delta\rfloor+2}/(M_\delta-1)^2$, using $m = n \le \lfloor M_\delta\rfloor - 1$ (Lemma~\ref{lemma::count_separation}) together with $M_\delta \ge 2$ to bound $M_\delta^{m+3} \le M_\delta^{\lfloor M_\delta\rfloor+2}$; when $m = 0$, $\dtpp{\eta}{\xi} = 0$ and the bound is trivial.

    \medskip\noindent
    \textit{Case~2: $m\neq n$.}\;
    By Lemma~\ref{lemma::count_separation}, $d_1(x,y)\ge d_{\min}(\delta)>0$. Moreover, both event counts are at most $\lfloor M_\delta\rfloor - 1$ (Lemma~\ref{lemma::count_separation}), so Proposition~\ref{prop::discrete_dtpp} gives $\dtpp{\eta}{\xi}\le \max(m,n)\,\Tmax\le \lfloor M_\delta\rfloor\,\Tmax$. Hence
    \begin{equation}
        \dtpp{\eta}{\xi}
        \le \lfloor M_\delta\rfloor\,\Tmax
        = \frac{\lfloor M_\delta\rfloor\,\Tmax}{\sqrt{d_{\min}(\delta)}}\,
        \sqrt{d_{\min}(\delta)}
        \le
        \frac{\lfloor M_\delta\rfloor\,\Tmax}{\sqrt{d_{\min}(\delta)}}\,
        d_1(x,y)^{1/2} \,.
    \end{equation}
    Taking $\kappa := \max\!\bigl(C_{\mathrm{rec}},\; \lfloor M_\delta\rfloor\Tmax/\sqrt{d_{\min}(\delta)}\bigr)$ completes the proof.
\end{proof}

\begin{remark}[Sharpening the constant via a slope-gap condition]
    \label{rmq::kink_comparison}
    The exponential dependence on~$m$ in Proposition~\ref{prop::same_count_holder} is an artefact of the recursive error propagation $a_k\le M_\delta\, a_{k-1} +M_\delta\epsilon$. If one additionally requires that consecutive slopes of~$f_x$ differ by at least some fixed $\gamma>0$ (a slope-gap condition), a local, non-recursive argument gives the bound
    \begin{equation}
        \frac{\gamma}{4\lfloor M_\delta\rfloor}\dtpp{\eta}{\xi}^2 \le d_1\bigl(\Phi(\eta),\Phi(\xi)\bigr) \,.
    \end{equation}
    This yields $\kappa=\mathcal{O}(\sqrt{\lfloor M_\delta\rfloor/\gamma})$. The argument goes along the following lines. When two breakpoint grids are shifted by~$a_k$, interlacing ensures that on a neighbourhood of each~$t_k$, one embedding has a kink while the other is linear; the slope-gap condition~$\gamma$ guarantees this kink has opening $\ge\gamma$, producing an $L^1$ contribution greater or equal to $\gamma/4 \,a_k^2$. Since the neighbourhoods are disjoint (spacing larger than $\delta$, shifts smaller or equal to $\delta/4$), the contributions sum independently and Cauchy--Schwarz converts $\sum a_k^2 \ge \dtpp{\eta}{\xi}^2/\lfloor M_\delta\rfloor$. No error propagation occurs, hence the polynomial dependence.
\end{remark}

\section{Metrics}
\label{sec::metrics}
We evaluate generative TPP models along several complementary axes. 
Each metric targets a distinct structural property of the process, and together they provide a comprehensive picture of model quality. For all metrics a lower value is better.

\subsection{Theory-Motivated Metrics}

We define and theoretically justify 3 theory-motivated metrics in Section~\ref{subsect::evaluation}. We recall them briefly here and describe their empirical estimators. Throughout this section, we assume access to $N_P$ real samples $\eta^{(1)},\ldots,\eta^{(N_P)} \sim P$ and $N_Q$ generated samples $\widehat\xi^{(1)},\ldots,\widehat\xi^{(N_Q)} \sim Q$.

\textbf{Signature Wasserstein-1 ($\sigW$).}
$\sigW$ measures discrepancy between path distributions through their expected truncated signatures, capturing higher-order temporal-dependence structure. It serves simultaneously as our training objective for \sigTPP~and as an evaluation metric. Replacing the population expectations in Eq.~\ref{eq::sig_metric} by sample means yields
\begin{equation}
    \widehat{\sigW}^{(M)}(P, Q)
    = \Bigg\|
        \frac{1}{N_P}\sum_{i=1}^{N_P} S^M\!\bigl(\Phi(\eta^{(i)})\bigr)
        - \frac{1}{N_Q}\sum_{i=1}^{N_Q} S^M\!\bigl(\Phi(\widehat\xi^{(i)})\bigr)
      \Bigg\|_2 \, .
\end{equation}
We report results at truncation order $3$ with unscaled signature components; during training, signature features are standardised (common setting in linear regressions).

\textbf{Energy Distance ($\mathcal{E}$).}
The energy distance is defined as
\begin{equation}
\label{eq::energy}
    \mathcal{E}   := 2 \cdot \E\left[\dtpp{N}{\widetilde N}\right]
              - \E\left[\dtppsym\Bigl(N,N'\Bigr)\right]
              - \E\left[\dtpp{\widetilde N}{\widetilde N'}\right] \,, \quad P, Q \in \mathcal{P}_1(\mathcal{N}) \, ,
\end{equation}
with $N, N' \sim P$ and $\widetilde N, \widetilde N' \sim Q$ independent~\cite{gneiting_strictly_2007, szekely_energy_2013}.
The energy distance is a distribution-level discrepancy on $\mathcal{P}_1(\mathcal{N}, \dtppsym)$ that compares the average inter-sample distance between $P$ and $Q$ against the within-distribution distances. Its empirical estimator is
\begin{equation}
\begin{aligned}
    \widehat{\mathcal{E}}(P,Q)
    &= \frac{2}{N_P N_Q}\sum_{i,j}\dtpp{\eta^{(i)}}{\widehat\xi^{(j)}}
    - \frac{1}{N_P(N_P-1)}\sum_{i \neq i'}\dtpp{\eta^{(i)}}{\eta^{(i')}} \\
    &\quad
    - \frac{1}{N_Q(N_Q-1)}\sum_{j \neq j'}\dtpp{\widehat\xi^{(j)}}{\widehat\xi^{(j')}} \, .
\end{aligned}
\end{equation}
Although $\mathcal{E} \ge 0$ at the population level (since $\dtppsym$ is of negative type), the empirical estimator $\widehat{\mathcal{E}}(P,Q)$ can take small negative values in finite samples, especially when $P$ and $Q$ are close. We compute it on batches of $2{,}000$ sequences so that all pairwise distances fit in memory. Remark that $\mathcal{E}$ reduces to the CRPS~\cite{pic_proper_2025} in the univariate case, providing a natural score for unconditional generation that handles variable-length sequences.

\textbf{Wasserstein-1 ($\WAS$).}
Since $(\mathcal{N}, \dtppsym)$ is a metric space, the Wasserstein-1 distance~\cite{ramdas2017wasserstein, villani_optimal_2009}
\begin{equation}
    \WAS\left(P, Q\right)  := \inf_{\pi \in \Pi(P, Q)} \int_{\mathcal{N} \times \mathcal{N}}
      \dtpp{N}{N'} \, \mes\pi(N, N') \, ,
\end{equation}
where $\Pi(P,Q)$ denotes the set of couplings of $P$ and $Q$, is a well-defined metric on $\mathcal{P}_1(\mathcal{N})$. 
$\WAS$ lifts the path-level metric $\dtppsym$ to a metric between distributions. It is the minimum cost of transporting generated mass onto real mass, with per-pair cost $\dtppsym$. With empirical measures $\mu = \frac{1}{N_P}\sum_{i=1}^{N_P} \delta_{\eta^{(i)}}$ and $\nu = \frac{1}{N_Q}\sum_{j=1}^{N_Q} \delta_{\widehat\xi^{(j)}}$, it is estimated by solving
\begin{equation}
    \widehat{W}_1(\mu,\nu)
    = \min_{\pi \in \Pi(\mu,\nu)}
      \sum_{i=1}^{N_P}\sum_{j=1}^{N_Q}
      \pi_{ij}\,\dtpp{\eta^{(i)}}{\widehat\xi^{(j)}} \,,
\end{equation}
where $\Pi(\mu,\nu)$ denotes the set of couplings between $\mu$ and $\nu$. As for the energy distance, we solve this linear program exactly on batches of $2{,}000$ sequences, chosen so the full cost matrix fits in memory.

\textbf{Continuous Ranked Probability Score (CRPS).}
CRPS evaluates the calibration and sharpness of 1-step-ahead predictive distributions over interarrival times. For each true event at position $l$ in sequence $n$, the model defines a predictive distribution $F_{n,l}$ conditional on the true past. We approximate it by the empirical cumulative distribution function $\widehat{F}_{n,l}$ of the $S$ Monte Carlo samples $\widehat{\mathcal{T}}_{:,n,l}$. We report the per-event average
\begin{equation}
    \overline{\mathrm{CRPS}}
    = \frac{1}{\sum_{n=1}^{N_{\mathrm{seq}}} \ell_n}
      \sum_{n=1}^{N_{\mathrm{seq}}}\sum_{l=1}^{\ell_n}
      \mathrm{CRPS}\!\bigl(\widehat{F}_{n,l},\,\mathcal{T}_{n,l}\bigr) \,,
\end{equation}
with the standard energy-form definition for a sample $y$
\begin{equation}
    \mathrm{CRPS}(\widehat{F}, y)
    = \mathbb{E}_{X\sim\widehat{F}}\!\bigl[\abs{X-y}\bigr]
    - \tfrac{1}{2}\,
      \mathbb{E}_{X,X'\sim\widehat{F}}\!\bigl[\abs{X-X'}\bigr] \,,
\end{equation}
where $X, X'$ are i.i.d.\ draws from $\widehat{F}$.

\textbf{Scaling.} For comparability across datasets with different time horizons, we report $\widehat{\mathcal{E}}/\Tmax^2$ (resp.\ $\widehat{W}_1/\Tmax^2$) in all tables. The remaining metrics operate at the per-event level on interarrival times. Let $N_{\mathrm{seq}}$ denote the number of evaluation sequences and $\ell_n$ the (unpadded) length of sequence $n$, and write $\ell_{\max} = \max_n \ell_n$. We collect interarrivals in $\mathcal{T} \in \mathbb{R}^{N_{\mathrm{seq}} \times \ell_{\max}}$ and $S$ Monte Carlo samples of generated interarrivals in $\widehat{\mathcal{T}} \in \mathbb{R}^{S \times N_{\mathrm{seq}} \times \ell_{\max}}$, where entries beyond each sequence's true length are set to \texttt{NaN}. All estimators below use pairwise deletion of \texttt{NaN} entries (per-lag for autocorrelations). We write $\mathcal{T}_{n,l}$ for the $l$-th interarrival of sequence $n$ and $\widehat{\mathcal{T}}_{:,n,l} \in \mathbb{R}^{S}$ for the vector of $S$ Monte Carlo predictions at position $(n,l)$.

\subsection{Heuristic Metrics}
The following metrics target specific structural properties, second-order temporal dependence and marginal distributional shape, without formal guarantees. They are adapted from~\citet{NiEtAl2020} to the TPP setting. 

\textbf{Pairwise Correlation Discrepancy (PCD).}
PCD assesses whether the model reproduces second-order dependence across event positions: it compares the Pearson correlation matrix of interarrival times at different positions in real vs.\ generated samples, catching models whose marginals are correct but whose cross-position structure is not. Writing $\mathcal{T}_{:,p}$ for the vector of position-$p$ interarrivals across the $N_{\mathrm{seq}}$ sequences (and likewise, we slice 1 Monte Carlo sample $\widehat{\mathcal{T}}_{0,:,p}$), let
\begin{equation}
        \rho_{p,q} = \mathrm{Corr}\!\bigl(\mathcal{T}_{:,p},\,\mathcal{T}_{:,q}\bigr) \, ,
    \qquad
    \widehat{\rho}_{p,q} = \mathrm{Corr}\!\bigl(\widehat{\mathcal{T}}_{0,:,p},\,\widehat{\mathcal{T}}_{0,:,q}\bigr) \,,
\end{equation}
denote the Pearson correlations, computed by pairwise deletion of \texttt{NaN} entries. Then
\begin{equation}
    \mathrm{PCD}
    = \frac{1}{|\mathcal{V}|}
      \sum_{(p,q) \in \mathcal{V}}
      \abs{ \rho_{p,q} - \widehat{\rho}_{p,q} } \, ,
\end{equation}
where $\mathcal{V}$ collects position pairs $(p,q)$ with $1 \le p < q \le \ell^\star$ for which both correlations are well-defined. To stabilise estimation, we restrict $\ell^\star$ to the largest prefix length such that every position $p \le \ell^\star$ retains at least $50$ valid (non-padded) samples in both $\mathcal{T}$ and $\widehat{\mathcal{T}}$.

\textbf{Autocorrelation Discrepancy (ACD).}
ACD probes the temporal memory of the process, how strongly an interarrival depends on those a few events earlier. For lag $\tau \ge 1$, let $\mathrm{AC}_\tau(\mathcal{T})$ denote the lag-$\tau$ sample autocorrelation of interarrivals, computed by pooling all valid pairs $(\mathcal{T}_{n,l},\,\mathcal{T}_{n,l+\tau})$ across sequences (with \texttt{NaN}-padded positions excluded), and define $\mathrm{AC}_\tau(\widehat{\mathcal{T}})$ analogously by slicing one Monte Carlo sample. We restrict to short lags, where temporal dependence is most informative and estimation is most reliable: we set $\tau_{\max} = 5$ in general, and $\tau_{\max} = 10$ on long-sequence datasets ($\ell_{\max} \ge 100$).
We report
\begin{equation}
    \mathrm{ACD}
    = \frac{1}{\tau_{\max}}
      \sum_{\tau \leq \tau_{\max}}
      \abs{ \mathrm{AC}_\tau(\mathcal{T}) - \mathrm{AC}_\tau(\widehat{\mathcal{T}}) } \, .
\end{equation}

\textbf{Histogram Distances ($\histITloss$, $\histintloss$).}
These metrics compare marginal distributional shape at two complementary levels: $\histITloss$ targets the interarrival times density (how long between events?) and $\histintloss$ the event-count density on $[0,\Tmax]$ (when do events happen?). Both are $L^1$ distances between empirical histograms computed on $B$ shared bins. $B$ in our setting follows a cube-root heuristic $2\, n^{1/3}$. For $\histITloss$, let $\widehat{p}_{\mathrm{true}}$ and $\widehat{p}_{\mathrm{gen}}$ be histogram estimators of the marginal density of $u = \log\tau$, where $\tau$ is an interarrival time, on bins covering the empirical range of $\log\tau$ in $\mathcal{T}$:
\begin{equation}
    \histITloss
    = \int \abs{\widehat{p}_{\mathrm{true}}(u) - \widehat{p}_{\mathrm{gen}}(u)}\,\mes u \,,
    \qquad u = \log\tau \,.
\end{equation}
For $\histintloss$, let $\widehat{\lambda}^{\mathrm{emp}}_{\mathrm{true}}$ and $\widehat{\lambda}^{\mathrm{emp}}_{\mathrm{gen}}$ be histogram estimators of the empirical event rate per unit time on $[0,\Tmax]$ (distinct from the conditional intensity used elsewhere in the paper):
\begin{equation}
    \histintloss
    = \int_0^{\Tmax}
      \abs{\widehat{\lambda}^{\mathrm{emp}}_{\mathrm{true}}(s) - \widehat{\lambda}^{\mathrm{emp}}_{\mathrm{gen}}(s)}\,\mes s \,.
\end{equation}

\subsection{1-Step-Ahead Point Estimates (MAE)}
MAE and MSE evaluate the accuracy of 1-step-ahead point predictions of interarrival times. For each true event at position $l$ in sequence $n$, the model defines a predictive distribution $F_{n,l}$ conditional on the true past, from which we draw the $S$ Monte Carlo samples $\widehat{\mathcal{T}}_{:,n,l} \in \mathbb{R}^{S}$ already used for CRPS. The natural point predictors are the median of $F_{n,l}$ for MAE and its mean for MSE, which we estimate by their sample counterparts
\begin{equation}
    \widehat{x}^{\mathrm{MAE}}_{n,l}
    = \operatorname{median}\!\bigl(\widehat{\mathcal{T}}_{:,n,l}\bigr) \,,
    \qquad
    \widehat{x}^{\mathrm{MSE}}_{n,l}
    = \frac{1}{S}\sum_{s=1}^{S} \widehat{\mathcal{T}}_{s,n,l} \,.
\end{equation}
We report the per-event averages
\begin{equation}
    \mathrm{MAE}_\mathrm{med}
    = \frac{1}{\sum_{n=1}^{N_{\mathrm{seq}}} \ell_n}
      \sum_{n=1}^{N_{\mathrm{seq}}}\sum_{l=1}^{\ell_n}
      \abs{\widehat{x}^{\mathrm{MAE}}_{n,l} - \mathcal{T}_{n,l}} \,,
    \qquad
    \mathrm{MSE}_\mathrm{mean}
    = \frac{1}{\sum_{n=1}^{N_{\mathrm{seq}}} \ell_n}
      \sum_{n=1}^{N_{\mathrm{seq}}}\sum_{l=1}^{\ell_n}
      \bigl(\widehat{x}^{\mathrm{MSE}}_{n,l} - \mathcal{T}_{n,l}\bigr)^2 \,.
\end{equation}
The sample median used in $\widehat{x}^{\mathrm{MAE}}_{n,l}$ is a consistent but finite-$S$ biased plug-in for the population median of $F_{n,l}$; we take $S$ large enough that this bias is negligible relative to the sequence-level Monte Carlo error.

We also report the 1-step-ahead forecasting error:
\begin{equation}
\widehat{\mathrm{MAE}}
=
\frac{1}{\sum_{n=1}^{N_{\mathrm{seq}}} \ell_n}
\sum_{n=1}^{N_{\mathrm{seq}}}\sum_{l=1}^{\ell_n}
\left(
\frac{1}{S}\sum_{s=1}^{S}
\abs{\widehat{\mathcal{T}}_{s,n,l}-\mathcal{T}_{n,l}}
\right) \, .
\end{equation}

\subsection{Relative Score}
\label{subsec::relative_score}
Let $s_A^{(i)}$ denote the score of model $A$ on dataset $i \in \{1,\dots,D\}$, and $s_{\mathrm{ref}}^{(i)}$ the score of a fixed reference model. The normalised score of model $A$ is the geometric mean of per-dataset ratios:
\begin{equation}
    \bar{s}_A
    = \left(\prod_{i=1}^{D} \frac{s_A^{(i)}}{s_{\mathrm{ref}}^{(i)}}\right)^{\!\frac{1}{D}}
    = \exp\!\left(\frac{1}{D}\sum_{i=1}^{D}
        \log \frac{s_A^{(i)}}{s_{\mathrm{ref}}^{(i)}}
    \right) \,.
\end{equation}
A value $\bar{s}_A < 1$ indicates that $A$ outperforms the reference on average. Crucially, comparing two models $A$ and $B$ via the ratio of their normalised scores is \emph{independent of the choice of reference}:
\begin{equation}
    \frac{\bar{s}_A}{\bar{s}_B}
    = \frac{
        \bigl(\prod_i s_A^{(i)}/s_{\mathrm{ref}}^{(i)}\bigr)^{1/D}
    }{
        \bigl(\prod_i s_B^{(i)}/s_{\mathrm{ref}}^{(i)}\bigr)^{1/D}
    }
    = \left(\prod_{i=1}^{D} \frac{s_A^{(i)}}{s_B^{(i)}}\right)^{\!\frac{1}{D}} \,,
    \label{eq:ratio_independence}
\end{equation}
where the $s_{\mathrm{ref}}^{(i)}$ cancel~\cite{fleming1986how}.

\section{Information about Datasets}
\label{sect::datasets}
\begin{table}[t]
\centering
\caption{Dataset statistics. $K$ denotes the number of event types (marks).}
\label{table::datasets_stats}
\footnotesize
\setlength{\tabcolsep}{4pt}
\begin{tabular}{l r r r r r r r r r r}
\toprule
Dataset & $K$
  & \multicolumn{3}{c}{\# Events}
  & \multicolumn{3}{c}{\# Sequences}
  & \multicolumn{3}{c}{Seq.\ Length} \\
\cmidrule(lr){3-5}\cmidrule(lr){6-8}\cmidrule(lr){9-11}
 &
  & Train & Val & Test
  & Train & Val & Test
  & Min & Mean & Max \\
\midrule
PS                      & 3  & 14293  & 4796  & 4837  & 1200 & 400  & 400  & 2  & 12.0  & 24  \\
IP                     & 3  & 45232  & 14916 & 14944 & 3000 & 1000 & 1000 & 2  & 15.0  & 31  \\
H1                      & 1  & 57846  & 19432 & 19397 & 6000 & 2000 & 2000 & 2  & 9.7   & 39  \\
H3                      & 3  & 39357  & 12915 & 13061 & 1200 & 400  & 400  & 11 & 32.7  & 70  \\
\midrule
EQ                      & 7  & 46358  & 6212  & 13845 & 3000 & 400  & 900  & 1  & 15.4  & 17  \\
TB                      & 17 & 72183  & 11272 & 27955 & 1300 & 200  & 500  & 27 & 55.7  & 63  \\
TX                      & 10 & 50454  & 7204  & 14420 & 1400 & 200  & 400  & 35 & 36.0  & 37  \\
SO                      & 22 & 88766  & 25266 & 26015 & 1401 & 401  & 401  & 38 & 63.6  & 100 \\
YLP                     & 1  & 10542  & 3550  & 3531  & 191  & 63   & 65   & 3  & 55.2  & 107 \\

\bottomrule
\end{tabular}
\end{table}

We evaluate on 4 synthetic and 5 real-world datasets, spanning memoryless, time-varying, and self-exciting generative regimes. All synthetic datasets are split 60\,\%/20\,\%/20\,\% into training, validation, and test sets; real-world datasets use the splits described below.

\DatasetDiagnosticsFigure{Poisson}{PS}{hp_three_marks}{hp_three_marks}{Diagnostics for the Poisson dataset (PS). (a) Representative sample paths from the dataset. (b) Empirical temporal summaries. Top: intensity function. \textbf{Bottom:} interarrival times PDF. (c) Empirical autocorrelation of interarrival times. (d) Correlation structure between interarrival times.}
\DatasetDiagnosticsFigure{Inhomogeneous Poisson}{IP}{ihp_three_marks}{ihp_three_marks}{Diagnostics for the inhomogeneous Poisson dataset (IP). (a) Representative sample paths from the dataset. (b) Empirical temporal summaries. \textbf{Top:} intensity function. Bottom: interarrival times PDF. (c) Empirical autocorrelation of interarrival times. (d) Correlation structure between interarrival times.}
\DatasetDiagnosticsFigure{Hawkes}{H1}{hawkesdatamodule}{hawkesdatamodule}{Diagnostics for the Hawkes dataset (H1). (a) Representative sample paths from the dataset. (b) Empirical temporal summaries. \textbf{Top:} intensity function. Bottom: interarrival times PDF. (c) Empirical autocorrelation of interarrival times. (d) Correlation structure between interarrival times.}
\DatasetDiagnosticsFigure{Hawkes $3{\times}3$}{H3}{hawkes3x3datamodule}{hawkes3x3datamodule}{Diagnostics for the 3-dimensional Hawkes dataset (H3). (a) Representative sample paths from the dataset. (b) Empirical temporal summaries. Top: intensity function. Bottom: interarrival times PDF. (c) Empirical autocorrelation of interarrival times. (d) Correlation structure between interarrival times.}
\DatasetDiagnosticsFigure{Earthquake}{EQ}{earthquake}{earthquake}{Diagnostics for the earthquake dataset (EQ). (a) Representative sample paths from the dataset. (b) Empirical temporal summaries. Top: intensity function. \textbf{Bottom:} interarrival times PDF. (c) Empirical autocorrelation of interarrival times. (d) Correlation structure between interarrival times.}
\DatasetDiagnosticsFigure{Stack Overflow}{SO}{stackoverflow}{stackoverflow}{Diagnostics for the Stack Overflow dataset (SO). (a) Representative sample paths from the dataset. (b) Empirical temporal summaries. \textbf{Top:} intensity function. Bottom: interarrival times PDF. (c) Empirical autocorrelation of interarrival times. (d) Correlation structure between interarrival times.}
\DatasetDiagnosticsFigure{Taobao}{TB}{taobao}{taobao}{Diagnostics for the Taobao dataset (TB). (a) Representative sample paths from the dataset. (b) Empirical temporal summaries. Top: intensity function. \textbf{Bottom:} interarrival times PDF. (c) Empirical autocorrelation of interarrival times. (d) Correlation structure between interarrival times.}
\DatasetDiagnosticsFigure{Taxi}{TX}{taxi}{taxi}{Diagnostics for the Taxi dataset (TX). (a) Representative sample paths from the dataset. (b) Empirical temporal summaries. Top: intensity function. \textbf{Bottom:} interarrival times PDF. (c) Empirical autocorrelation of interarrival times. (d) Correlation structure between interarrival times.}
\DatasetDiagnosticsFigure{Yelp}{YLP}{yelp_mississauga}{yelp_mississauga}{Diagnostics for the dataset Yelp (YLP). (a) Representative sample paths from the dataset. (b) Empirical temporal summaries. Top: intensity function. \textbf{Bottom:} interarrival times PDF. (c) Empirical autocorrelation of interarrival times. (d) Correlation structure between interarrival times.}

\textbf{Homogeneous Poisson (\textbf{PS}).}
We generate \(2{,}000\) event sequences from a homogeneous Poisson process on \([0,T]\) with horizon \(T=12\) and constant rate \(\lambda=1\). Equivalently, interarrival times are sampled i.i.d.\ from \(\mathrm{Exp}(1)\).

\textbf{Inhomogeneous Poisson (\textbf{IP}).}
We generate \(5{,}000\) sequences from an inhomogeneous Poisson process on \([0,T]\) with \(T=10\) and piecewise-constant intensity
\begin{equation}
\lambda(t) = 1 + \11charac_{t \geq T/2} \,.
\end{equation}
Thus, the rate is \(1\) on the first half of the window and \(2\) on the second half. Sequences are simulated using the Lewis--Ogata thinning algorithm with upper bound \(\bar{\lambda}=2\).

\textbf{Univariate Hawkes (\textbf{H1}).}
We generate \(10{,}000\) sequences from a univariate Hawkes process with exponential kernel,
\begin{equation}
\lambda(t \mid \mathcal{H}_t)
= \mu + \alpha \sum_{t_i < t} \beta e^{-\beta(t-t_i)} \,,
\end{equation}
using \(\mu=0.3\), \(\alpha=0.4\), \(\beta=1.0\), and horizon \(T=20\). Simulation is performed with the \texttt{tick} library. Since \(\alpha<1\), the process is stationary. This setting introduces temporal clustering through self-excitation and tests whether a model can capture history-dependent intensities.

\textbf{Multivariate Hawkes (\textbf{H3}).}
We generate \(2{,}000\) sequences from a \(3\)-dimensional Hawkes process with exponential kernels using \texttt{tick}. The baseline intensities are \(\mu_k=0.5\) for all \(k\), the decay parameters are \(\beta_{k\ell}=1\) for all pairs \((k,\ell)\), and the excitation matrix is
\begin{equation}
A =
\begin{pmatrix}
0.5 & 0.1 & 0.0 \\
0.1 & 0.0 & 0.0 \\
0.0 & 0.0 & 0.1
\end{pmatrix} \,,
\end{equation}
with horizon \(T=15\). The 3 component streams are merged into a single time-ordered event sequence, and the source dimension is used as a categorical mark (\(K=3\)). This dataset evaluates a model's ability to capture both cross-excitation across dimensions and mark-dependent intensity structure.

\subsubsection*{Real-World Datasets}
\textbf{Taxi (NYC TLC).}
Time-stamped taxi pick-up and drop-off events in New York City~\cite{xue2024easytpp}, partitioned by borough and action type into $K=10$ event categories ($5$ boroughs $\times$ $2$ actions).
The subset comprises $2{,}000$ randomly sampled drivers with a mean sequence length of $36$ events.
The standard EasyTPP train/validation/test split is used.

\textbf{Stack Overflow.}
A dataset of user-interaction events on Stack Overflow, where event types correspond to distinct awarded-badge categories~\cite{xue2024easytpp}.
The standard EasyTPP train/validation/test split is used.

\textbf{Taobao.}
User click sequences on the Taobao shopping platform, collected between 25 November and 3 December 2017~\cite{xue2024easytpp}.
Item categories are ranked by frequency; the most frequent are retained and all remaining categories are merged into a single \emph{other} class, yielding $K=17$ event types.
The subset comprises the $2{,}000$ most active users with a mean sequence length of $56$ events.
The standard EasyTPP train/validation/test split is used.

\textbf{Earthquake.}
Time-stamped seismic events recorded across the Conterminous United States from 1996 to 2023, derived from the USGS catalogue~\cite{xue2024easytpp}.
The standard EasyTPP train/validation/test split is used.

\textbf{Yelp-Mississauga (\textbf{YLP}).}
This real-world dataset, taken from the EDITPP benchmark, contains user check-in event sequences for businesses in the city of Mississauga during 2018. Following the EDITPP convention, it is treated as an unmarked temporal point process (\(K=1\)), where each sequence consists only of ordered check-in times.

\subsection{Training Details}
\label{subsect::trainingdetails}
\textbf{Optimisation}. All experiments were optimised using the Adam optimiser~\cite{2015-kingma}, with learning rates tailored to each specific task. The choice of learning rate was determined via grid search on the respective validation set. \\
\textbf{Code Availability}. The complete source code is available at \href{https://github.com/Code-Cornelius/sigtpp}{\texttt{github.com/Code-Cornelius/sigtpp}}. \\
\textbf{Software Environment}. All numerical experiments were conducted on a system running Ubuntu 24.04.4 LTS with Python 3.8 and PyTorch 1.12.1+cu116. CUDA was available through PyTorch, with cuDNN version 8302. \\
\textbf{Hardware Infrastructure.} The machine was equipped with 3 NVIDIA L40S GPUs, each with 46,068 MiB of VRAM, using NVIDIA driver version 580.126.09 and compute capability 8.9. The system was powered by dual AMD EPYC 9354 32-core processors, providing 128 CPU threads in total, with a maximum CPU frequency of 3.80 GHz. It had 755 GiB of RAM and NVMe SSD storage, consisting of one 11.6 TB drive and four 3.5 TB drives.

We adapted the code of~\citet{lin2022exploring} for some of the baselines (WGAN), correcting several bugs; our implementation consequently deviates from the original.

\textbf{Hyperparameter tuning:}
\begin{table}[h]
\centering
\caption{Hyperparameter search grid for all models. $^\dagger$ Fixed value, not searched.}
\label{tab:hyperparam_grid}
\begin{tabular}{llc}
\toprule
Model & Hyperparameter & Values \\
\midrule
\multirow{4}{*}{WGAN}
  & Generator LR $\eta_G$     & $10^{-4},\ 5\cdot10^{-5},\ 10^{-5}$ \\
  & Discriminator LR $\eta_D$ & $10^{-2},\ 10^{-3},\ 10^{-4}$ \\
  & Hidden size               & $16,\ 32$ \\
  & Lipschitz reg.\ $\lambda$ & $10^{-2},\ 10^{-3},\ 10^{-4}$ \\
\midrule
\multirow{2}{*}{\determodel}
  & Learning rate $\eta$ & $10^{-2},\ 10^{-4}$ \\
  & Hidden size          & $8,\ 16,\ 32$ \\
\midrule
\gammamodel
  & Learning rate $\eta$ & $1,\ 10^{-2},\ 10^{-4}$ \\
\midrule
\multirow{6}{*}{VAE}
  & Learning rate $\eta$         & $10^{-2},\ 10^{-3},\ 10^{-4}$ \\
  & Hidden size                   & $16,\ 32$ \\
  & Latent dimension              & $8,\ 16$ \\
  & Reconstruction weight         & $0.01,\ 0.1,\ 1,\ 10$ \\
  & KL annealing epochs$^\dagger$ & $50$ \\
  & Free bits$^\dagger$           & $0.1$ \\
\midrule
\multirow{4}{*}{DDPM}
  & Learning rate $\eta$  & $10^{-3},\ 10^{-4},\ 10^{-5}$ \\
  & Hidden size            & $16,\ 32$ \\
  & Diffusion steps $T$    & $10,\ 50,\ 100$ \\
  & Batch size             & $1024,\ 4096,\ 30000$ \\
\midrule
\multirow{6}{*}{\sigTPP}
  & Generator LR $\eta_G$            & $10^{-3},\ 5\cdot10^{-4},\ 10^{-4}$ \\
  & Hidden size                       & $16,\ 32$ \\
  & Signature degree$^\dagger$        & $8$ \\
  & Teacher forcing                   & True, False \\
  & Terminal anchor                   & free, residual \\
  & Detach cumulative channel         & True, False \\
\bottomrule
\end{tabular}
\end{table}

\textbf{Notes on \sigTPP~hyperparameters.}
The terminal anchor controls how the endpoint of each path is handled before computing the signature. The theoretical analysis uses the $\Phi$-embedded representation on a fixed compact domain, which is the setting covered by the stability results.
In the implementation, we evaluate two path representations during validation. The \texttt{residual} setting uses the $\Phi$-based terminal anchor, while the \texttt{free} setting computes signatures directly over the observed variable-length interval ending at the last cumulative time.
Both options are included in the search because the preferred representation is dataset-dependent.
The \emph{teacher forcing} flag controls the training mode of the generator. Because the generator is trained on full observed sequences, it can be conditioned on ground-truth
past events at each step (teacher forcing, \texttt{True}), or it can consume its own previously
generated inter-arrival times autoregressively (\texttt{False}).
Teacher forcing typically stabilises early training, while the autoregressive mode better matches the inference regime.
Finally, the \emph{detach cumulative channel} flag controls whether the cumulative-time channel
(the $t$ coordinate of the path fed to the signature) is detached from the automatic differentiation
graph during training; this prevents gradients from flowing back through the time axis of the path,
which can stabilise training when the cumulative channel dominates the gradient signal.

\begin{table}[h]
\renewcommand{\arraystretch}{1.05}%
\setlength{\tabcolsep}{4pt}%
  \centering
  \caption{Performance comparison of \sigTPP~against various baselines on synthetic TPP tasks (PS, IP, H1, H3) and real-world TPP tasks (EQ, SO, TB, TX, YLP). We report MAE, \MAEmed, and \MSEmean (see Appendix~\ref{sec::metrics} for metric definitions). Values report standard error in parentheses over 100 bootstrap replicates. Lower is better~$(\downarrow)$. The best result for each dataset and metric pair is \textbf{bolded}, and the second-best is \underline{underlined}.}
\label{table::pointwise_results_combined}
  \vspace{0.4em}
  \small
\setlength{\tabcolsep}{0.005\linewidth} %%% changes how close the columns are
\setlength{\aboverulesep}{0.5pt}
\setlength{\belowrulesep}{0.5pt}
  \begin{adjustbox}{width=\linewidth}
    \begin{tabular}[t]{l*{4}{r}@{\hspace{2pt}}c@{}*{5}{r}}
      \toprule
      \multirow{2}{*}{Model} & \multicolumn{1}{c}{PS} & \multicolumn{1}{c}{IP} & \multicolumn{1}{c}{H1} & \multicolumn{1}{c}{H3} & \cellcolor{white}\makebox[0.8em]{} & \multicolumn{1}{c}{EQ} & \multicolumn{1}{c}{SO} & \multicolumn{1}{c}{TB} & \multicolumn{1}{c}{TX} & \multicolumn{1}{c}{YLP} \\
      \cmidrule(lr){2-5}\cmidrule(lr){7-11}
      & \multicolumn{4}{c}{MAE ($\times 10^{-1}$)} & \cellcolor{white}\makebox[0.8em]{} & \multicolumn{5}{c}{MAE ($\times 10^{-1}$)} \\
      \cmidrule{2-5}\cmidrule{7-11}
      \sigTPP & 9.25 \footnotesize{(0.08)} & 6.01 \footnotesize{(0.03)} & 23.21 \footnotesize{(0.16)} & 4.90 \footnotesize{(0.04)} & \cellcolor{white}\makebox[0.8em]{} & 11.34 \footnotesize{(0.21)} & 10.26 \footnotesize{(0.17)} & 1.01 \footnotesize{(0.02)} & 2.26 \footnotesize{(0.02)} & 5.33 \footnotesize{(0.17)} \\
      \rowcolor{gray!10}%
      VAE & \underline{8.88} \footnotesize{(0.08)} & \underline{5.61} \footnotesize{(0.03)} & 18.59 \footnotesize{(0.12)} & 4.50 \footnotesize{(0.04)} & \cellcolor{white}\makebox[0.8em]{} & \underline{9.57} \footnotesize{(0.19)} & \underline{8.78} \footnotesize{(0.14)} & \underline{0.67} \footnotesize{(0.01)} & \underline{2.00} \footnotesize{(0.02)} & \underline{4.88} \footnotesize{(0.14)} \\
      DDPM & 10.77 \footnotesize{(0.08)} & 6.91 \footnotesize{(0.03)} & 20.87 \footnotesize{(0.12)} & 4.99 \footnotesize{(0.04)} & \cellcolor{white}\makebox[0.8em]{} & 15.02 \footnotesize{(0.29)} & 11.19 \footnotesize{(0.16)} & 0.71 \footnotesize{(0.01)} & 2.16 \footnotesize{(0.02)} & 6.04 \footnotesize{(0.16)} \\
      \rowcolor{gray!10}%
      WGAN & 8.90 \footnotesize{(0.08)} & 5.73 \footnotesize{(0.03)} & \underline{18.29} \footnotesize{(0.14)} & \underline{4.47} \footnotesize{(0.04)} & \cellcolor{white}\makebox[0.8em]{} & 13.08 \footnotesize{(0.19)} & 8.97 \footnotesize{(0.12)} & 0.71 \footnotesize{(0.01)} & 2.25 \footnotesize{(0.02)} & 5.26 \footnotesize{(0.15)} \\
      \determodel & \textbf{6.29} \footnotesize{(0.11)} & \textbf{4.25} \footnotesize{(0.03)} & \textbf{12.82} \footnotesize{(0.15)} & \textbf{3.23} \footnotesize{(0.04)} & \cellcolor{white}\makebox[0.8em]{} & \textbf{6.98} \footnotesize{(0.17)} & \textbf{6.51} \footnotesize{(0.13)} & \textbf{0.42} \footnotesize{(0.01)} & \textbf{1.49} \footnotesize{(0.02)} & \textbf{3.66} \footnotesize{(0.15)} \\
      \rowcolor{gray!10}%
      \gammamodel & 9.15 \footnotesize{(0.08)} & 6.52 \footnotesize{(0.03)} & 19.60 \footnotesize{(0.11)} & 4.83 \footnotesize{(0.03)} & \cellcolor{white}\makebox[0.8em]{} & 14.20 \footnotesize{(0.09)} & 10.08 \footnotesize{(0.08)} & 0.71 \footnotesize{(0.01)} & 2.16 \footnotesize{(0.02)} & 6.55 \footnotesize{(0.11)} \\
      \cmidrule{1-5}\cmidrule{7-11}
      & \multicolumn{4}{c}{\MAEmed ($\times 10^{-1}$)} & \cellcolor{white}\makebox[0.8em]{} & \multicolumn{5}{c}{\MAEmed ($\times 10^{-1}$)} \\
      \cmidrule{2-5}\cmidrule{7-11}
      \sigTPP & 6.37 \footnotesize{(0.11)} & 4.22 \footnotesize{(0.04)} & 14.28 \footnotesize{(0.14)} & 3.38 \footnotesize{(0.04)} & \cellcolor{white}\makebox[0.8em]{} & 7.51 \footnotesize{(0.16)} & 7.18 \footnotesize{(0.12)} & 0.68 \footnotesize{(0.02)} & 1.49 \footnotesize{(0.02)} & 3.97 \footnotesize{(0.14)} \\
      \rowcolor{gray!10}%
      VAE & 6.35 \footnotesize{(0.11)} & \underline{4.18} \footnotesize{(0.04)} & \underline{12.52} \footnotesize{(0.15)} & \underline{3.18} \footnotesize{(0.04)} & \cellcolor{white}\makebox[0.8em]{} & \underline{6.92} \footnotesize{(0.17)} & \underline{6.40} \footnotesize{(0.12)} & 0.42 \footnotesize{(0.01)} & \textbf{1.38} \footnotesize{(0.02)} & 3.68 \footnotesize{(0.14)} \\
      DDPM & 6.46 \footnotesize{(0.11)} & 4.28 \footnotesize{(0.04)} & \textbf{11.94} \footnotesize{(0.16)} & \textbf{3.04} \footnotesize{(0.04)} & \cellcolor{white}\makebox[0.8em]{} & \textbf{6.14} \footnotesize{(0.15)} & \textbf{6.20} \footnotesize{(0.13)} & \textbf{0.40} \footnotesize{(0.01)} & 1.49 \footnotesize{(0.02)} & \textbf{3.59} \footnotesize{(0.15)} \\
      \rowcolor{gray!10}%
      WGAN & \textbf{6.28} \footnotesize{(0.11)} & \textbf{4.13} \footnotesize{(0.04)} & 12.72 \footnotesize{(0.16)} & 3.21 \footnotesize{(0.04)} & \cellcolor{white}\makebox[0.8em]{} & 7.55 \footnotesize{(0.17)} & 6.68 \footnotesize{(0.12)} & 0.42 \footnotesize{(0.01)} & 1.49 \footnotesize{(0.02)} & 3.78 \footnotesize{(0.14)} \\
      \determodel & \underline{6.29} \footnotesize{(0.11)} & 4.25 \footnotesize{(0.03)} & 12.82 \footnotesize{(0.15)} & 3.23 \footnotesize{(0.04)} & \cellcolor{white}\makebox[0.8em]{} & 6.98 \footnotesize{(0.17)} & 6.51 \footnotesize{(0.13)} & \underline{0.42} \footnotesize{(0.01)} & \underline{1.49} \footnotesize{(0.02)} & \underline{3.66} \footnotesize{(0.15)} \\
      \rowcolor{gray!10}%
      \gammamodel & 6.33 \footnotesize{(0.11)} & 4.37 \footnotesize{(0.04)} & 13.06 \footnotesize{(0.15)} & 3.27 \footnotesize{(0.04)} & \cellcolor{white}\makebox[0.8em]{} & 8.16 \footnotesize{(0.13)} & 6.64 \footnotesize{(0.11)} & 0.43 \footnotesize{(0.01)} & 1.49 \footnotesize{(0.02)} & 3.91 \footnotesize{(0.16)} \\
      \cmidrule{1-5}\cmidrule{7-11}
      & \multicolumn{4}{c}{\MSEmean ($\times 10^{-1}$)} & \cellcolor{white}\makebox[0.8em]{} & \multicolumn{5}{c}{\MSEmean ($\times 10^{-1}$)} \\
      \cmidrule{2-5}\cmidrule{7-11}
      \sigTPP & 8.31 \footnotesize{(0.34)} & 3.86 \footnotesize{(0.08)} & 56.40 \footnotesize{(0.93)} & 2.83 \footnotesize{(0.09)} & \cellcolor{white}\makebox[0.8em]{} & 16.80 \footnotesize{(0.78)} & 13.02 \footnotesize{(0.39)} & 0.27 \footnotesize{(0.01)} & 0.88 \footnotesize{(0.05)} & 9.53 \footnotesize{(0.79)} \\
      \rowcolor{gray!10}%
      VAE & \underline{8.27} \footnotesize{(0.35)} & 3.68 \footnotesize{(0.09)} & \textbf{41.53} \footnotesize{(0.97)} & \underline{2.42} \footnotesize{(0.08)} & \cellcolor{white}\makebox[0.8em]{} & \textbf{14.73} \footnotesize{(0.74)} & \underline{9.88} \footnotesize{(0.37)} & \underline{0.16} \footnotesize{(0.01)} & \textbf{0.82} \footnotesize{(0.05)} & \textbf{7.71} \footnotesize{(0.74)} \\
      DDPM & 9.05 \footnotesize{(0.32)} & 4.11 \footnotesize{(0.09)} & 43.15 \footnotesize{(0.99)} & \textbf{2.25} \footnotesize{(0.08)} & \cellcolor{white}\makebox[0.8em]{} & \underline{15.47} \footnotesize{(0.63)} & \textbf{9.78} \footnotesize{(0.34)} & \textbf{0.15} \footnotesize{(0.01)} & \underline{0.83} \footnotesize{(0.05)} & \underline{8.26} \footnotesize{(0.77)} \\
      \rowcolor{gray!10}%
      WGAN & \textbf{8.10} \footnotesize{(0.34)} & \textbf{3.62} \footnotesize{(0.08)} & \underline{42.59} \footnotesize{(0.98)} & 2.46 \footnotesize{(0.08)} & \cellcolor{white}\makebox[0.8em]{} & 18.20 \footnotesize{(0.99)} & 11.42 \footnotesize{(0.39)} & 0.16 \footnotesize{(0.01)} & 0.90 \footnotesize{(0.05)} & 8.55 \footnotesize{(0.71)} \\
      \determodel & 8.84 \footnotesize{(0.40)} & \underline{3.66} \footnotesize{(0.07)} & 45.13 \footnotesize{(1.16)} & 2.65 \footnotesize{(0.09)} & \cellcolor{white}\makebox[0.8em]{} & 16.94 \footnotesize{(0.83)} & 12.02 \footnotesize{(0.49)} & 0.17 \footnotesize{(0.01)} & 0.93 \footnotesize{(0.06)} & 9.32 \footnotesize{(0.81)} \\
      \rowcolor{gray!10}%
      \gammamodel & 8.29 \footnotesize{(0.35)} & 4.18 \footnotesize{(0.09)} & 44.51 \footnotesize{(1.00)} & 2.54 \footnotesize{(0.08)} & \cellcolor{white}\makebox[0.8em]{} & 18.94 \footnotesize{(0.65)} & 10.85 \footnotesize{(0.34)} & 0.16 \footnotesize{(0.01)} & 0.90 \footnotesize{(0.05)} & 9.65 \footnotesize{(0.77)} \\
      \bottomrule
    \end{tabular}%
  \end{adjustbox}
\end{table}

\begin{figure}[!h]
  \centering
      \centering
      \includegraphics[width=\textwidth]{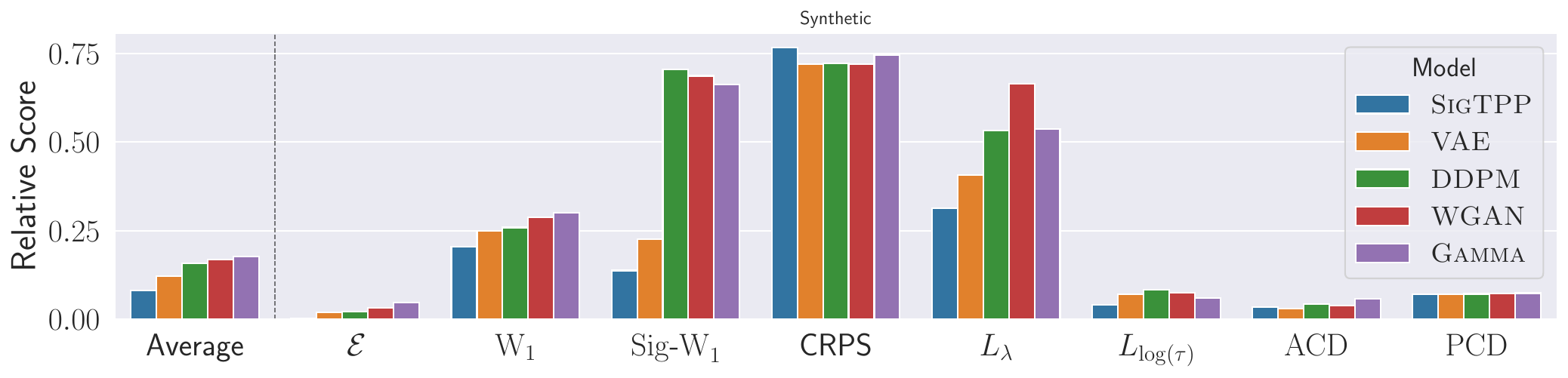}
\caption{
Relative scores with respect to \determodel, on synthetic datasets, plotting values reported in Table~\ref{table::geo_improvement_vs_deter_grouped}. 
The leftmost group shows the average relative score across all metrics, followed by the metric-wise scores for all 8 metrics, see Appendix~\ref{sec::metrics}. Bars compare the different models. Lower is better.
}
\label{fig::normalised_score_results_synth}
\end{figure}
\begin{figure}[!h]
  \centering
      \centering
      \includegraphics[width=\textwidth]{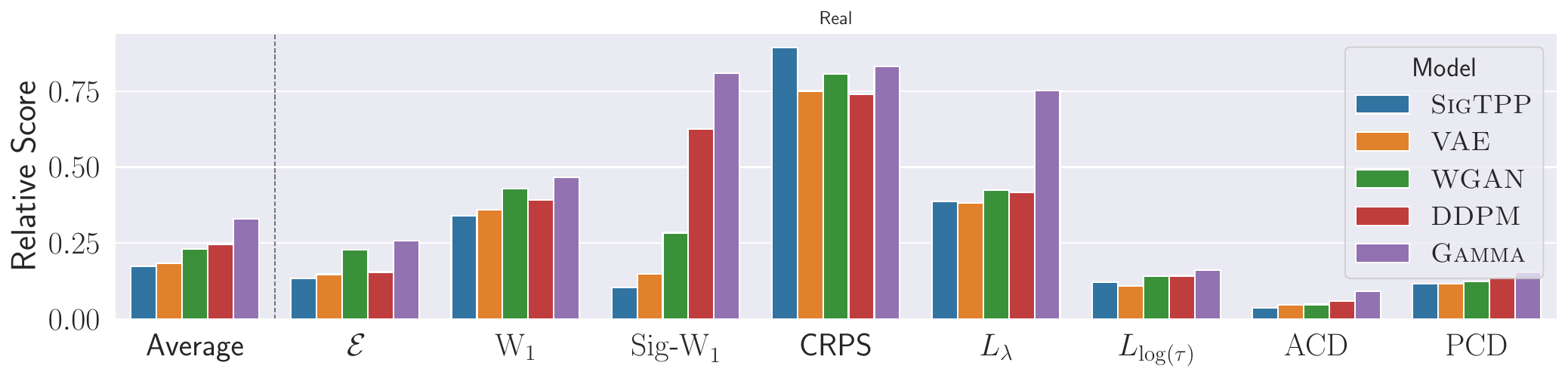}
\caption{
Relative scores with respect to \determodel, on real-world datasets, plotting values reported in Table~\ref{table::geo_improvement_vs_deter_grouped}. 
The leftmost group shows the average relative score across all metrics, followed by the metric-wise scores for all 8 metrics, see Appendix~\ref{sec::metrics}. Bars compare the different models. Lower is better.
}
\label{fig::normalised_score_results_real}
\end{figure}

% \subsection{Degree Signature}

\begin{table}[h]
\centering
\caption{First inactive signature degree $M^{\mathrm{all}}$ per dataset, defined as the first degree $m$ for
which all $D^m$ coordinates have empirical standard deviation below $10^{-8}$ across the training set.}
\label{table::sig-degree-datasets}
\footnotesize
\setlength{\tabcolsep}{5pt}
\begin{tabular}{l | ccccccccc}
\toprule
Dataset & PS & IP & H1 & H3 & EQ & SO & TB & TX & YLP \\
\midrule
$M^{\mathrm{all}}$ & 8 & 8 & $>$8 & 6 & $>$8 & 6 & 6 & 7 & 6 \\
\bottomrule
\end{tabular}
\end{table}

\begin{table}[h]
\renewcommand{\arraystretch}{1.05}%
\setlength{\tabcolsep}{4pt}%
  \centering
  \caption{Average ranks by metric comparing the neural models \sigTPP, VAE, DDPM, and WGAN, aggregated over synthetic datasets, real-world datasets, and all datasets. Values report the mean rank and standard deviation over datasets in parentheses. Lower is better~$(\downarrow)$. Best mean ranks for each metric and split are \textbf{bolded}, and second best are \underline{underlined}.}
\label{table::avg_rank_by_metric_grouped}
  \vspace{0.4em}
  \small
\setlength{\tabcolsep}{0.005\linewidth}
\setlength{\aboverulesep}{0.5pt}
\setlength{\belowrulesep}{0.5pt}
  \begin{adjustbox}{width=\linewidth}
    \begin{tabular}[t]{l*{12}{c}}
      \toprule
      \multirow{2}{*}{Metric} & \multicolumn{3}{c}{\shortstack[c]{\sigTPP\\[-1pt]\scriptsize(Synth: 2.03, Real: 2.25, All: 2.15)}} & \multicolumn{3}{c}{\shortstack[c]{VAE\\[-1pt]\scriptsize(Synth: 2.38, Real: 2.45, All: 2.42)}} & \multicolumn{3}{c}{\shortstack[c]{DDPM\\[-1pt]\scriptsize(Synth: 3.59, Real: 2.88, All: 3.19)}} & \multicolumn{3}{c}{\shortstack[c]{WGAN\\[-1pt]\scriptsize(Synth: 3.41, Real: 3.17, All: 3.28)}} \\
      \cmidrule(lr){2-4}\cmidrule(lr){5-7}\cmidrule(lr){8-10}\cmidrule(lr){11-13}
      & Synth. & Real & All & Synth. & Real & All & Synth. & Real & All & Synth. & Real & All \\
      \midrule
      $\mathcal{E}$ & \textbf{1.0} \footnotesize{(0.0)} & \textbf{2.2} \footnotesize{(0.8)} & \textbf{1.7} \footnotesize{(0.9)} & \underline{2.5} \footnotesize{(0.6)} & 3.2 \footnotesize{(1.6)} & 2.9 \footnotesize{(1.3)} & 3.3 \footnotesize{(1.0)} & \textbf{2.2} \footnotesize{(1.1)} & \underline{2.7} \footnotesize{(1.1)} & 4.0 \footnotesize{(1.4)} & \underline{2.8} \footnotesize{(1.6)} & 3.3 \footnotesize{(1.6)} \\
      \rowcolor{gray!10}%
      $\WAS$ & \textbf{1.0} \footnotesize{(0.0)} & \textbf{1.4} \footnotesize{(0.9)} & \textbf{1.2} \footnotesize{(0.7)} & \underline{2.5} \footnotesize{(0.6)} & 3.0 \footnotesize{(1.0)} & \underline{2.8} \footnotesize{(0.8)} & 3.8 \footnotesize{(1.3)} & \underline{2.8} \footnotesize{(1.1)} & 3.2 \footnotesize{(1.2)} & 3.8 \footnotesize{(1.5)} & 3.0 \footnotesize{(1.4)} & 3.3 \footnotesize{(1.4)} \\
      $\sigW$ & \textbf{2.0} \footnotesize{(2.0)} & \textbf{1.4} \footnotesize{(0.9)} & \textbf{1.7} \footnotesize{(1.4)} & \underline{2.5} \footnotesize{(0.6)} & \underline{3.0} \footnotesize{(1.6)} & \underline{2.8} \footnotesize{(1.2)} & 4.0 \footnotesize{(1.8)} & 4.2 \footnotesize{(1.5)} & 4.1 \footnotesize{(1.5)} & 3.5 \footnotesize{(1.0)} & \underline{3.0} \footnotesize{(0.7)} & 3.2 \footnotesize{(0.8)} \\
      \rowcolor{gray!10}%
      CRPS & 4.3 \footnotesize{(1.0)} & 4.2 \footnotesize{(1.3)} & 4.2 \footnotesize{(1.1)} & \underline{2.3} \footnotesize{(0.5)} & \underline{1.6} \footnotesize{(0.6)} & \textbf{1.9} \footnotesize{(0.6)} & 2.8 \footnotesize{(2.1)} & \textbf{1.4} \footnotesize{(0.6)} & \underline{2.0} \footnotesize{(1.5)} & \textbf{2.0} \footnotesize{(1.2)} & 3.6 \footnotesize{(0.6)} & 2.9 \footnotesize{(1.2)} \\
      $\histintloss$ & \textbf{1.5} \footnotesize{(0.6)} & \textbf{1.8} \footnotesize{(0.5)} & \textbf{1.7} \footnotesize{(0.5)} & \underline{2.5} \footnotesize{(1.3)} & \underline{2.6} \footnotesize{(1.5)} & \underline{2.6} \footnotesize{(1.3)} & 3.8 \footnotesize{(1.3)} & \underline{2.6} \footnotesize{(1.1)} & 3.1 \footnotesize{(1.3)} & 4.0 \footnotesize{(1.2)} & 3.6 \footnotesize{(2.0)} & 3.8 \footnotesize{(1.6)} \\
      \rowcolor{gray!10}%
       $\histITloss$ & \textbf{1.3} \footnotesize{(0.5)} & \underline{3.0} \footnotesize{(1.6)} & \textbf{2.2} \footnotesize{(1.5)} & \underline{3.3} \footnotesize{(1.0)} & \textbf{2.2} \footnotesize{(1.8)} & \underline{2.7} \footnotesize{(1.5)} & 4.5 \footnotesize{(1.0)} & 3.4 \footnotesize{(1.1)} & 3.9 \footnotesize{(1.2)} & \underline{3.3} \footnotesize{(1.0)} & \underline{3.0} \footnotesize{(1.0)} & 3.1 \footnotesize{(0.9)} \\
      ACD & \underline{2.8} \footnotesize{(1.0)} & \textbf{1.6} \footnotesize{(0.9)} & \textbf{2.1} \footnotesize{(1.1)} & \textbf{1.8} \footnotesize{(1.0)} & \underline{2.4} \footnotesize{(1.1)} & \textbf{2.1} \footnotesize{(1.1)} & 3.5 \footnotesize{(0.6)} & 3.0 \footnotesize{(1.0)} & \underline{3.2} \footnotesize{(0.8)} & 3.3 \footnotesize{(2.1)} & 3.2 \footnotesize{(1.5)} & \underline{3.2} \footnotesize{(1.6)} \\
      \rowcolor{gray!10}%
      PCD & \underline{2.5} \footnotesize{(1.3)} & \underline{2.4} \footnotesize{(1.7)} & \underline{2.4} \footnotesize{(1.4)} & \textbf{1.8} \footnotesize{(1.0)} & \textbf{1.6} \footnotesize{(0.6)} & \textbf{1.7} \footnotesize{(0.7)} & 3.3 \footnotesize{(1.3)} & 3.4 \footnotesize{(0.9)} & 3.3 \footnotesize{(1.0)} & 3.5 \footnotesize{(1.7)} & 3.2 \footnotesize{(1.5)} & 3.3 \footnotesize{(1.5)} \\
      \bottomrule
    \end{tabular}%
  \end{adjustbox}
\end{table}

\begin{table}[h]
\renewcommand{\arraystretch}{1.05}%
\setlength{\tabcolsep}{4pt}%
  \centering
  \caption{Geometric relative scores against \determodel by metric for \sigTPP, VAE, DDPM, WGAN, and \gammamodel, aggregated over synthetic datasets, real-world datasets, and both simultaneously. Lower is better~$(\downarrow)$. Best values within each metric and split are \textbf{bolded}, and second best are \underline{underlined}. The final row is the geometric average across metrics and datasets.}
\label{table::geo_improvement_vs_deter_grouped}
  \vspace{0.4em}
  \small
\setlength{\tabcolsep}{0.003\linewidth}
\setlength{\aboverulesep}{0.5pt}
\setlength{\belowrulesep}{0.5pt}
  \begin{adjustbox}{width=\linewidth}
    \begin{tabular}[t]{l*{15}{c}}
      \toprule
      \multirow{2}{*}{Metric} & \multicolumn{3}{c}{\sigTPP} & \multicolumn{3}{c}{VAE} & \multicolumn{3}{c}{DDPM} & \multicolumn{3}{c}{WGAN} & \multicolumn{3}{c}{\gammamodel} \\
      \cmidrule(lr){2-4}\cmidrule(lr){5-7}\cmidrule(lr){8-10}\cmidrule(lr){11-13}\cmidrule(lr){14-16}
      & Synth. & Real & All & Synth. & Real & All & Synth. & Real & All & Synth. & Real & All & Synth. & Real & All \\
      \midrule
      $\mathcal{E}$ & \textbf{0.003} & \textbf{0.134} & \textbf{0.024} & \underline{0.020} & \underline{0.148} & \underline{0.061} & 0.021 & 0.155 & 0.064 & 0.032 & 0.228 & 0.095 & 0.047 & 0.257 & 0.121 \\
      \rowcolor{gray!10}%
      $\WAS$ & \textbf{0.204} & \textbf{0.339} & \textbf{0.270} & \underline{0.249} & \underline{0.359} & \underline{0.305} & 0.259 & 0.392 & 0.326 & 0.288 & 0.430 & 0.360 & 0.300 & 0.466 & 0.383 \\
      $\sigW$ & \textbf{0.138} & \textbf{0.104} & \textbf{0.118} & \underline{0.226} & \underline{0.149} & \underline{0.180} & 0.705 & 0.626 & 0.660 & 0.687 & 0.284 & 0.421 & 0.663 & 0.808 & 0.740 \\
      \rowcolor{gray!10}%
      CRPS & 0.768 & 0.894 & 0.835 & \textbf{0.720} & \underline{0.751} & \underline{0.737} & 0.723 & \textbf{0.740} & \textbf{0.733} & \underline{0.721} & 0.807 & 0.767 & 0.746 & 0.832 & 0.793 \\
      $\histintloss$ & \textbf{0.313} & \underline{0.387} & \textbf{0.352} & \underline{0.406} & \textbf{0.382} & \underline{0.393} & 0.534 & 0.418 & 0.466 & 0.665 & 0.425 & 0.519 & 0.538 & 0.752 & 0.648 \\
      \rowcolor{gray!10}%
      $\histITloss$ & \textbf{0.040} & \underline{0.121} & \textbf{0.074} & 0.071 & \textbf{0.109} & \underline{0.090} & 0.083 & 0.143 & 0.112 & 0.076 & 0.141 & 0.107 & \underline{0.060} & 0.162 & 0.104 \\
      ACD & \underline{0.035} & \textbf{0.037} & \textbf{0.036} & \textbf{0.030} & \underline{0.046} & \underline{0.038} & 0.042 & 0.059 & 0.051 & 0.038 & 0.049 & 0.044 & 0.057 & 0.092 & 0.074 \\
      \rowcolor{gray!10}%
      PCD & \underline{0.071} & \textbf{0.117} & \textbf{0.094} & \textbf{0.071} & \underline{0.118} & \textbf{0.094} & 0.072 & 0.139 & 0.104 & 0.074 & 0.125 & 0.099 & 0.074 & 0.155 & 0.111 \\
      \midrule
      Avg. & \textbf{0.081} & \textbf{0.174} & \textbf{0.124} & \underline{0.122} & \underline{0.185} & \underline{0.154} & 0.157 & 0.247 & 0.202 & 0.168 & 0.231 & 0.201 & 0.177 & 0.330 & 0.250 \\
      \bottomrule
    \end{tabular}%
  \end{adjustbox}
\end{table}

\begin{figure}[!htbp]
    \centering
    \begin{subfigure}[b]{\textwidth}
        \centering
        \includegraphics[width=\textwidth]{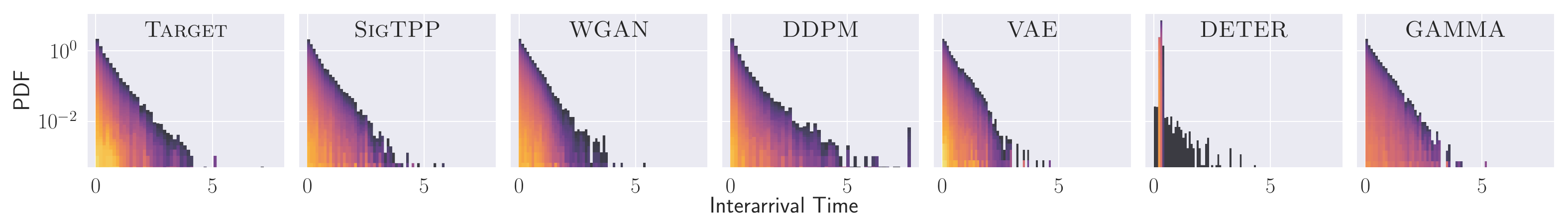}
        \par\vspace{-4pt}
        \makebox[\textwidth][c]{\small(a)}
    \end{subfigure}
    \par\vspace{4pt}
    \begin{subfigure}[b]{\textwidth}
        \centering
        \includegraphics[width=\textwidth]{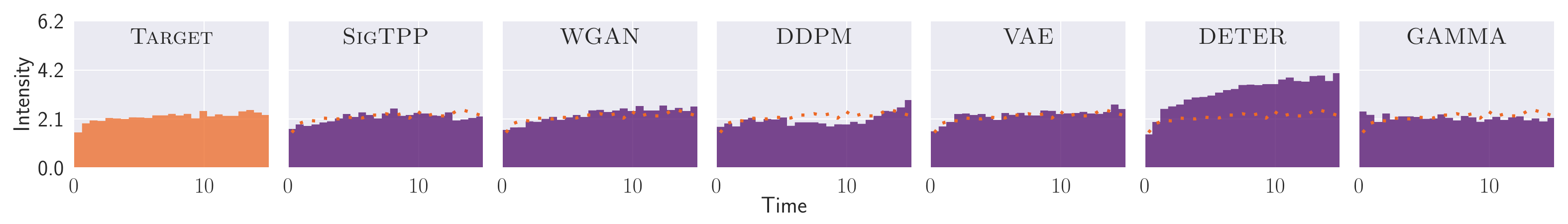}
        \par\vspace{-4pt}
        \makebox[\textwidth][c]{\small(b)}
    \end{subfigure}
    \par\vspace{4pt}
    \begin{subfigure}[t]{0.75\textwidth}
        \centering
        \includegraphics[width=\textwidth]{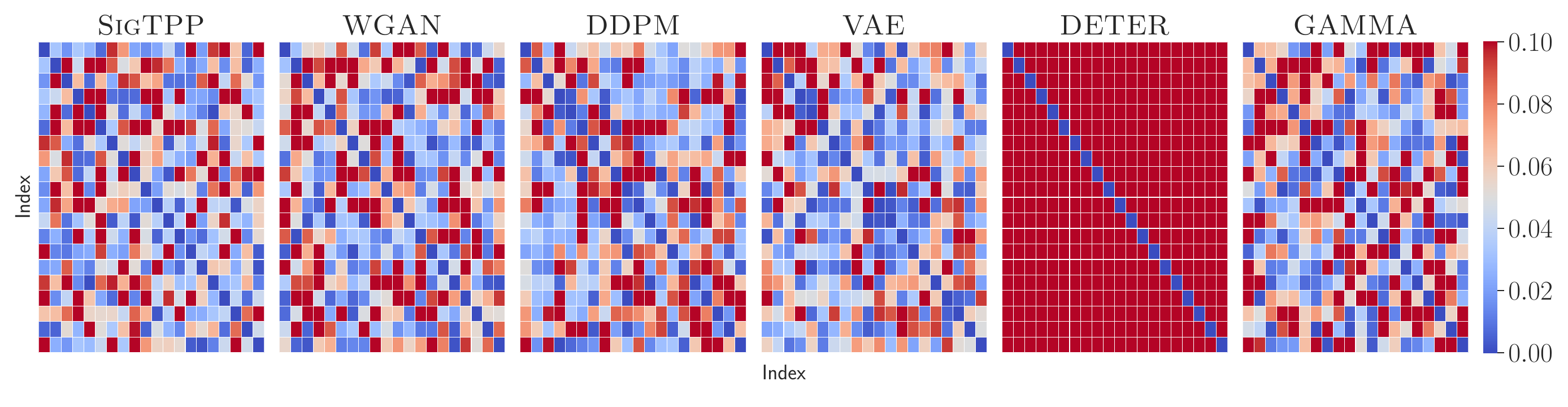}
        \par\vspace{-4pt}
        \makebox[\textwidth][c]{\small(c)}
    \end{subfigure}
    \hfill
    \begin{subfigure}[t]{0.23\textwidth}
        \centering
        \includegraphics[width=\textwidth]{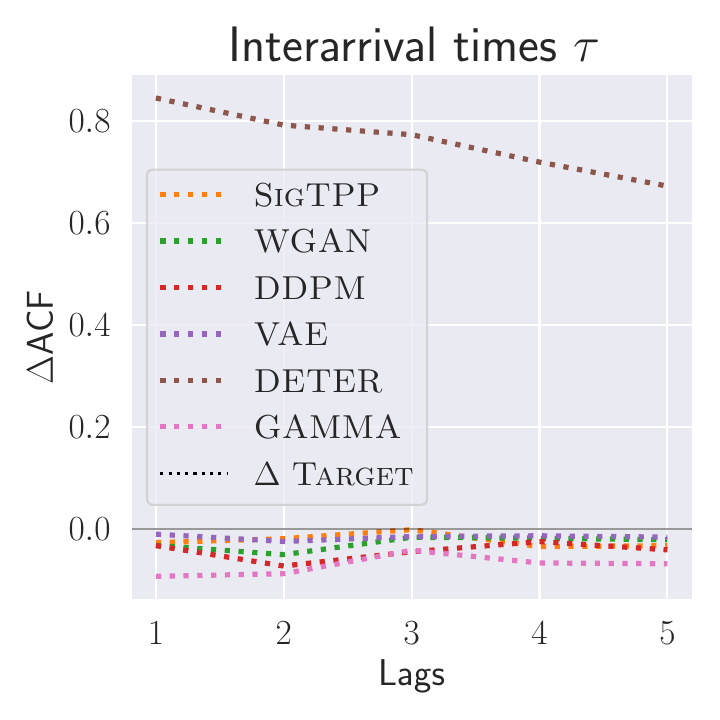}
        \par\vspace{-4pt}
        \makebox[\textwidth][c]{\small(d)}
    \end{subfigure}
    \caption{Qualitative evaluation on H3. The top row shows \sigTPP, WGAN, and DDPM; the bottom row shows VAE, \determodel, and \gammamodel.
    (a) Histogram estimates of the interarrival times density $p(x)$ on a logarithmic scale; the leftmost panel shows the target distribution, the others the distributions of samples generated by each model.
    (b) Empirical intensity functions $\hat{\lambda}(t)$ estimated from the same sequences.
    (c) Absolute Pearson correlation-matrix differences for the first 20 indices. Each heatmap entry shows the absolute difference between the empirical correlation of two sequence components in the generated samples and in the reference data.
    (d) Absolute autocorrelation error for $5$ lags. Each curve shows the difference between the ACF of the indicated model and the empirical ACF of the reference process, so that the target corresponds to the horizontal line at zero.}
    \label{fig::hawkes_qualitative_summary}
\end{figure}

\begin{figure}[!htbp]
    \centering
    \begin{subfigure}[b]{\textwidth}
        \centering
        \includegraphics[width=\textwidth]{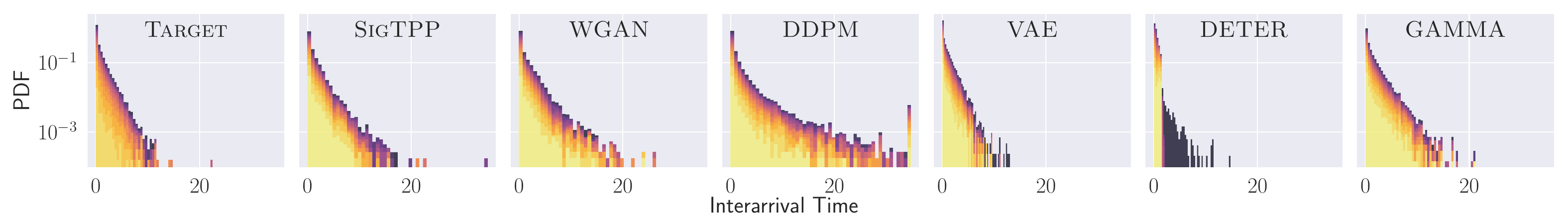}
        \par\vspace{-4pt}
        \makebox[\textwidth][c]{\small(a)}
    \end{subfigure}
    \par\vspace{4pt}
    \begin{subfigure}[b]{\textwidth}
        \centering
        \includegraphics[width=\textwidth]{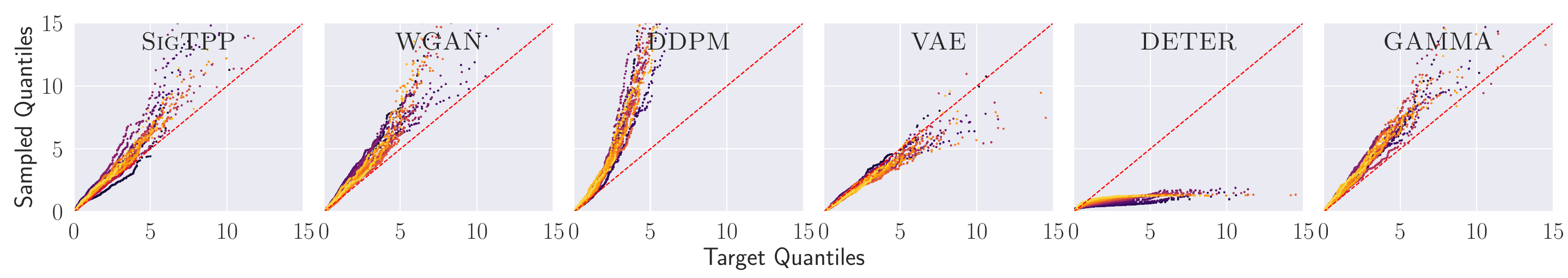}
        \par\vspace{-4pt}
        \makebox[\textwidth][c]{\small(b)}
    \end{subfigure}
    \par\vspace{4pt}
    \begin{subfigure}[b]{\textwidth}
        \centering
        \includegraphics[width=\textwidth]{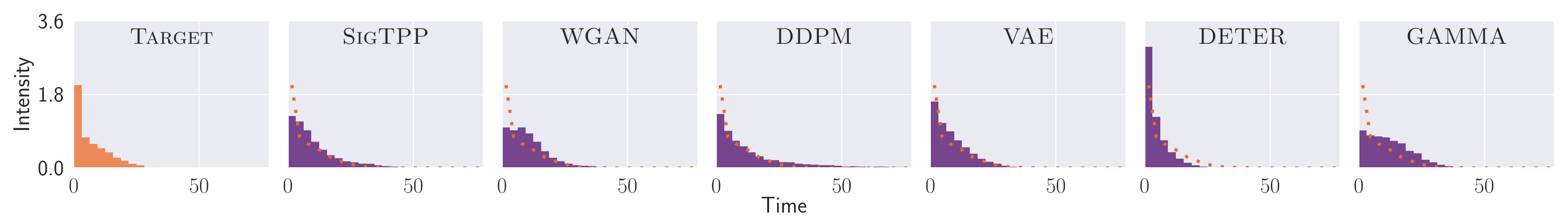}
        \par\vspace{-4pt}
        \makebox[\textwidth][c]{\small(c)}
    \end{subfigure}
    \par\vspace{4pt}
    \begin{subfigure}[t]{0.75\textwidth}
        \centering
        \includegraphics[width=\textwidth]{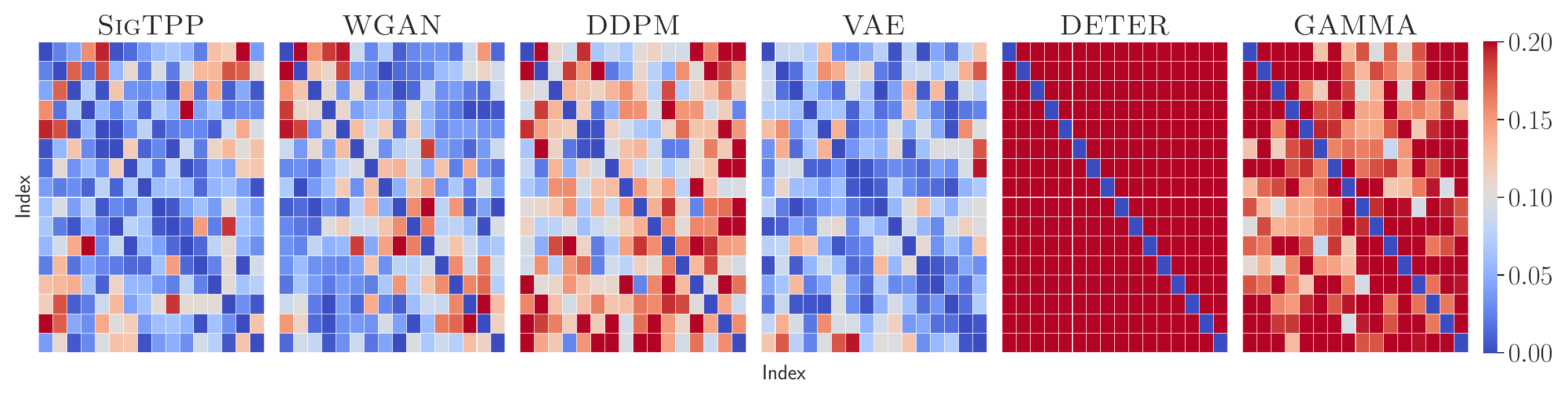}
        \par\vspace{-4pt}
        \makebox[\textwidth][c]{\small(d)}
    \end{subfigure}
    \hfill
    \begin{subfigure}[t]{0.23\textwidth}
        \centering
        \includegraphics[width=\textwidth]{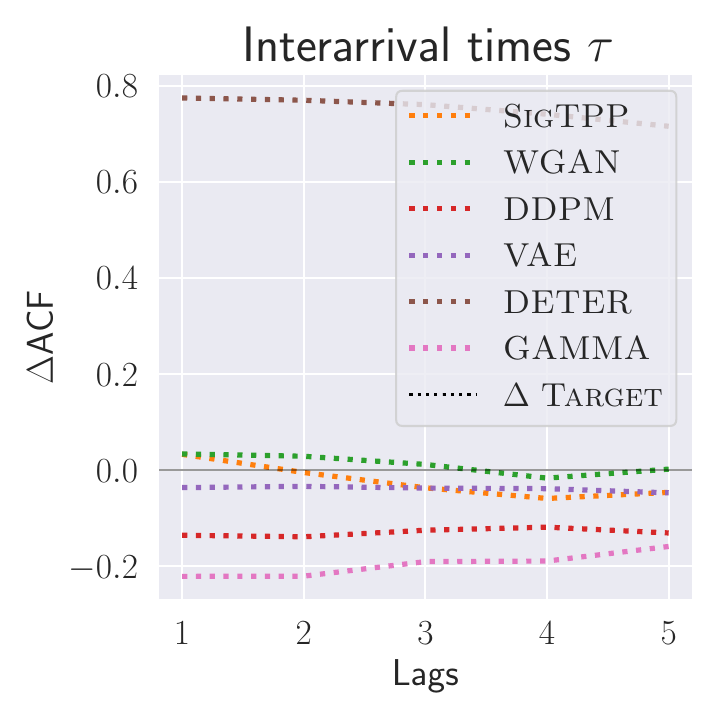}
        \par\vspace{-4pt}
        \makebox[\textwidth][c]{\small(e)}
    \end{subfigure}
    \caption{Qualitative evaluation on EQ. The top row shows \sigTPP, WGAN, and DDPM; the bottom row shows VAE, \determodel, and \gammamodel.
    (a) Histogram estimates of the interarrival times density $p(x)$ on a logarithmic scale; the leftmost panel shows the target distribution, the others the distributions of samples generated by each model.
    (b) Quantile-quantile (QQ) plots. Each subplot shows empirical quantiles of generated samples ($x$-axis) against the reference quantiles of the data ($y$-axis); the red dashed line is the identity $y=x$.
    (c) Empirical intensity functions $\hat{\lambda}(t)$ estimated from the same sequences.
    (d) Absolute Pearson correlation-matrix differences for the first 20 indices. Each heatmap entry shows the absolute difference between the empirical correlation of two sequence components in the generated samples and in the reference data.
    (e) Absolute autocorrelation error for $5$ lags. Each curve shows the difference between the ACF of the indicated model and the empirical ACF of the reference process, so that the target corresponds to the horizontal line at zero.}
    \label{fig::eq_qualitative_summary}
\end{figure}

\begin{figure}[!htbp]
    \centering
    \begin{subfigure}[b]{\textwidth}
        \centering
        \includegraphics[width=\textwidth]{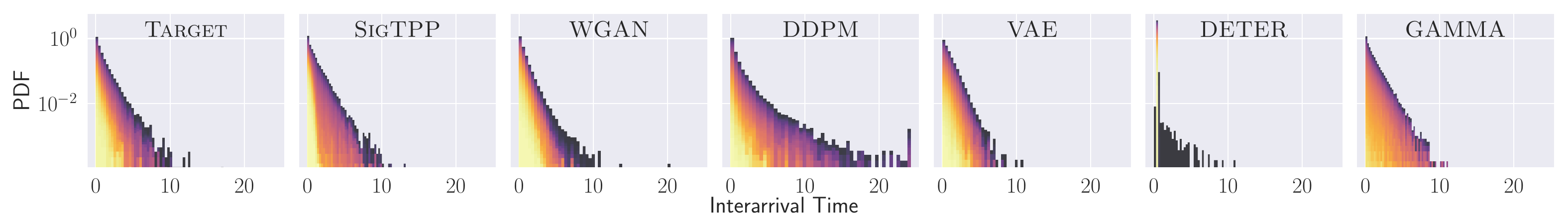}
        \par\vspace{-4pt}
        \makebox[\textwidth][c]{\small(a)}
    \end{subfigure}
    \par\vspace{4pt}
    \begin{subfigure}[b]{\textwidth}
        \centering
        \includegraphics[width=\textwidth]{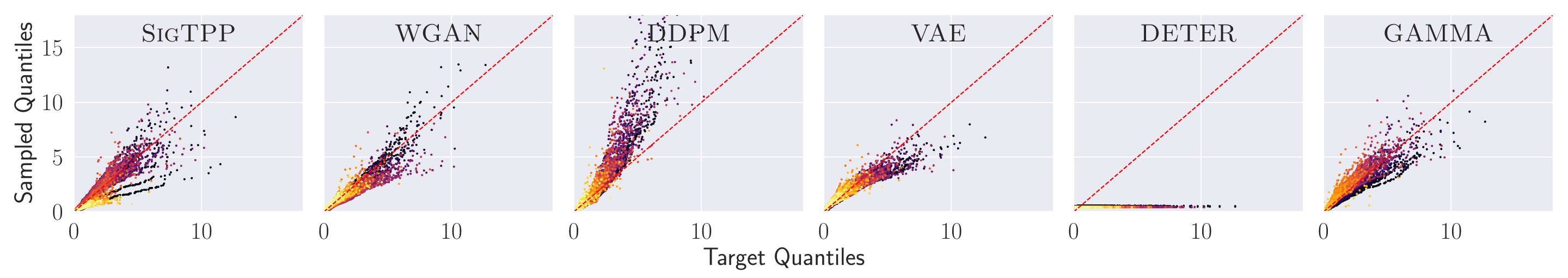}
        \par\vspace{-4pt}
        \makebox[\textwidth][c]{\small(b)}
    \end{subfigure}
    \par\vspace{4pt}
    \begin{subfigure}[b]{\textwidth}
        \centering
        \includegraphics[width=\textwidth]{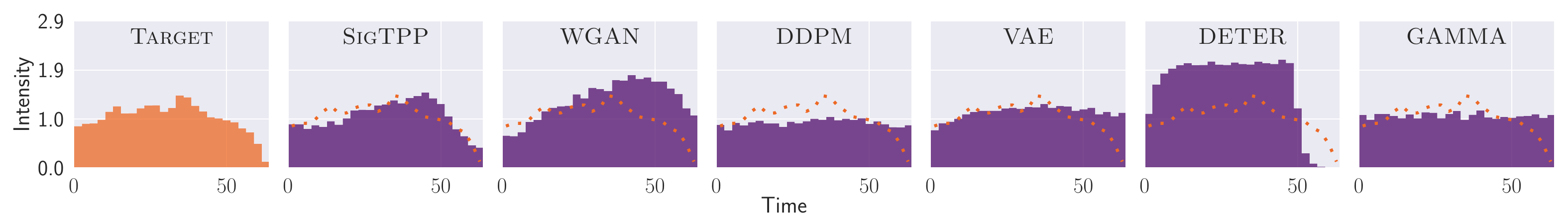}
        \par\vspace{-4pt}
        \makebox[\textwidth][c]{\small(c)}
    \end{subfigure}
    \par\vspace{4pt}
    \begin{subfigure}[t]{0.75\textwidth}
        \centering
        \includegraphics[width=\textwidth]{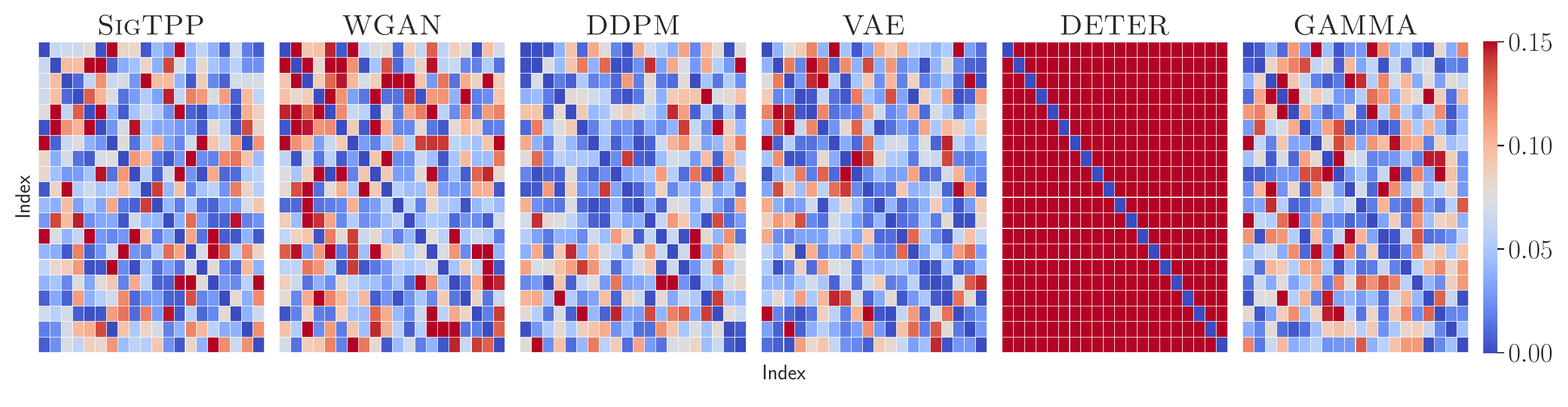}
        \par\vspace{-4pt}
        \makebox[\textwidth][c]{\small(d)}
    \end{subfigure}
    \hfill
    \begin{subfigure}[t]{0.23\textwidth}
        \centering
        \includegraphics[width=\textwidth]{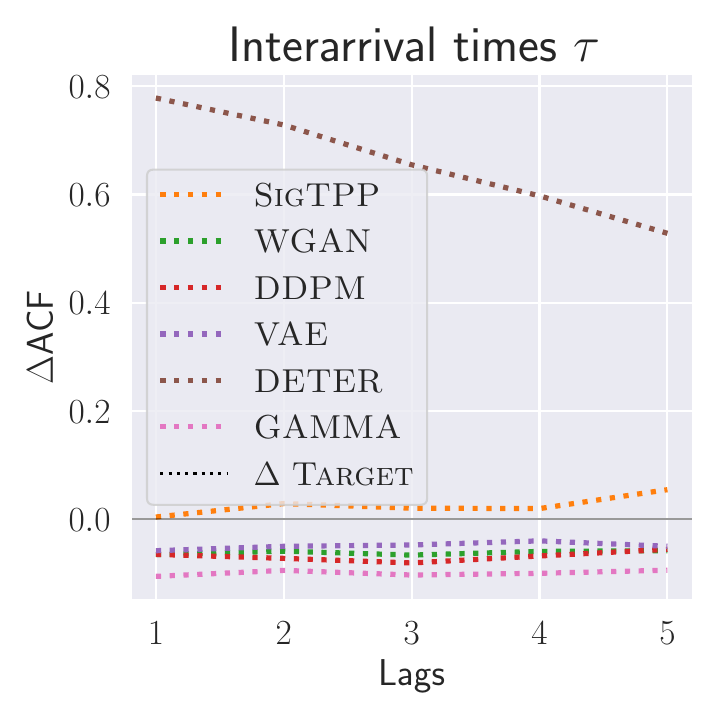}
        \par\vspace{-4pt}
        \makebox[\textwidth][c]{\small(e)}
    \end{subfigure}
    \caption{Qualitative evaluation on SO. The top row shows \sigTPP, WGAN, and DDPM; the bottom row shows VAE, \determodel, and \gammamodel.
    (a) Histogram estimates of the interarrival times density $p(x)$ on a logarithmic scale; the leftmost panel shows the target distribution, the others the distributions of samples generated by each model.
    (b) Quantile-quantile (QQ) plots. Each subplot shows empirical quantiles of generated samples ($x$-axis) against the reference quantiles of the data ($y$-axis); the red dashed line is the identity $y=x$.
    (c) Empirical intensity functions $\hat{\lambda}(t)$ estimated from the same sequences.
    (d) Absolute Pearson correlation-matrix differences for the first 20 indices. Each heatmap entry shows the absolute difference between the empirical correlation of two sequence components in the generated samples and in the reference data.
    (e) Absolute autocorrelation error for $5$ lags. Each curve shows the difference between the ACF of the indicated model and the empirical ACF of the reference process, so that the target corresponds to the horizontal line at zero.}
    \label{fig::so_qualitative_summary}
\end{figure}

\begin{figure}[!htbp]
    \centering
    \begin{subfigure}[b]{\textwidth}
        \centering
        \includegraphics[width=\textwidth]{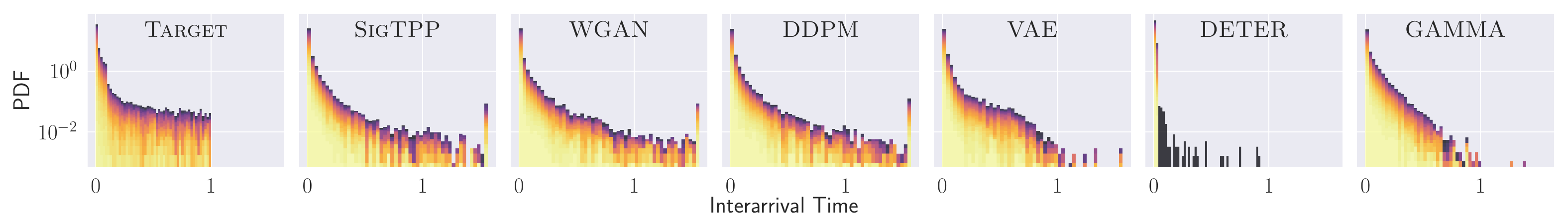}
        \par\vspace{-4pt}
        \makebox[\textwidth][c]{\small(a)}
    \end{subfigure}
    \par\vspace{4pt}
    \begin{subfigure}[b]{\textwidth}
        \centering
        \includegraphics[width=\textwidth]{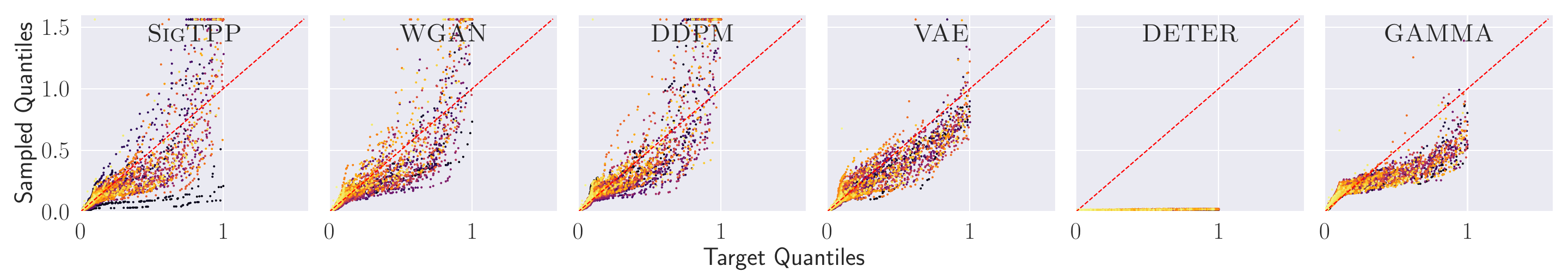}
        \par\vspace{-4pt}
        \makebox[\textwidth][c]{\small(b)}
    \end{subfigure}
    \par\vspace{4pt}
    \begin{subfigure}[b]{\textwidth}
        \centering
        \includegraphics[width=\textwidth]{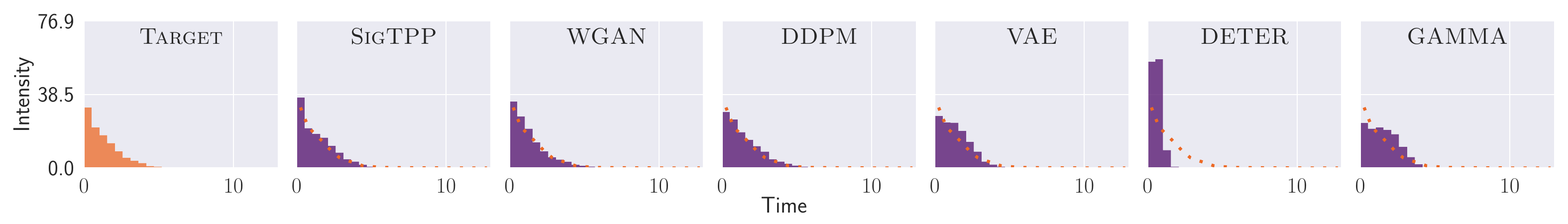}
        \par\vspace{-4pt}
        \makebox[\textwidth][c]{\small(c)}
    \end{subfigure}
    \par\vspace{4pt}
    \begin{subfigure}[t]{0.75\textwidth}
        \centering
        \includegraphics[width=\textwidth]{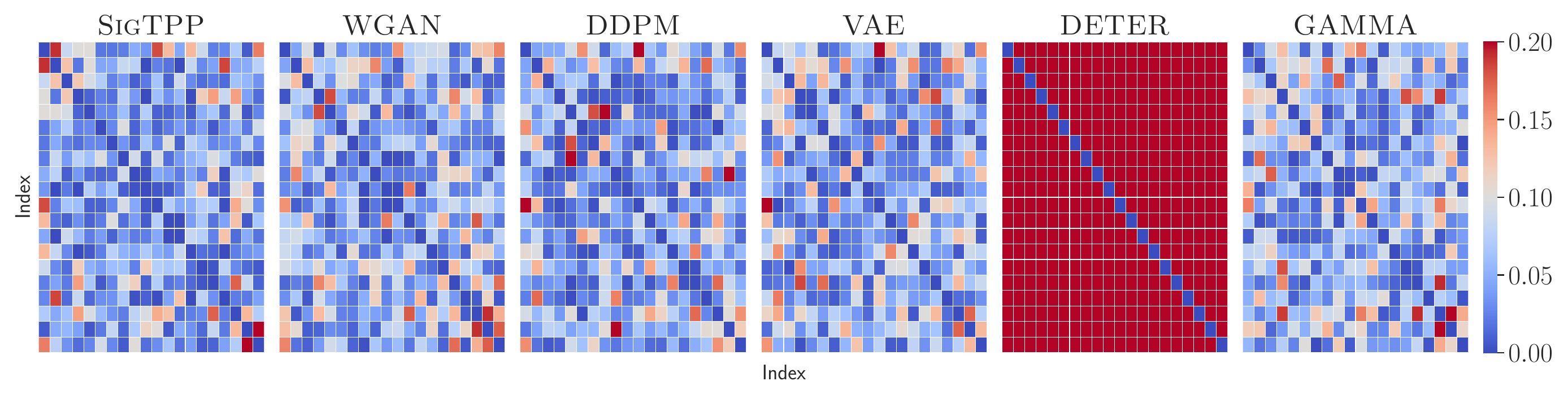}
        \par\vspace{-4pt}
        \makebox[\textwidth][c]{\small(d)}
    \end{subfigure}
    \hfill
    \begin{subfigure}[t]{0.23\textwidth}
        \centering
        \includegraphics[width=\textwidth]{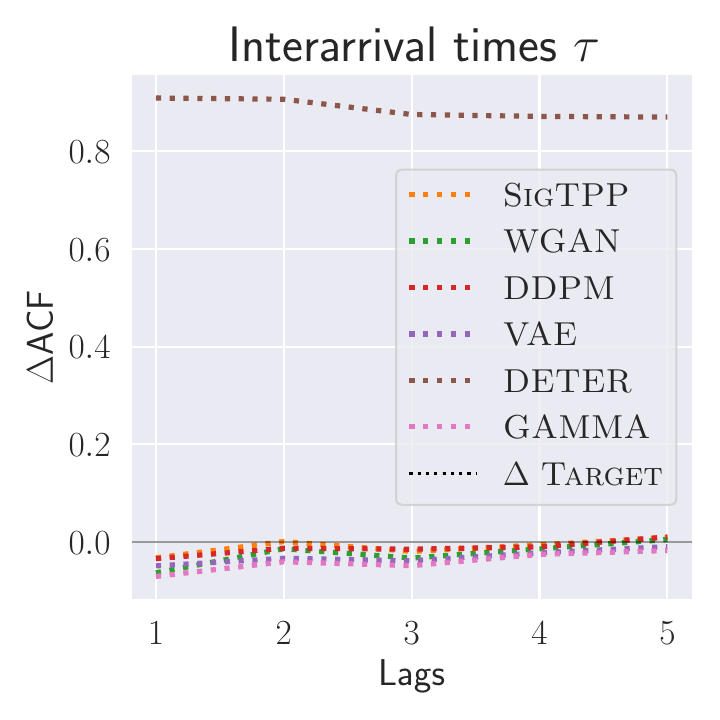}
        \par\vspace{-4pt}
        \makebox[\textwidth][c]{\small(e)}
    \end{subfigure}
    \caption{Qualitative evaluation on Taobao. The top row shows \sigTPP, WGAN, and DDPM; the bottom row shows VAE, \determodel, and \gammamodel.
    (a) Histogram estimates of the interarrival times density $p(x)$ on a logarithmic scale; the leftmost panel shows the target distribution, the others the distributions of samples generated by each model.
    (b) Quantile-quantile (QQ) plots. Each subplot shows empirical quantiles of generated samples ($x$-axis) against the reference quantiles of the data ($y$-axis); the red dashed line is the identity $y=x$.
    (c) Empirical intensity functions $\hat{\lambda}(t)$ estimated from the same sequences.
    (d) Absolute Pearson correlation-matrix differences for the first 20 indices. Each heatmap entry shows the absolute difference between the empirical correlation of two sequence components in the generated samples and in the reference data.
    (e) Absolute autocorrelation error for $5$ lags. Each curve shows the difference between the ACF of the indicated model and the empirical ACF of the reference process, so that the target corresponds to the horizontal line at zero.}
    \label{fig::taobao_qualitative_summary}
\end{figure}

\begin{figure}[!htbp]
    \centering
    \begin{subfigure}[b]{\textwidth}
        \centering
        \includegraphics[width=\textwidth]{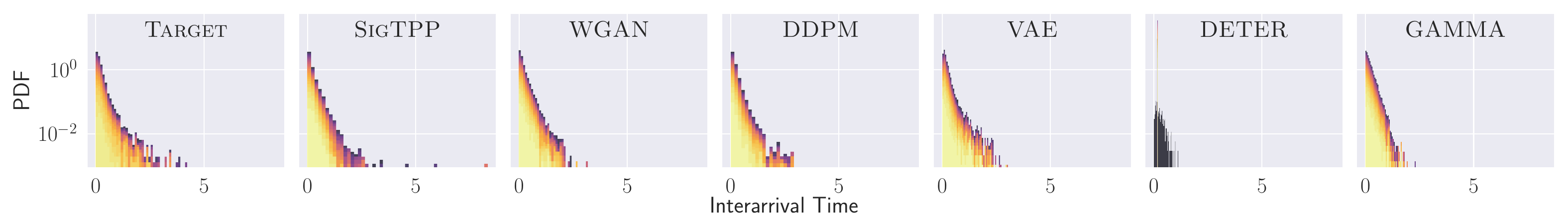}
        \par\vspace{-4pt}
        \makebox[\textwidth][c]{\small(a)}
    \end{subfigure}
    \par\vspace{4pt}
    \begin{subfigure}[b]{\textwidth}
        \centering
        \includegraphics[width=\textwidth]{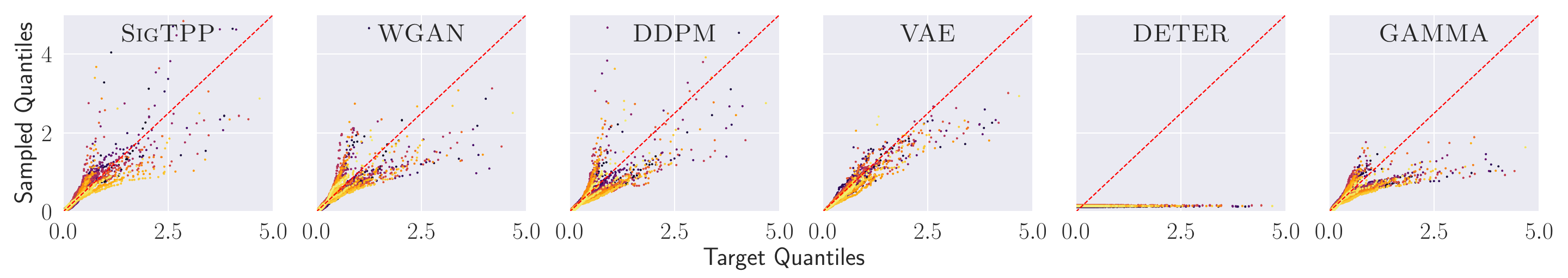}
        \par\vspace{-4pt}
        \makebox[\textwidth][c]{\small(b)}
    \end{subfigure}
    \par\vspace{4pt}
    \begin{subfigure}[b]{\textwidth}
        \centering
        \includegraphics[width=\textwidth]{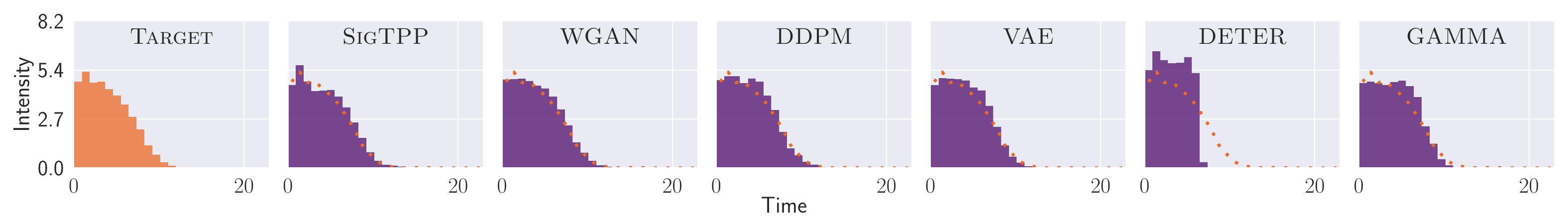}
        \par\vspace{-4pt}
        \makebox[\textwidth][c]{\small(c)}
    \end{subfigure}
    \par\vspace{4pt}
    \begin{subfigure}[t]{0.75\textwidth}
        \centering
        \includegraphics[width=\textwidth]{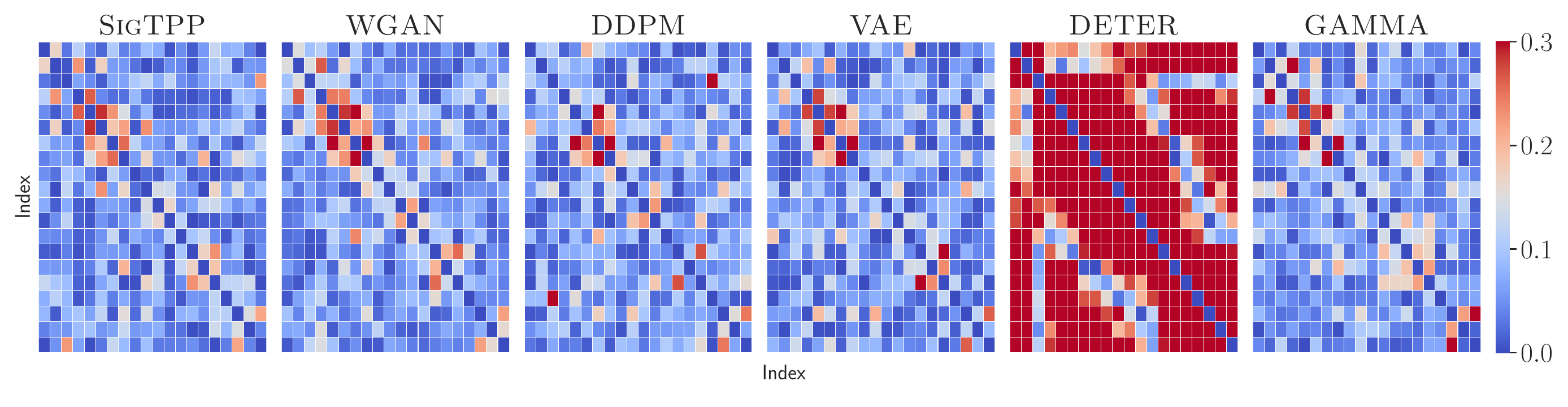}
        \par\vspace{-4pt}
        \makebox[\textwidth][c]{\small(d)}
    \end{subfigure}
    \hfill
    \begin{subfigure}[t]{0.23\textwidth}
        \centering
        \includegraphics[width=\textwidth]{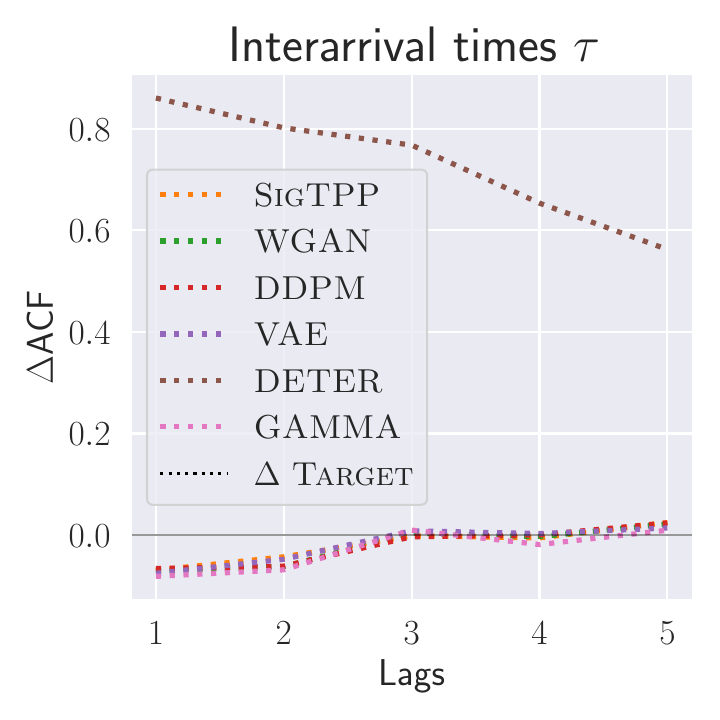}
        \par\vspace{-4pt}
        \makebox[\textwidth][c]{\small(e)}
    \end{subfigure}
    \caption{Qualitative evaluation on Taxi. The top row shows \sigTPP, WGAN, and DDPM; the bottom row shows VAE, \determodel, and \gammamodel.
    (a) Histogram estimates of the interarrival times density $p(x)$ on a logarithmic scale; the leftmost panel shows the target distribution, the others the distributions of samples generated by each model.
    (b) Quantile-quantile (QQ) plots. Each subplot shows empirical quantiles of generated samples ($x$-axis) against the reference quantiles of the data ($y$-axis); the red dashed line is the identity $y=x$.
    (c) Empirical intensity functions $\hat{\lambda}(t)$ estimated from the same sequences.
    (d) Absolute Pearson correlation-matrix differences for the first 20 indices. Each heatmap entry shows the absolute difference between the empirical correlation of two sequence components in the generated samples and in the reference data.
    (e) Absolute autocorrelation error for $5$ lags. Each curve shows the difference between the ACF of the indicated model and the empirical ACF of the reference process, so that the target corresponds to the horizontal line at zero.}
    \label{fig::taxi_qualitative_summary}
\end{figure}

% (RT qualitative-evaluation figure removed; RT is not among the reported datasets.)

\begin{figure}[!htbp]
    \centering
    \begin{subfigure}[b]{\textwidth}
        \centering
        \includegraphics[width=\textwidth]{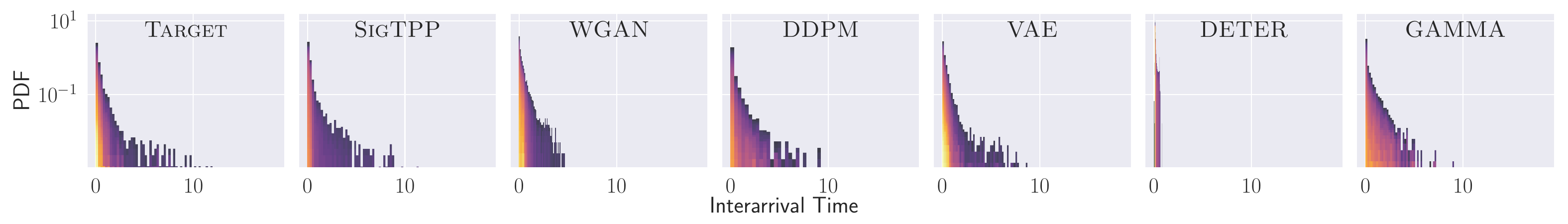}
        \par\vspace{-4pt}
        \makebox[\textwidth][c]{\small(a)}
    \end{subfigure}
    \par\vspace{4pt}
    \begin{subfigure}[b]{\textwidth}
        \centering
        \includegraphics[width=\textwidth]{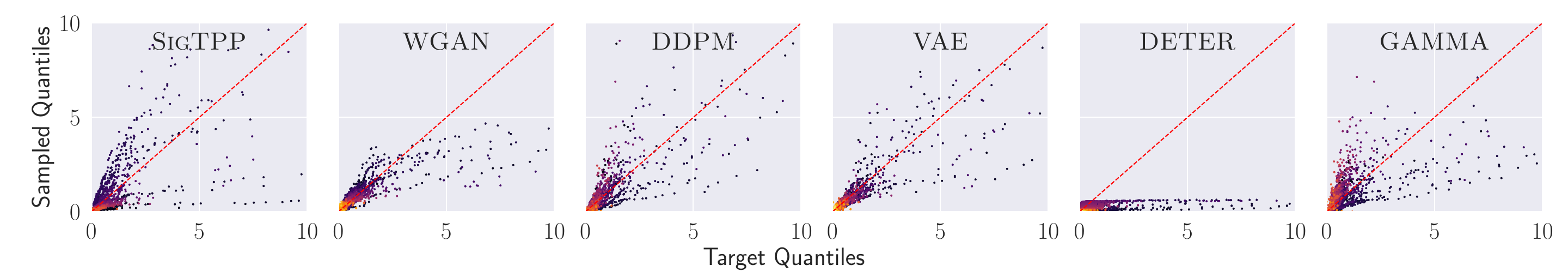}
        \par\vspace{-4pt}
        \makebox[\textwidth][c]{\small(b)}
    \end{subfigure}
    \par\vspace{4pt}
    \begin{subfigure}[b]{\textwidth}
        \centering
        \includegraphics[width=\textwidth]{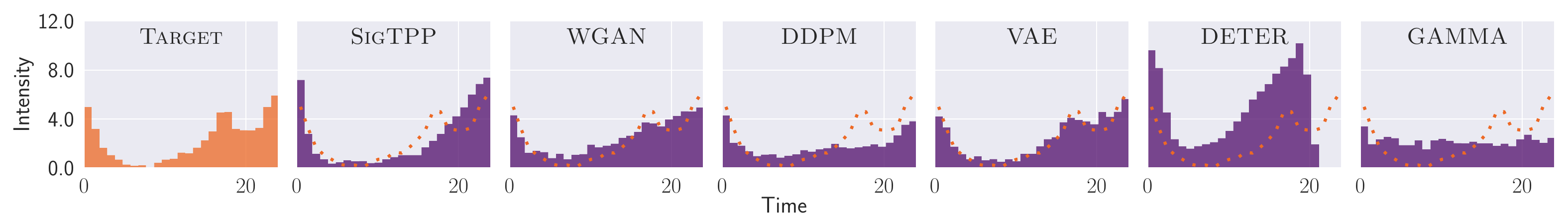}
        \par\vspace{-4pt}
        \makebox[\textwidth][c]{\small(c)}
    \end{subfigure}
    \par\vspace{4pt}
    \begin{subfigure}[t]{0.75\textwidth}
        \centering
        \includegraphics[width=\textwidth]{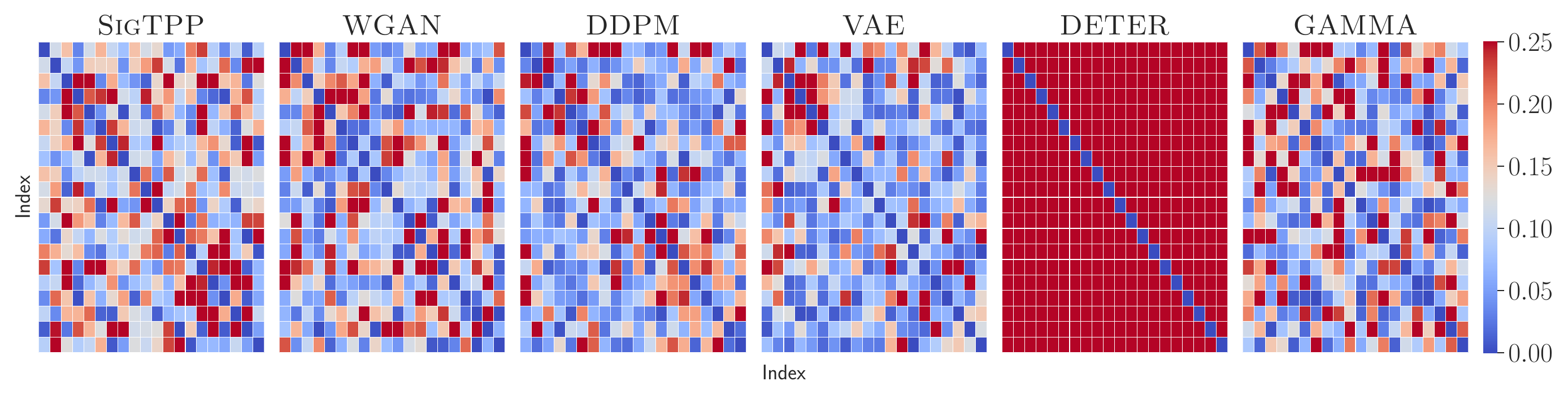}
        \par\vspace{-4pt}
        \makebox[\textwidth][c]{\small(d)}
    \end{subfigure}
    \hfill
    \begin{subfigure}[t]{0.23\textwidth}
        \centering
        \includegraphics[width=\textwidth]{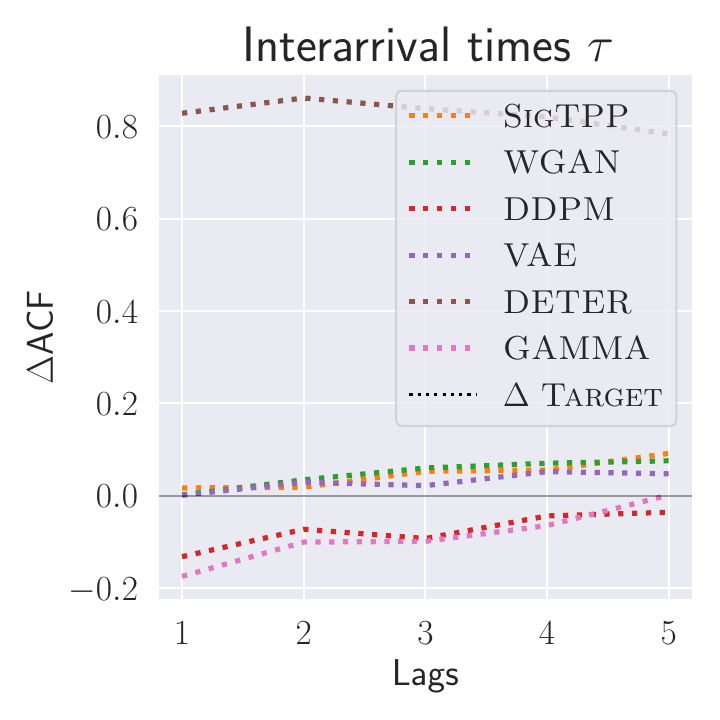}
        \par\vspace{-4pt}
        \makebox[\textwidth][c]{\small(e)}
    \end{subfigure}
    \caption{Qualitative evaluation on Yelp. The top row shows \sigTPP, WGAN, and DDPM; the bottom row shows VAE, \determodel, and \gammamodel.
    (a) Histogram estimates of the interarrival times density $p(x)$ on a logarithmic scale; the leftmost panel shows the target distribution, the others the distributions of samples generated by each model.
    (b) Quantile-quantile (QQ) plots. Each subplot shows empirical quantiles of generated samples ($x$-axis) against the reference quantiles of the data ($y$-axis); the red dashed line is the identity $y=x$.
    (c) Empirical intensity functions $\hat{\lambda}(t)$ estimated from the same sequences.
    (d) Absolute Pearson correlation-matrix differences for the first 20 indices. Each heatmap entry shows the absolute difference between the empirical correlation of two sequence components in the generated samples and in the reference data.
    (e) Absolute autocorrelation error for $5$ lags. Each curve shows the difference between the ACF of the indicated model and the empirical ACF of the reference process, so that the target corresponds to the horizontal line at zero.}
    \label{fig::yelp_qualitative_summary}
\end{figure}

\clearpage
\section{Limitations}
\label{sect::limitations}
A primary limitation of our approach is the computational cost of the truncated signature transform, whose feature dimension scales exponentially with the truncation depth $M$ and polynomially with the path dimension $d$. This may hinder direct extension to high-dimensional or marked TPPs. 
Second, our current embedding relies on a linear interpolation lift. While effective in practice, this lift introduces an artificial continuous transition at jump times and is therefore not fully faithful to the jump structure of the process. 
Third, our theoretical analysis relies on a determinacy property of the pushforward law of the temporal point process under the embedding, but such determinacy conditions are already difficult to verify in practice even for continuous processes, and we do not investigate them further here.
Finally, our theoretical bounds are preliminary and likely conservative: the constants are loose, and a sharper characterisation of the embedding remains open. A better understanding of how the required truncation degree depends on the nature of the temporal point process, together with the relationship between moment conditions and the law of the process, are promising directions for future work.
\end{document}